\theoremstyle{plain}
\newtheorem{theorem}{Theorem}
\newtheorem{lemma}[theorem]{Lemma}
\newtheorem{cond}{Condition}
\newtheorem{cor}[theorem]{Corollary}
\theoremstyle{definition}
\newtheorem{defn}[theorem]{Definition}
\theoremstyle{remark}
\newtheoremstyle{cited}%
  {3pt}
  {3pt}
  {\itshape}
  {}
  {\bfseries}
  {.}
  {.5em}
  {\thmname{#1} \thmnumber{#2} \thmnote{\normalfont#3}}
\theoremstyle{cited}
\DeclareMathOperator{\dist}{dist}
\renewcommand{\ell}{l}
\newcommand{\bbm}{\begin{bmatrix}}
\newcommand{\ebm}{\end{bmatrix}}
\newcommand{\KL}[2]{D_{\mathrm{KL}}\left(#1 \parallel #2\right)}
\newcommand{\child}{\mathcal{C}}
\newcommand{\neighbor}{\mathcal{N}}
\newcommand{\leftside}{\mathcal{L}}
\newcommand{\rightside}{\mathcal{R}}
\newcommand{\parent}{\mathcal{P}}
\newcommand{\hp}{\hat{p}}
\newcommand{\PTS}{\hat p_{\mathrm{TS}}}
\newcommand{\bR}{\mathbb{R}}
\DeclareFontFamily{U}{matha}{\hyphenchar\font45}
\DeclareFontShape{U}{matha}{m}{n}{
      <5> <6> <7> <8> <9> <10> gen * matha
      <10.95> matha10 <12> <14.4> <17.28> <20.74> <24.88> matha12
      }{}
\DeclareSymbolFont{matha}{U}{matha}{m}{n}
\DeclareFontFamily{U}{mathx}{\hyphenchar\font45}
\DeclareFontShape{U}{mathx}{m}{n}{
      <5> <6> <7> <8> <9> <10>
      <10.95> <12> <14.4> <17.28> <20.74> <24.88>
      mathx10
      }{}
\DeclareSymbolFont{mathx}{U}{mathx}{m}{n}
\DeclareMathDelimiter{\vvvert}{0}{matha}{"7E}{mathx}{"17}
\DeclarePairedDelimiterX{\normi}[1]
  {\vvvert}
  {\vvvert}
  {\ifblank{#1}{\:\cdot\:}{#1}}
\title{Generative Modeling via Tree Tensor Network States}
\author[X. Tang]{Xun Tang}
\author[Y. Hur]{YoonHaeng Hur}
\author[Y. Khoo]{Yuehaw Khoo}
\author[L. Ying]{Lexing Ying}
\address{Institute for Computational and Mathematical Engineering, Stanford, CA 94305, USA. }
\email{xuntang@stanford.edu}
\address{Department of Statistics, Chicago, IL 60637, USA. }
\email{yoonhaenghur@uchicago.edu}
\address{Computational and Applied Mathematics Initiative, Department of Statistics, University of Chicago, IL 60637, USA }
\email{ykhoo@uchicago.edu}
\address{Department of Mathematics and Institute for Computational and Mathematical Engineering, Stanford, CA 94305, USA. }
\email{lexing@stanford.edu}
\begin{document}

\maketitle

\begin{abstract}
In this paper, we present a density estimation framework based on tree tensor-network states. The proposed method consists of determining the tree topology with Chow-Liu algorithm, and obtaining a linear system  of equations that defines the tensor-network components via sketching techniques. Novel choices of sketch functions are developed in order to consider graphical models that contain loops. Sample complexity guarantees are provided and further corroborated by numerical experiments.
\end{abstract}
\section{Introduction}

Generative modeling of a probability distribution is one of the most important tasks in machine learning, engineering and science. In a nutshell, the goal of generative modeling is to approximate a high-dimensional distribution without the curse of dimensionality. There are generally several properties one would like to have for a generative model: (1) Can it be stored with a low memory complexity as the dimension grows? (2) Can it be determined from the given input with a low computational complexity? (3) Can it be used to generate samples with a low computational complexity? In this paper, we propose using a tree tensor network states as a generative model that enjoys these properties. 

We focus on the problem of \emph{density estimation}. More precisely, given $N$ independent samples \[
(y_1^{(1)}, \ldots, y_d^{(1)}), \ \ldots \ , (y_1^{(N)}, \ldots , y_d^{(N)}) \sim p^\star
\]
drawn from some ground truth density $p^\star \colon \mathbb{R}^d\rightarrow \mathbb{R}$, our goal is to estimate $p^\star$ from the empirical distribution
\begin{equation}
    \hp(x_1,\ldots,x_d) = \frac{1}{N} \sum_{i = 1}^{N} \delta_{(y_1^{(i)}, \ldots, y_d^{(i)})}(x_1,\ldots,x_d),
\end{equation}
where $\delta_{(y_1,\ldots,y_d)}$ is the $\delta$-measure supported on $(y_1,\ldots,y_d) \in \mathbb{R}^d$. It is hard to give a comprehensive survey of the broad field of density estimation, for this we refer readers to \cite{silverman2018density}. Here we review several popular generative models that are related to our work. Energy based model \cites{hinton2002training, lecun2006tutorial, song2019generative} such as graphical models represents a density by parameterizing it as the Gibbs measure of some energy function. Mixture models approximate the distribution via a composition of  simple distributions. On the other hand, deep learning methods based on generative adversarial networks \cite{goodfellow2014generative}, variational auto-encoders \cite{kingma2013auto} and normalizing flows \cites{tabak2010density,rezende2015variational} have gained tremendous popularity recently. Generally, obtaining the parameter of these parameterizations in a density estimation setting involves solving optimization problems that are often non-convex. Therefore frequently theoretical consistency guarantees of a density estimator cannot be achieved in practice. Furthermore, generating new samples from optimized model could be difficult (for example in energy based model) and requires running a Markov Chain Monte-Carlo.

Very recently, tensor-network methods, in particular matrix product state/tensor train, have emerged as an alternative paradigm for generative modeling \cites{glasser2019expressive, 2202.11788, han2018unsupervised, bradley2020modeling}. Such methods represent the exponentially size $d$-dimensional tensor as a network of $d$ small tensors, achieving polynomial storage complexity in $d$. Moreover, for networks that can be contracted easily (e.g. tensor train), there exists an efficient strategy based on conditional distribution method to generate independently and identically distributed samples \cite{dolgov2020approximation}. The question is then whether one can determine the underlying tensor-network efficiently for the task of density estimation. In \cites{glasser2019expressive, han2018unsupervised, bradley2020modeling}, non-convex optimization approaches are applied to determine the tensor cores. Unlike these previous approaches, in \cite{2202.11788}, sketching is used to set up a set of parallel core determining system of equations to determine the tensor cores of a tensor train without the use of optimization. We propose several extensions that generalizes this work in terms of its practicality and reach. 

We emphasize that there is another line of works in tensor literature that constructs low-rank tensor representations from sensing the entries of an order $d$-tensor. These includes matrix completion \cite{candes2010matrix} and its generalizations to tensor completion problem (e.g. \cites{khoo2017efficient, gandy2011tensor, richard2014statistical}). Furthermore, cross-approximations \cite{oseledets2011tensor} has been applied to the cases where one gets to choose the sensing pattern. The input data considered in these works are partial observation of the entries of the $d$-dimensional function, which is different from the case of density estimation, where empirical samples of the underlying distribution is given. 

We now give a discussion which compares our method with other existing generative modelling methods. In Section \ref{sec: tt to ttns}, we compare TTNS with other potential tensor network architectures for generative modelling. In Section \ref{sec: chow-liu to ttns}, we discuss the connection between TTNS and tree-based graphical models. In Section \ref{sec: iterative method}, we give a discussion on tensor network generative models which are based on iterative training.

\subsection{Extending tensor train to tree-based tensor networks}\label{sec: tt to ttns}
For the case where the underlying density \(p^\star\) has a tensor train (TT) format, an algorithm termed Tensor Train via Recursive Sketching (TT-RS) \cite{2202.11788} has been introduced. In this paper, we start from the more general model assumption that \(p^\star\) is in a Tree Tensor Network States (TTNS) format \cite{nakatani2013efficient}, which is more suitable when the random variables have a natural graph structure that is either a tree or locally tree-like.
In terms of representation power, there is a natural hierarchy \[\mathrm{TT} \subset \mathrm{TTNS} \subset \mathrm{TNS},\]
where TNS stands for Tensor Network States, which includes more general models such as projected entangled-pair states (PEPS) \cite{verstraete2006criticality}. The generalization from TT to TTNS is an extension from a path-based tensor network to a tree-based tensor network. TTNS is a generalized model which still allows for efficient and scalable tensor contraction, a task in general intractable for the TNS ansatz. See Figure \ref{fig:tree+ttns} for an illustration of a tree and its corresponding TTNS tensor diagram.


In the current literature, there is also another notion of Tree Tensor Network (TTN) \cite{cheng2019tree}, which has been used for the task of generative modeling. In \cite{cheng2019tree}, the TTN model is obtained by solving a non-convex optimization problem, which is similar to \cite{han2018unsupervised}. In contrast, our algorithm allows the tensor components of a TTNS to be directly solved, which leads to sample complexity guarantees that are hard to obtain for iterative methods. Despite similarities, the same technique cannot easily extend to TTN. The reason is due to the structural difference between TTN and TTNS, which is that the latter does not involve internal nodes, i.e. each tensor component in a TTNS ansatz has exactly one physical index, as shown in Figure \ref{fig:tree+ttns}. In conclusion, TTN and TTNS are under quite different model assumptions for the underlying density \(p^{\star}\), and model inference of the two models involve quite different numerical tools.

\subsection{Extending model inference of tree-based graphical models to tree-based tensor networks}\label{sec: chow-liu to ttns}

Model inference for distributions with a TTNS ansatz is deeply related to the model inference problem of tree-based graphical model. The Chow-Liu algorithm \cite{chow1968approximating} efficiently compresses a target density to the best tree-based graphical model in the sense of Maximum Likelihood Estimation (MLE).
In terms of representational power, if the underlying distribution \(p^\star\) is a tree-based graphical model, then \(p^\star\) is guaranteed to have a tractable TTNS representation. On the other hand, the extra representation power of TTNS over tree-based graphical models allows the model to account for longer range interactions between variables. To the best of our knowledge, this paper is the first instance where model inference of TTNS ansatz has been implemented.

After the model inference step, in terms of downstream tasks such as likelihood computation and sampling, both TTNS and tree-based graphical models scales linearly in the dimension \(d\). Importantly, the samples produced from the TTNS ansatz have no auto-correlations and are i.i.d., which is a desirable property for generative modeling.

We call our main method Tree Tensor Network State via Sketching (TTNS-Sketch). 
By the sketching technique \cite{woodruff2014sketching}, the tensor components of the ansatz can be computed entirely with conventional linear algebraic equations, which results in a sample complexity which is quadratic in the dimension \(d\). In terms of computational complexity scaling, the cost is linear in the sample size \(N\), and at most quadratic in \(d\). The method is proven to be a consistent estimator under reasonable technical assumptions.

\subsection{Comparison between TTNS-Sketch and iterative algorithms}\label{sec: iterative method}


In contrast to a direct method such as TTNS-Sketch, iterative algorithm for tensor network methods (e.g. \cites{han2018unsupervised, glasser2019expressive,cheng2019tree}) typically optimizes for the tensor components in the sense of minimal negative log likelihood, with the the maximum likelihood estimation (MLE) estimator as the optimizer. Despite the well-known Cramer-Rao bound for MLE estimators, the training in such iterative methods is non-convex, which prevents one from establishing theoretical guarantees, e.g. consistency and sample complexity. For example, \cite{mcclean2018barren} identifies the issue of vanishing gradient in randomly initialized quantum circuits for large qubit size. Due to the representational equivalence between tensor train and quantum circuits (see \cite{glasser2019expressive} for a discussion), the same issue also faces randomly initialized tensor train. This observation is also corroborated by numerical experiment in Section \ref{sec: numerical result}, where we show training failure of iterative methods under a setting far more modest than discussed in \cite{mcclean2018barren}.

\subsection{Main contribution}\label{sec: main contribution}
We list our main contribution as follows:
\begin{enumerate}
    \item We provide a simple notational system that can work well for arbitrary tree structures for TTNS ansatz. This structural flexibility is helpful for samples with an underlying tree structure but no practical path structure.
    \item We introduce perturbative sketching, motivated by randomized SVD \cite{lin2011fast}. We show that TTNS-Sketch with perturbative sketching performs well for models with short-range non-local interactions, thus exhibiting significant improvement over the graphical model given by Chow-Liu, which is only suitable for tree-based graphical models.
    \item We derive a general upper bound on sample complexity of TTNS-Sketch. Based on the Wedin theorem and Matrix Bernstein inequality, we obtain a non-asymptotic sample complexity upper bound for TTNS-Sketch under recursive sketching functions. Up to log factors and condition numbers, the sample complexity of the method scales by \(N = O\left(\Delta(T)^2d^2\right)\), where \(d\) is the number of nodes in the tree structure \(T\), and \(\Delta(T)\) stands for the maximal degree of \(T\). This shows that TTNS-Sketch converges reasonably fast to the true model.
    \item We also identify a failure mode for iterative generative modeling methods based on the tensor train ansatz. For Born Machine (BM) \cite{han2018unsupervised} under periodic spin system, we show that the training will fail unless one significantly increases the internal bond dimension. When the tensor train is of a correctly-sized internal bond dimension, the model learned by BM closely resembles that of a non-periodic spin system. In comparison, with a simple high-order Markov sketching function, TTNS-Sketch is successful at converging to \(p^{\star}\) without over-parameterization. See Section \ref{sec: numerical result} for detail. \label{enumerate: BM}

\end{enumerate}




\subsection{Outline}
The outline for the rest of this paper is as follows. Section \ref{sec: ttns ansatz} is an introduction to notations and to the basics of TTNS ansatz. Section \ref{sec: ttns-sketch full explanation} derives the essential linear equation to be used for TTNS-Sketch. Section \ref{sec: topology finding} introduces the Chow-Liu algorithm for finding a tree structure using samples. Section \ref{sec: main algorithm} provides the main Algorithm and the condition for TTNS-Sketch to be a consistent estimator. Section \ref{sec: sketch functions} gives examples of sketch functions. Section \ref{sec:sample-comlexity} provides a sample complexity upper bound. Section \ref{sec: numerical result} gives the numerical result.

\section{Introduction to TTNS}\label{sec: ttns ansatz}


The aim in this section is to introduce the notation to describe a function with a TTNS ansatz. We first introduce some important notations frequently used. The letter \(d\) is reserved for the dimension of the joint distribution of interest, \(N\) is reserved for sample size, and \(T\) without subscript is reserved for a tree graph. For any integer \(q \in \mathbb{N}\), set \([q]  :=  \{1, \ldots, q\}\). 

For the TTNS ansatz, we use specific letters to label its indices. The letter \(x\) is reserved for the physical index (external bond) of the tensor core, and the letters \(\alpha, \beta, \gamma\) are reserved for the internal bond of the tensor core.

Crucial equations are illustrated with a tensor diagram representation for reader's convenience. To provide a concrete example, all tensor diagrams are plotted based on the specific tree structure set in Figure \ref{fig:tree+ttns}a. 
\begin{figure}
  \centering
  \begin{subfigure}[A]{\textwidth}
  	  \centering
  	\includegraphics[width=0.48\textwidth]{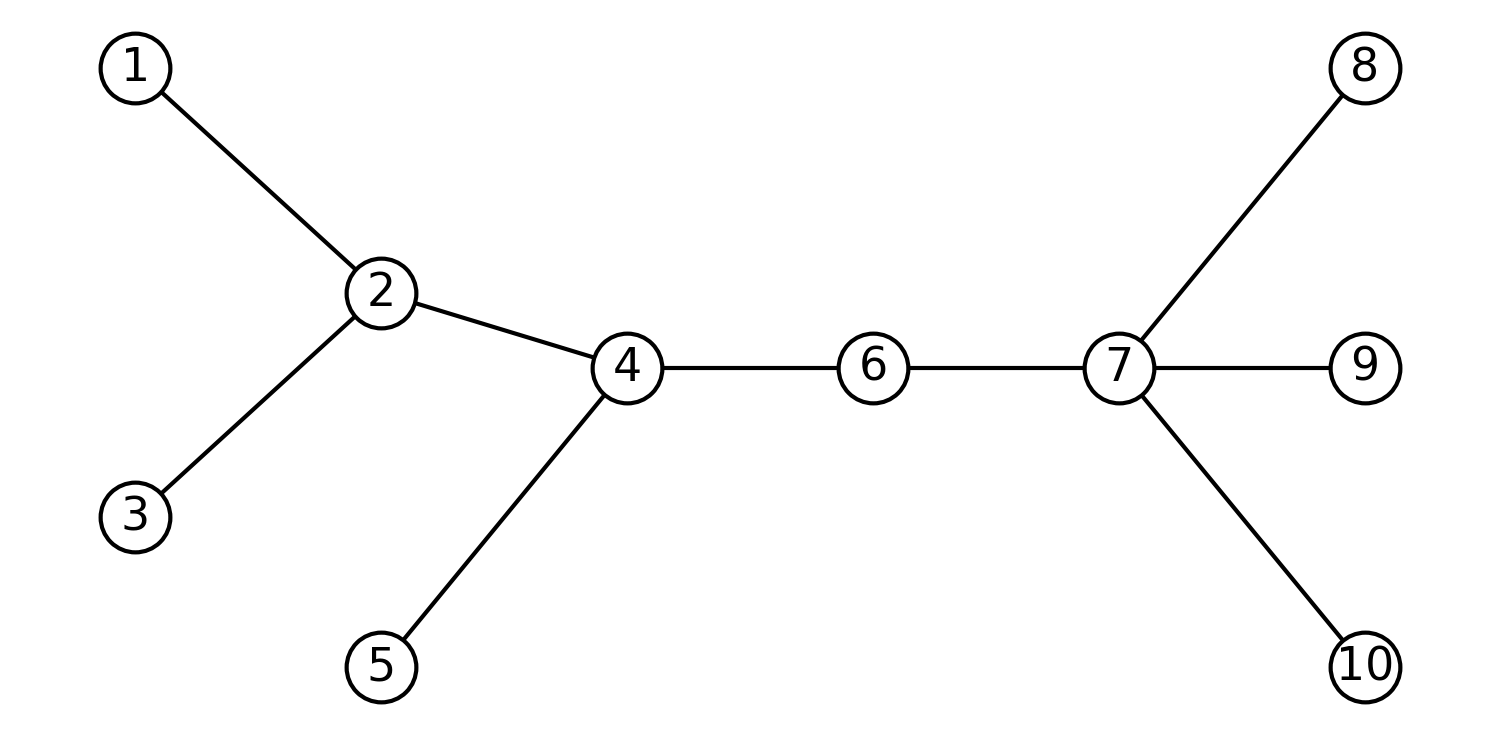}
  \end{subfigure}
  \hspace{12pt}
  \begin{subfigure}[B]{\textwidth}
  	  \centering
  	\includegraphics[width=0.48\textwidth]{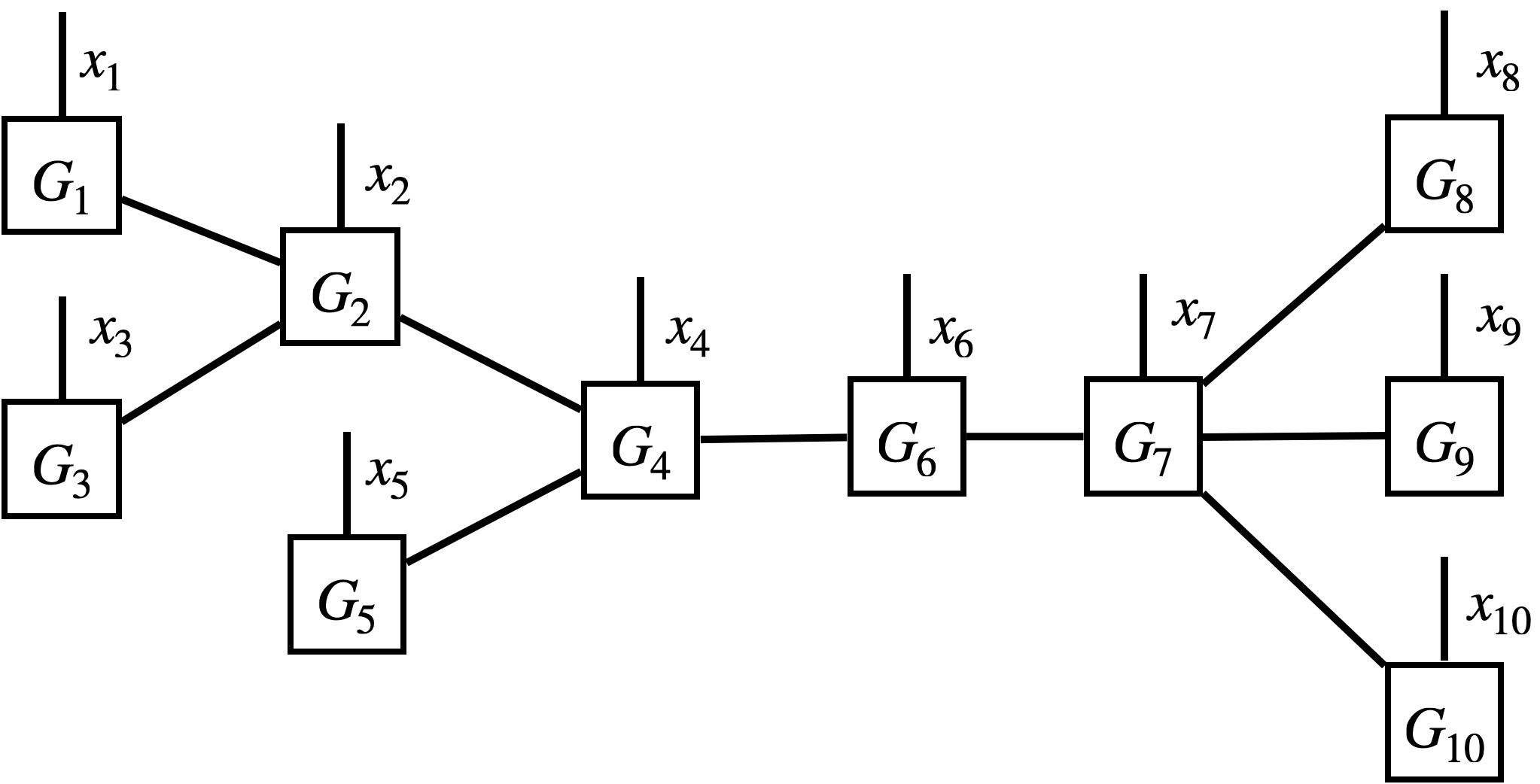}
  \end{subfigure}
 \caption{(A) A tree structure \(T = (V, E)\) with \(V = \{1, \ldots, 10\}\). (B) Tensor Diagram representation of TTNS over \(T\).}
 \label{fig:tree+ttns}
\end{figure}

\subsection{Notation for distribution}

First we introduce notations related to probabilistic distributions. 

\begin{defn}\label{def: distribution notation} (Probability distribution notation)
    Fix a generic joint distribution \(p\) on \(d\) variables. We use \(X  :=  \left(X_{1}, \ldots, X_{d}\right)\), where \(X \sim p\), to denote a random vector in \(d\) dimension. Each \(X_{k}\) is assumed to be a discrete random variable over \(\{1, \ldots, n_{k}\}\). Set \(n  :=  \max_{k \in [d]}n_k\).
\end{defn}

\begin{defn}\label{def: ttns distribution notation} (Probability distribution for TTNS-Sketch)
    Suppose the distribution of interest is for the random vector \(X  :=  \left(X_{1}, \ldots, X_{d}\right)\) in \(d\) dimension, and moreover suppose one is given samples of \(X\). The symbol \(\hp\) denotes the \emph{empirical distribution} over samples of \(X\).
    The symbol \(p^{\star}\) denotes the \emph{underlying distribution} of \(X\).
\end{defn}
  
For this paper, we only consider discrete variables. Hence a distribution function such as \(p^{\star}\) can be considered as a \(d\)-dimensional tensor.

\subsection{Notation for tree structure}

Next we introduce notations for a tree graph. A tree graph \(T = (V,E)\) is a connected undirected graph without cycles. Throughout this paper, $V = [d]$. Moreover \(T\) is specified with a root node, and vertices will have a partial topological ordering generated by the child-parent relationship. For an undirected edge \(\{w,k\}\) in \(T\), we write it interchangeably as $(w, k)$ or $(k, w)$. If $k$ is the parent of $w$, we also write \(\{w,k\}\) as $w \to k$ with the aim of signaling the child-parent hierarchy. 

The following Definition \ref{defn: tree topology information} contains the notation for the graph-theoretic concept one needs to define a TTNS. See Figure \ref{fig:tree_topology_notation_infograph} for an illustration.
\begin{defn}\label{defn: tree topology information}
(Tree topology notation)
For a rooted tree structure \(T\) with nodes $V = [d]$ and any \(k \in [d]\), define \(\child(k), \parent(k), \neighbor(k)\) respectively as the children, parent, and neighbors of \(k\). In particular, one has \(|\parent(k)| \le 1\) and \(\neighbor(k) = \child(k) \cup \parent(k)\). Moreover, define \(\mathcal{E}(k)\) as the set of edges incident to \(k\). Define \(\leftside(k), \rightside(k)\) respectively as the descendant, non-descendant of node \(k\) in \(T\). In particular, \(\leftside(k)\) and \(\rightside(k)\) are respectively called the \emph{left} and the \emph{right} of node \(k\).

\end{defn}

Importantly, in Definition \ref{def: compact shorthand}, we introduce several short-hands in order to write joint variables compactly.
\begin{defn}\label{def: compact shorthand}
    (Joint variable notation)
    For variables indexed by nodes on \(T\), we write $x_\mathcal{S}$ to denote the joint variable $(x_{i_1},\ldots,x_{i_k})$, where $\mathcal{S} = \{i_1 ,\ldots , i_k \} \subset V$. 
    
    Likewise, for variables indexed by edges on \(T\), we write \(\alpha_{\mathcal{U}}\) to denote the joint index $(\alpha_{e_{i_{1}}},\ldots,\alpha_{e_{i_{k}}})$, where \(\mathcal{U} = \{e_{i_{1}}, \ldots, e_{i_{k}}\} \subset E\).
    In particular, \(\mathcal{U} \subset E\) is typically all incident to one node \(w\), and we write $\alpha_{(w, \mathcal{S})}$ to denote the joint variable $(\alpha_{(w, i_1)},\ldots,\alpha_{(w, i_k)})$, where $\mathcal{S} = \{i_1 ,\ldots , i_k \} \subset V$. 
    For compactness, we write \(x_{\mathcal{S} \cup k}  :=  x_{\mathcal{S} \cup \{k\}}\), where the element \(k\) is used in place of the singleton set \(\{k\}\).
\end{defn}

Frequently used symbols include \(x_{\leftside(k)}\), \(x_{\rightside(k)}\), \(x_{\child(k)}\), which respectively denote the joint variable corresponding to the left, the right, and the children of \(k\). Moreover, we use \(x_{\leftside(k)\cup k}\) to denote the joint variable corresponding to nodes which are not on the right-side of \(k\). For edge-indexed variables, we use \(\alpha_{(k, \child(k))}\) to denote the joint variables corresponding to the edges between \(k\) and its children.

\begin{figure}
    \centering
    \includegraphics[width=0.48\textwidth]{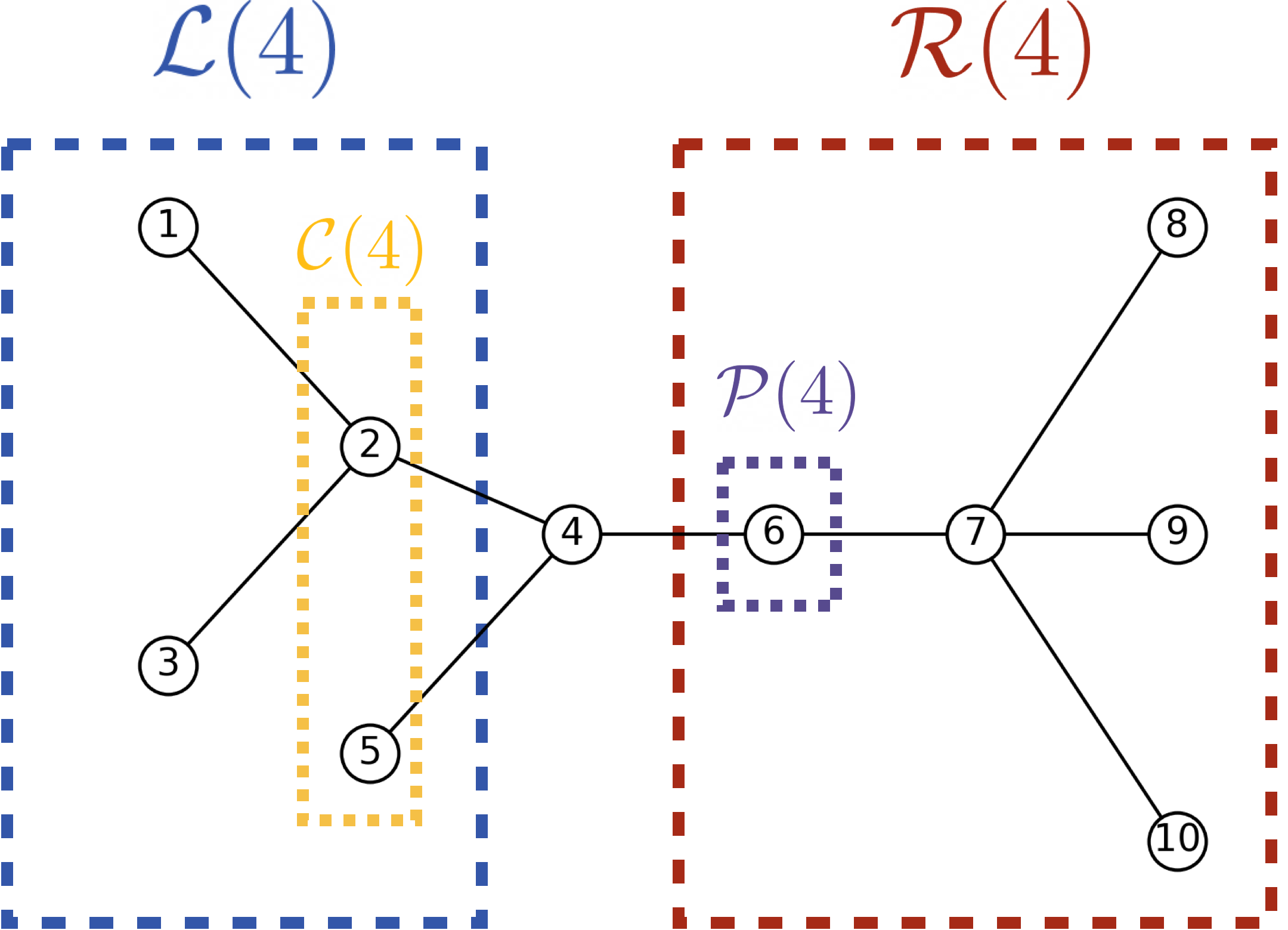}
    \caption{Illustration of tree topology notation. For the tree in Figure \ref{fig:tree+ttns}a, if 7 is the root, then $\child(4) = \{2, 5\}$, $\parent(4) = 6$, $\leftside(4) = \{1, 2, 3, 5\}$, and $\rightside(4) = \{6, 7, 8, 9, 10\}$. In this graph, one also has $\neighbor(4) = \{2, 5, 6\}$ and $\mathcal{E}(4) = \{(4, 2), (4, 5), (4, 6)\}$.}
    \label{fig:tree_topology_notation_infograph}
\end{figure}

\subsection{Notation for TTNS ansatz}
We introduce condition and notation for a generic tensor with TTNS ansatz. We will prove that having a TTNS ansatz is equivalent to satisfying the TTNS condition, i.e. having a low rank factorization structure along a tree. See Figure \ref{fig:ttns_condition_illustration} for an illustration.
\begin{cond}(TTNS condition)\label{cond: TTNS ansatz condition}
    Let \(T = (V, E)\) be a rooted tree graph, and let \(\{r_{e}\}_{e \in E}\) be a collection of positive integers, where \(r_{(w,k)}\) denotes internal bond rank at the edge \((w, k)\). 
    
    A function \(p \colon \prod_{k = 1}^{d}[n_{k}] \rightarrow \bR\) is said to satisfy the TTNS ansatz condition if for every edge \((w,k) \in E\), there exists a rank \(r_{(w,k)}\) decomposition $\Phi_{w \to k} \colon  \prod_{i \in \leftside(w) \cup w}  [n_i] \times  [r_{(w, k)}] \to \bR$ and $\Psi_{w \to k} \colon [r_{(w, k)}] \times \prod_{i \in \rightside(w)} [n_i] \to \bR$ such that 
    \begin{equation}
    \label{eqn: TTNS ansatz condition}
        p(x_{1}, \ldots, x_{d}) = \sum_{\alpha_{(w, k)} = 1}^{r_{(w,k)}} \Phi_{w \to k}(x_{\leftside(w) \cup w}, \alpha_{(w, k)}) \Psi_{w \to k}(\alpha_{(w, k)}, x_{\rightside(w)}).
    \end{equation}
\end{cond}
    
\begin{figure}
    \centering
    \includegraphics[width=0.38\textwidth]{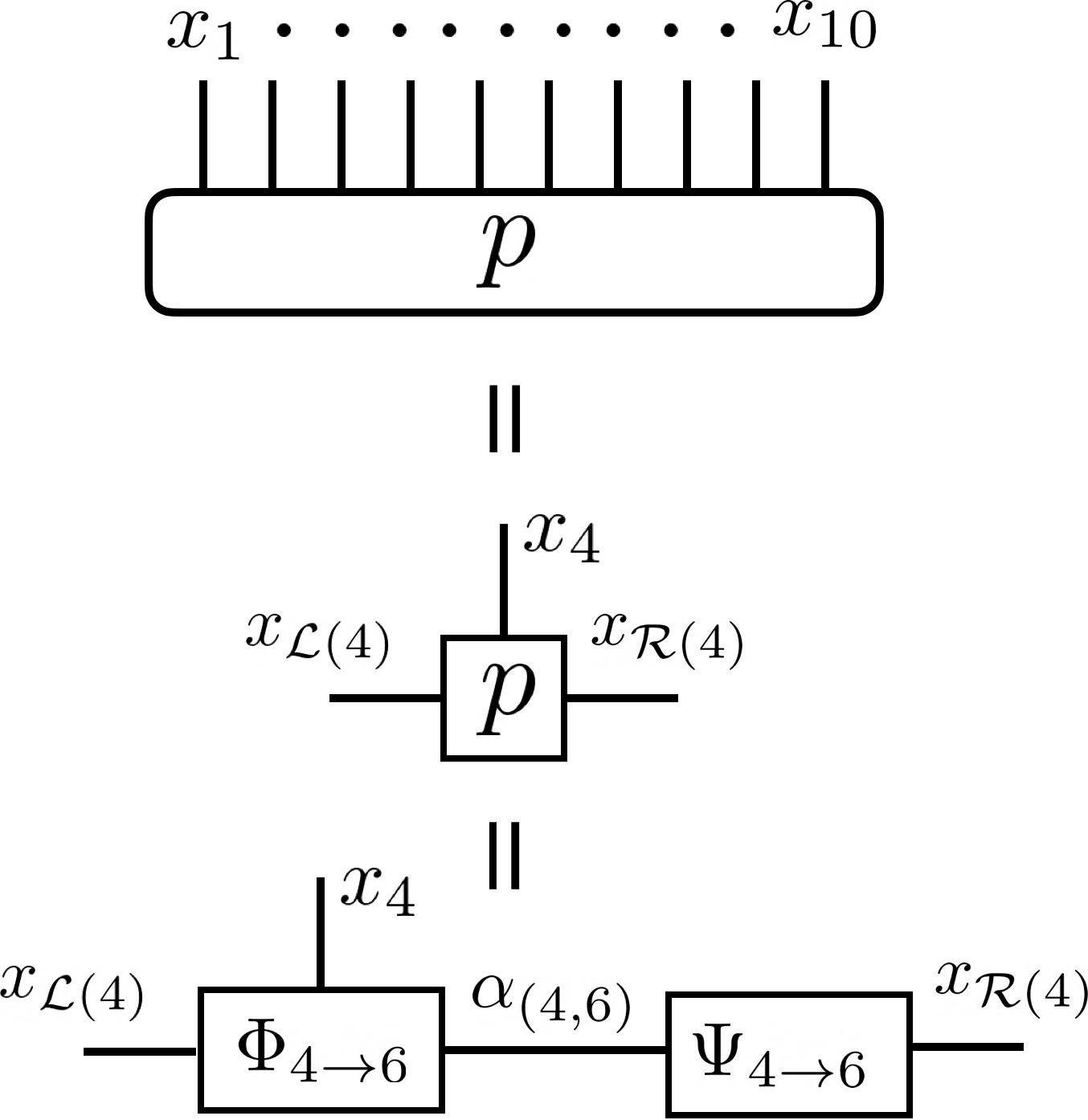}
    \caption{Tensor diagram representation of the TTNS ansatz assumption in Condition \ref{cond: TTNS ansatz condition}. The illustration is over the rooted tree in Figure \ref{fig:tree_topology_notation_infograph} with \((w,k) = (4,6)\). The symbol \(x_{\leftside(4)} = (x_{1}, x_{2}, x_{3}, x_{5})\) and \(x_{\rightside(4)} = (x_{6}, x_{7}, \ldots, x_{10})\) is a short-hand for joint variables as defined in Definition \ref{def: compact shorthand}.}
    \label{fig:ttns_condition_illustration}
\end{figure}

More explicitly, a TTNS ansatz can be defined in terms of tensor cores. Definition \ref{def: TTNS short def} shows a construction in terms of tensor cores. For illustration, see Figure \ref{fig:tree+ttns}b.
\begin{defn}\label{def: TTNS short def}
    (TTNS tensor core and TTNS ansatz notation)
    Given a tree structure $T = ([d], E)$ and corresponding ranks $\{r_e : e \in E\}$.
    The \emph{TTNS tensor core} at \(k\) is denoted by \(G_{k}\). Let \(d_{k}\) stand for the degree of \(k\) in \(T\), and then \(G_{k}\) is defined as an \((d_{k} + 1)\)-tensor of the following shape:
    \begin{equation*}\label{eqn: core size constraint}
        G_{k} : [n_{k}] \times \prod_{e \in \mathcal{E}(k)}  [r_e] \rightarrow \bR.
    \end{equation*}
    
    We say that a function $p$ admits a TTNS ansatz over tensor cores $\{G_i\}_{i=1}^{d}$ over \(k = 1,\ldots,d\) if
    \begin{equation}
        \label{eq:ttn-contraction}
        p(x_1, \ldots, x_d) = \sum_{\substack{\alpha_e \in [r_{e}] \\ e \in E }} \left(\prod_{k = 1}^{d} G_k\left(x_k, \alpha_{\left(k, \neighbor(k)\right)}\right)\right).
    \end{equation}

\end{defn}
For an example, when \(T = (V, E)\) is a chain with \(E = \{(k, k+1)\}_{k = 1}^{d-1}\), a TTNS ansatz is a tensor train ansatz. 
In Figure \ref{fig:tree+ttns}a, we show a tree structure over \(10\) vertices, and the corresponding tensor diagram for TTNS is given in Figure \ref{fig:tree+ttns}b. For instance, $G_4 \colon [n_4] \times [r_{(4, 2)}] \times [r_{(4, 5)}] \times [r_{(4, 6)}] \to \bR$ in Figure \ref{fig:tree+ttns}b, and the tensor network defines a \(d\)-dimensional function after the contraction of internal bonds.

Importantly, when working with high-dimensional functions, it is often convenient to group the variables into two subsets and think of the resulting object as a matrix. The notion is referred to as unfolding matrix, and defined as follows: 
\begin{defn}
    \label{def: unfolding matrix}
    (Unfolding matrix notation)
    For a generic \(D\)-dimensional tensor $f \colon [n_1] \times \cdots \times [n_D] \to \bR$ and for two disjoint subsets \(\mathcal{U}, \mathcal{V}\) with $\mathcal{U} \cup \mathcal{V} = [D]$, we define the corresponding \emph{unfolding matrix} by $f(x_{\mathcal{U}}; x_{\mathcal{V}})$. Namely, group the variables indexed by \(\mathcal{U}\) and the ones indexed by \(\mathcal{V}\) to form rows and columns, respectively. The matrix $f(x_{\mathcal{U}}; x_{\mathcal{V}})$ is of size \(\left(\prod_{i \in \mathcal{U}}n_{i}\right) \times \left(\prod_{j \in \mathcal{V}}n_{j}\right)\).
\end{defn}
As an example, for a function \(p\) satisfying TTNS assumption in Condition \ref{cond: TTNS ansatz condition}, define the unfolding matrix of \(p\) at the edge $w \to k \in E$ as $p(x_{\leftside(w) \cup w}; x_{\rightside(w)})$, which is of size $\left(\prod_{i \in \leftside(w) \cup w} n_i\right) \times \left(\prod_{j \in \rightside(w)} n_j\right)$. Viewed in this context, Condition \ref{cond: TTNS ansatz condition} exactly means that the unfolding matrix of \(p\) at any edge $(w,k) \in E$ is a matrix of rank $r_{(w,k)}$.

\subsection{Equation for TTNS ansatz}

We now show that Condition \ref{cond: TTNS ansatz condition} implies the existence of a TTNS ansatz in the sense of Definition \ref{def: TTNS short def}. With the information for every \(\Phi_{w \to k}\) in Condition \ref{cond: TTNS ansatz condition}, there exists an equation for obtaining the TTNS tensor cores exactly. We summarize this result in Theorem \ref{thm:ttns-existence}, which shows that one can obtain cores of a TTNS by solving a recursive system of linear equations. See Figure \ref{fig:full_CDE_illustration} for an illustration.
\begin{theorem} 
    \label{thm:ttns-existence}
    Suppose Condition \ref{cond: TTNS ansatz condition} holds for a rooted tree structure $T = ([d], E)$ and bond information $\{r_{e}\}_{e \in E}$. For non-leaf $k$, define
    \begin{equation*}
        \Phi_{\child(k) \to k} = \bigotimes_{w \in \child(k)} \Phi_{w \to k},
    \end{equation*}
    and in terms of entries one has $\Phi_{\child(k) \to k} \colon \prod_{w \in \leftside(k)} [n_w] \times \prod_{w \in \child(k)} [r_{(w, k)}] \to \bR$, and
    \begin{equation}
    \label{eqn: phi C_k to k}
        \Phi_{\child(k) \to k}(x_{\leftside(k)}, \alpha_{(k, \child(k))}) = \prod_{w \in \child(k)}\Phi_{w \to k}(x_{\leftside(w)}, \alpha_{(k, w)}).
    \end{equation}
    
    Then $G_k \colon [n_k] \times \prod_{w \in \neighbor(k)} [r_{(w, k)}] \to \bR$ satisfies the following linear Core Determining Equations (CDE) for $k = 1, \ldots, d$:
    \begin{equation}
    \label{eq:ttn-determining}
        \begin{aligned}
            G_k & = \Phi_{k \to \parent(k)} \quad \text{if} ~ k ~ \text{is a leaf}, \\
            \sum_{\alpha_{(k, \child(k))}} \Phi_{\child(k) \to k}(x_{\leftside(k)}, \alpha_{(k, \child(k))}) G_k(x_k, \alpha_{(k, \neighbor(k))}) & = p(x_1, \ldots, x_d) \quad \text{if} ~ k ~ \text{is the root}, \\
            \sum_{\alpha_{(k, \child(k))}} \Phi_{\child(k) \to k}(x_{\leftside(k)}, \alpha_{(k, \child(k))}) G_k(x_k, \alpha_{(k, \neighbor(k))}) & = \Phi_{k \to \parent(k)}(x_{\leftside(k) \cup k}, \alpha_{(k, \parent(k))}) \quad \text{otherwise}.
        \end{aligned}
    \end{equation}
    
    Then, each equation of \eqref{eq:ttn-determining} has a unique solution, and \(p\) has a TTNS ansatz over the cores $\{G_i\}_{i=1}^{d}$ in the sense of Definition \ref{def: TTNS short def}.
\end{theorem}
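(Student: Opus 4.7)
The plan is to prove uniqueness of each CDE first, and then verify the contraction identity by induction on the tree, proceeding from the leaves up to the root.

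For uniqueness, I would first argue that without loss of generality each factor $\Phi_{w\to k}$ from Condition \ref{cond: TTNS ansatz condition} can be assumed to have full column rank in its unfolding $(x_{\leftside(w)\cup w};\alpha_{(w,k)})$. This is because the decomposition $p = \Phi_{w\to k}\Psi_{w\to k}$ is a rank-$r_{(w,k)}$ factorization of the unfolding matrix, so we may replace $\Phi_{w\to k}$ by the column space basis of the unfolding (e.g.\ via compact SVD) and absorb the change into $\Psi_{w\to k}$. Since $\Phi_{\child(k)\to k}$ is the tensor (Kronecker) product of the $\Phi_{w\to k}$ over $w\in\child(k)$, viewed as a matrix with rows indexed by $x_{\leftside(k)}$ and columns indexed by $\alpha_{(k,\child(k))}$, it inherits full column rank. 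Thus, rewriting each CDE as a matrix equation of the form $\Phi_{\child(k)\to k}\cdot M = R$, where $M$ collects the entries of $G_k$ with rows $\alpha_{(k,\child(k))}$ and columns $(x_k,\alpha_{(k,\parent(k))})$ (or just $x_k$ at the root), left-invertibility of $\Phi_{\child(k)\to k}$ forces a unique $G_k$.

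For recovery of $p$, I would prove by induction over the tree the following claim: for every non-root node $k$, contracting the cores $\{G_j\}_{j\in \leftside(k)\cup k}$ over all internal bonds strictly inside the subtree rooted at $k$ yields exactly $\Phi_{k\to\parent(k)}(x_{\leftside(k)\cup k},\alpha_{(k,\parent(k))})$. The base case is a leaf, where the first line of \eqref{eq:ttn-determining} gives $G_k = \Phi_{k\to\parent(k)}$ directly. For the inductive step at a non-leaf non-root $k$, the inductive hypothesis applied to each child $w\in\child(k)$ produces $\Phi_{w\to k}$ after contracting the subtree at $w$; multiplying these together (they share no indices, so the product becomes tensor-product and matches \eqref{eqn: phi C_k to k}) yields $\Phi_{\child(k)\to k}$, and then contracting with $G_k$ over $\alpha_{(k,\child(k))}$ is precisely the third line of \eqref{eq:ttn-determining}, which equals $\Phi_{k\to\parent(k)}$. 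Finally, applying the same reasoning at the root (using the second line of \eqref{eq:ttn-determining}) shows that the full contraction \eqref{eq:ttn-contraction} equals $p$, establishing the TTNS ansatz.

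The main obstacle, and the subtle bookkeeping step, is keeping track of which physical indices and which internal bonds are being contracted at each stage of the induction, particularly the observation that $\Phi_{\child(k)\to k}$ is genuinely a \emph{tensor} product of the $\Phi_{w\to k}$'s rather than a matrix product (the $\Phi_{w\to k}$'s share no physical index, since the $\leftside(w)$ sets for distinct children $w$ are disjoint and their union is $\leftside(k)$). Once this is carefully written in the unfolding-matrix language of Definition \ref{def: unfolding matrix}, both the full-column-rank argument for uniqueness and the inductive contraction identity become clean matrix calculations; the rest of the proof reduces to checking that the three cases (leaf, interior, root) of \eqref{eq:ttn-determining} match, in that order, the base case, the inductive step, and the final root-level reconstruction of $p$.
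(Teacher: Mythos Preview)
Your approach is essentially the same as the paper's: full column rank of the Kronecker product $\Phi_{\child(k)\to k}$ for uniqueness, and a leaves-to-root induction showing that the subtree contraction reproduces $\Phi_{k\to\parent(k)}$, culminating in the root equation giving back $p$. The bookkeeping point you flag about disjoint subtrees is exactly the one that makes $\Phi_{\child(k)\to k}$ a Kronecker product, and the paper relies on it in the same way.

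There is, however, one genuine gap. Your uniqueness argument only shows that \emph{if} an exact solution to $\Phi_{\child(k)\to k}\,M = R$ exists, it is unique; it does not show that $R$ actually lies in the column space of $\Phi_{\child(k)\to k}$. Your induction then silently uses this: the step ``contracting with $G_k$ over $\alpha_{(k,\child(k))}$ is precisely the third line of \eqref{eq:ttn-determining}, which equals $\Phi_{k\to\parent(k)}$'' is valid only if $G_k$ is an \emph{exact} solution, not merely a least-squares one. So existence must be established separately before the induction can proceed.

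The paper fills this gap with a short projection argument: for each child $w\in\child(k)$, the rank-$r_{(w,k)}$ factorization $p = \Phi_{w\to k}\Psi_{w\to k}$ implies that the projector $P_{w\to k}:=\Phi_{w\to k}\Phi_{w\to k}^\dagger$ (acting on the $x_{\leftside(w)\cup w}$ slot) fixes $p$. These projectors commute because they act on disjoint blocks of variables, so their product $\prod_{w\in\child(k)} P_{w\to k} = \Phi_{\child(k)\to k}\Phi_{\child(k)\to k}^\dagger$ also fixes $p$; consequently both $p$ and (after right-multiplying by $\Psi_{k\to\parent(k)}^\dagger$) $\Phi_{k\to\parent(k)}$ lie in the range of $\Phi_{\child(k)\to k}$, and the pseudoinverse solution $G_k = \Phi_{\child(k)\to k}^\dagger\,\Phi_{k\to\parent(k)}$ is exact. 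Once you add this step, your proof and the paper's are the same.
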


The proof is deferred to the Appendix, but we will give a rough idea on why \(p\) admits a TTNS ansatz over $\{G_i\}_{i=1}^{d}$. Equation \eqref{eq:ttn-determining} for when \(k\) is not root essentially shows that each \(\Phi_{w \to k}\) can be represented by tensor contractions of cores in $\{G_i\}_{i \in \leftside(w)\cup w}$, and the proof is based on simple mathematical induction. From this observation, one can work with Equation \eqref{eq:ttn-determining} for when \(k\) is the root, and replace all of the \(\Phi_{w \to k}\) terms by $\{G_i\}_{i \not = \text{root}}$, and the obtained equation will be exactly \eqref{eq:ttn-contraction} in Definition \ref{def: TTNS short def}.

In summary, Theorem \ref{thm:ttns-existence} shows how Condition \ref{cond: TTNS ansatz condition} leads to existence of a TTNS ansatz, and the our previous remark on the construction of \(\Phi_{w \to k}\) from $\{G_i\}_{i \in \leftside(w)\cup w}$ also shows a TTNS ansatz also leads to Condition \ref{cond: TTNS ansatz condition}. However, from a computational point of view, the linear system \eqref{eq:ttn-determining} in Theorem \ref{thm:ttns-existence} is intractable and we shall address this issue using sketching in the next section.

\section{Main idea of TTNS-Sketch}\label{sec: ttns-sketch full explanation}
In the setting of this section, \(p^{\star}\) admits a TTNS ansatz over \(T\) and \(\{r_{e}\}_{e \in E}\) in the sense of Definition \ref{def: TTNS short def}. We show the derivation of the linear equation which are used to solve for the TTNS tensor cores in TTNS-Sketch. However, obtaining terms in the derived linear system rely on access to \(p^{\star}\), an assumption which we will later relax by sample approximation. To emphasize this point, all of the intermediate terms from this algorithm will be labelled with the upper-index \(\star\) if it assumes access to or is derived from \(p^{\star}\). 




\subsection{Gauge degree of freedom for a TTNS ansatz}\label{sec: gauge equivalence}


By Theorem \ref{thm:ttns-existence}, a function \(p^{\star}\) having a TTNS ansatz is equivalent to the condition that its unfolding matrix along each edge of a tree has a low rank structure. Moreover, the ansatz is non-unique. This notion is typically called the gauge degree of freedom, which we will introduce here.

Let us view \(p^{\star}\) by the unfolding matrix \(p^{\star}(x_{\leftside(w) \cup w}; x_{\rightside(w)})\).
For any edge $w \to k$, the TTNS condition assumes that there exists $\Phi^{\star}_{w \to k} \colon  \prod_{i \in \leftside(w) \cup w}  [n_i] \times [r_{(w, k)}]  \to \bR$ and $\Psi^{\star}_{w \to k} \colon [r_{(w, k)}] \times \prod_{i \in \rightside(w)} [n_i] \to \bR$ such that 
\begin{equation*}
    p^{\star}(x_{\leftside(w) \cup w}, x_{\rightside(w)}) = \sum_{\alpha_{(w, k)}} \Phi^{\star}_{w \to k}(x_{\leftside(w) \cup w}, \alpha_{(w, k)}) \Psi^{\star}_{w \to k}(\alpha_{(w, k)}, x_{\rightside(w)}).
\end{equation*}

One can view \(p^\star\) as the unfolding matrix structure \(p^{\star}(x_{\leftside(w) \cup w}; x_{\rightside(w)})\). Likewise, \(\Phi^{\star}_{w \to k}\) as \(\Phi^{\star}_{w \to k}(x_{\leftside(w) \cup w}; \alpha_{(w, k)})\) and \(\Psi^{\star}_{w \to k}\) as \(\Psi^{\star}_{w \to k}(\alpha_{(w, k)}; x_{\rightside(w)})\). 
Then, by using the usual matrix product notation, the TTNS assumption along the edge \(w \to k\) is \(p^{\star} = \Phi^{\star}_{w \to k}\Psi^{\star}_{w \to k}.\) Then, for any \(R\) being a nonsingular \(r_{(w, k)} \times r_{(w, k)}\) matrix, one has\[p^{\star} = \Phi^{\star}_{w \to k}\Psi^{\star}_{w \to k} = (\Phi^{\star}_{w \to k}R)(R^{-1}\Psi^{\star}_{w \to k}).\]

Given the information of \(\{ \Phi^{\star}_{w \to k}\}_{w \to k \in E}\), solving for the tensor core \(G^{\star}_{k}\) follows from \eqref{eq:ttn-determining} in Theorem \ref{thm:ttns-existence}. Multiplying any \(\Phi^{\star}_{w \to k}\) by a matrix \(R\) will thus result in a different TTNS ansatz for \(p^\star\). In summary, a gauge degree of freedom in the low-rank decomposition of \(p^\star\) leads to a gauge degree of freedom in the TTNS ansatz of \(p^\star\).

The collection \(\{ \Phi^{\star}_{w \to k}, \Psi^{\star}_{w \to k}\}_{w \to k \in E}\) will later be chosen to have an explicit gauge, but currently it suffices to understand gauge as fixed. The desired TTNS ansatz \(\{G^{\star}_{i}\}_{i= 1}^d\) as solution to \eqref{eq:ttn-determining} is also fixed.

\subsection{Sketching down core determining equation}

Without loss of generality, we consider the equation for \(G^{\star}_{k}\) in Theorem \ref{thm:ttns-existence} where \(k\) is neither a root nor a leaf node. We can rewrite the corresponding equation for \(G^{\star}_{k}\) by substituting \(\Phi^{\star}_{\child(k) \rightarrow k}\) according to definition:
\begin{equation}\label{eqn: unsketched equation for core}
\sum_{\substack{\alpha_{(w, k)} \\ w \in \child(k) }} \left(\prod_{w \in \child(k)}\Phi^{\star}_{w \to k}(x_{\leftside(w) \cup w}, \alpha_{(w, k)}) \right) G^{\star}_k(x_k, \alpha_{(k, \child(k))},  \alpha_{(k, \parent(k))})  = \Phi^{\star}_{k \to \parent(k)}(x_{\leftside(k) \cup k}, \alpha_{(k, \parent(k))}),
\end{equation}
which is an over-determined linear system on \(G_{k}^\star\), and the number of linear equations for \(G_{k}^\star\) grows exponentially in \(d\). Hence the above equation is not tractable.

The TTNS-Sketch algorithm applies the sketching operation to \eqref{eqn: unsketched equation for core} and projects tensors of the form \(\Phi_{w \to k}\) in \eqref{eqn: unsketched equation for core} to a tensor of tractable size, which makes the equation tractable. 
In TTNS-Sketch, for each edge \(w \to k\), we define a series of linear projection operators of the form \[S_{w \to k} \colon [l_{(w, k)}] \times \prod_{v \in \leftside(w) \cup w} [n_{v}] \rightarrow \mathbb{R},\]
and they globally form an function which we call the \emph{left-sketch function} \(S_{k}\) of the form
\[S_k \colon \prod_{w \in \child(k)} [l_{(w, k)}] \times \prod_{i \in \leftside(k)} [n_i] \to \bR.\]
The definition of \(S_{k}\) is by the simple formula \(S_{k} = \bigotimes_{w \in \child(k)}S_{w \to k}\), or equivalently 
\begin{equation}
  \label{eqn: S_k splits}
  S_{k}(\beta_{(k,\child(k))}, x_{\leftside(k)}) = \prod_{w \in \child(k)}S_{w \to k}(\beta_{(w,k)}, x_{\leftside(w) \cup w}).
\end{equation}

We remark that the factorization structure of \(S_{k}\) in \eqref{eqn: S_k splits} depends on a simple topological fact on trees, which is that \(\left(\leftside(w) \cup \{w\}\right)_{w \in \child(k)}\) are pairwise disjoint and their union forms \(\leftside(k)\).

Now, we can apply the usual projection to \eqref{eqn: unsketched equation for core} using \(S_{k}\), i.e. multiplying both sides by \(S_{k}\) and summing over \(x_{\leftside(k)}\). For the RHS of \eqref{eqn: unsketched equation for core} after projection, we define a tensor \(B^{\star}_{k}(\beta_{(k,\child(k))}, x_{k}, \alpha_{(k, \parent(k))})\) to represent this result, i.e.
\begin{equation}\label{eqn: def of B_k}
B^{\star}_{k}(\beta_{(k,\child(k))}, x_{k}, \alpha_{(k, \parent(k))})  :=  \sum_{x_{\leftside(k)}} S_{k}(\beta_{(k,\child(k))}, x_{\leftside(k)}) \Phi^{\star}_{k \to \parent(k)}(x_{\leftside(k) \cup k}, \alpha_{(k, \parent(k))}).
\end{equation}

For the LHS of of \eqref{eqn: unsketched equation for core}, we define a tensor \(A^{\star}_{k}(\beta_{(k, \child(k))}, \alpha_{(k, \child(k))})\) to represent the coefficient term for \(G^{\star}_{k}\) under projection:
\begin{equation}\label{eqn: def of A_k}
    A^{\star}_{k}(\beta_{(k, \child(k))}, \alpha_{(k, \child(k))})  :=  \sum_{x_{\leftside(k)}} S_{k}(\beta_{(k,\child(k))}, x_{\leftside(k)}) \prod_{w \in \child(k)}\Phi^{\star}_{w \to k}(x_{\leftside(w) \cup w}, \alpha_{(w, k)})
\end{equation}

The equation \eqref{eqn: unsketched equation for core} then projects to the linear equation:
\begin{equation}
\label{eqn: noiseless equation for G_k^star}
\sum_{\alpha_{(k, \child(k))}}A^{\star}_{k}(\beta_{(k, \child(k))}, \alpha_{(k, \child(k))})  G^{\star}_k(x_k, \alpha_{(k, \child(k))},  \alpha_{(k, \parent(k))})  = B^{\star}_{k}(\beta_{(k,\child(k))}, x_{k}, \alpha_{(k, \parent(k))}),
\end{equation}
which is an equation of the simple form of \(A^{\star}_{k}G^{\star}_{k} = B^{\star}_{k}\) when viewing each tensor by appropriate unfolding matrix structures. This linear equation is illustrated in Figure \ref{fig:sketched_down_CDE_illustration}.

In the sketched down linear system, the number of linear equations for \(G^{\star}_{k}\) no longer scales with \(d\), and one can check that it is tractable. Moreover, due to the factorization structure of \(S_{k}\) using \(S_{w \to k}\), it follows that \(A^\star_{k}\) simplifies to
\begin{equation}\label{eqn: def of A_k simplified}
    A^{\star}_{k}(\beta_{(k, \child(k))}, \alpha_{(k, \child(k))}) = 
     \prod_{w \in \child(k)} \sum_{x_{\leftside(w)\cup w}}S_{w \to k}(\beta_{(w,k)}, x_{\leftside(w) \cup w})  \Phi^{\star}_{w \to k}(x_{\leftside(w) \cup w}, \alpha_{(w, k)}),
\end{equation}
which can be readily seen from the diagram in Figure \ref{fig:sketched_down_CDE_illustration}.

\subsection{Derivation of \(A^{\star}_{k}\) and \(B^{\star}_{k}\) in TTNS-Sketch}

For the time being, \(A^{\star}_{k}\) and \(B^{\star}_{k}\) is defined from \(\Phi^{\star}_{w \to k}\), and we now show how one can lift this requirement. We define the \emph{right-sketch function} \(T_{k}\), which is a linear operator of the form
\[
T_k \colon \prod_{i \in \rightside(k)} [n_i] \times [m_{(k, \parent(k))}] \to \bR.
\]

Using \(T_{k}\) and \(S_{k}\), one can jointly form a linear projection of \(p^{\star}\), with the result referred to as \(Z^{\star}_{k}\), as follows:
\begin{equation}
    \label{eqn: def of Z k} 
    Z^{\star}_{k}(\beta_{(k,\child(k))}, x_{k}, \gamma_{(k, \parent(k))} )   =    \sum_{x_{\leftside(k) \cup \rightside(k)}} S_{k}(\beta_{(k,\child(k))}, x_{\leftside(k)}) 
 p^{\star}(x_{\leftside(k)}, x_{k}, x_{\rightside(k)}) T_{k}(x_{\rightside(k)}, \gamma_{(k, \parent(k))}).
\end{equation}



One then performs singular value decomposition (SVD) to \(Z^{\star}_{k}\) according to the unfolding \(Z^{\star}_{k}(\beta_{(k,\child(k))}, x_{k}; \gamma_{(k, \parent(k))} )\). Due to the low rank structure of \(p^{\star}\) at \(k \to \parent(k)\), the following rank \(r_{(k, \parent(k))}\) decomposition is exact:
\begin{equation}\label{eqn: svd of Z star}
    Z^{\star}_{k}(\beta_{(k,\child(k))}, x_{k}; \gamma_{(k, \parent(k))} ) = U^{\star}_{k}\Sigma^{\star}_{k}\left( V^{\star}_{k}\right)^\top.
\end{equation}

\begin{figure}
     \centering
     \includegraphics[width=0.4\textwidth]{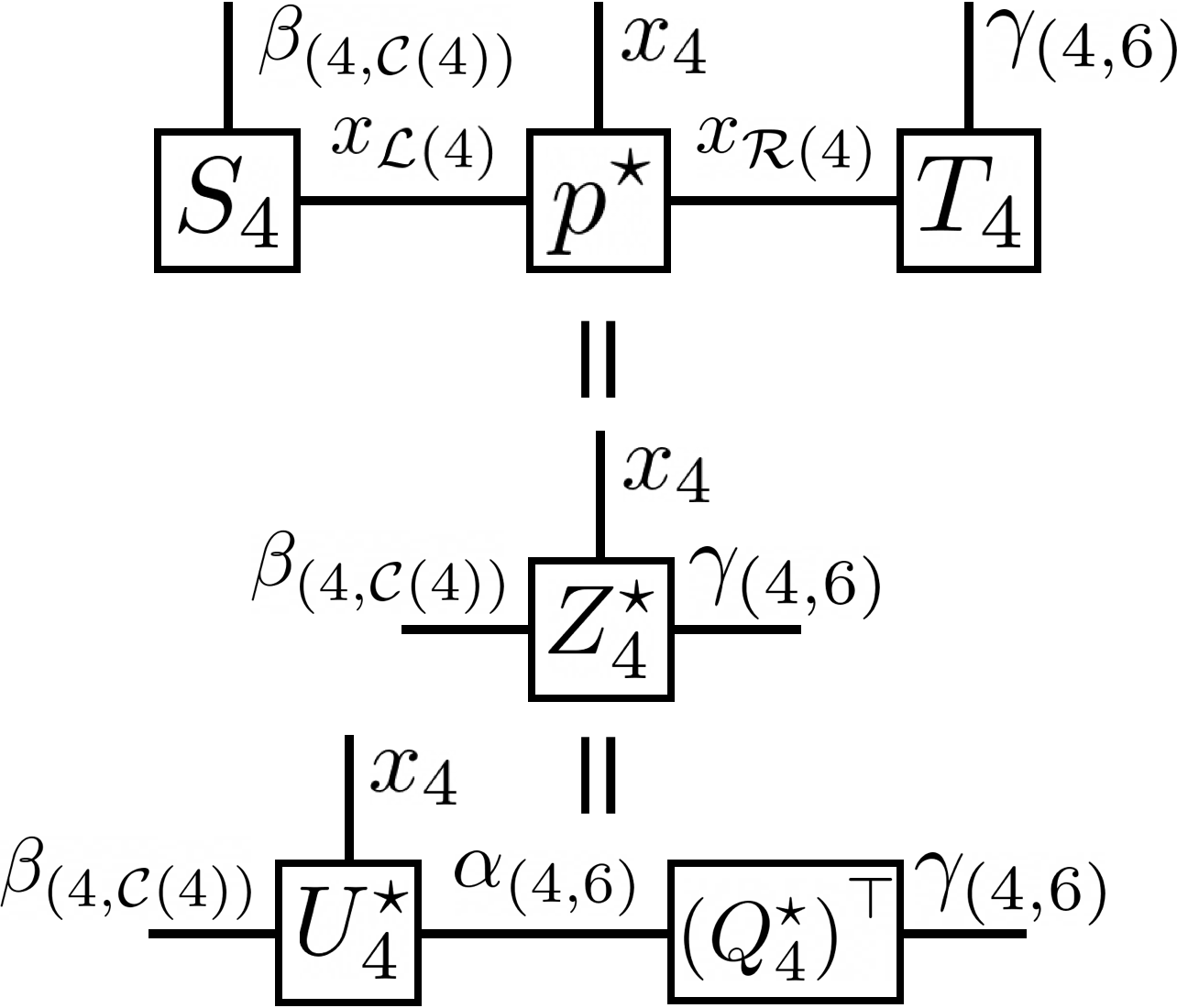}
    \caption{Tensor diagram of the sketching step in \eqref{eqn: def of Z k} and the SVD step in \eqref{eqn: svd of Z star}. The illustration is over the rooted tree in Figure \ref{fig:tree_topology_notation_infograph}.}
    \label{fig:sketch_plus_svd_of_sketch}
\end{figure}

Set \(Q^{\star}_{k} = V^{\star}_{k}\left(\Sigma^{\star}_{k}\right)^{-1}\) and \(\left(Q_{k}^{\star}\right)^{\top} = \Sigma^{\star}_{k}\left(V_{k}^{\star}\right)^{\top}\). Note that \(\left(Q_{k}^{\star}\right)^{\top}\) is the pseudo-inverse of \(Q^{\star}_{k}\). In particular, one has
 \[
     Z^{\star}_{k}(\beta_{(k,\child(k))}, x_{k}; \gamma_{(k, \parent(k))} ) = U^{\star}_{k}(\beta_{(k,\child(k))}, x_{k}; \alpha_{(k, \parent(k))})\left(Q^{\star}_{k}\right)^{\top}(\alpha_{(k, \parent(k))}; \gamma_{(k, \parent(k))}).
 \]
As a summary of \(Z_{k}^\star, U_{k}^\star, Q_{k}^\star\), see illustration in Figure \ref{fig:sketch_plus_svd_of_sketch}.

One can naturally shape \( U^{\star}_{k}(\beta_{(k,\child(k))}, x_{k}; \alpha_{(k, \parent(k))})\) as a tensor of the index \(U^{\star}_{k}(\beta_{(k,\child(k))}, x_{k}, \alpha_{(k, \parent(k))})\), i.e. \(U^{\star}_{k} \colon \prod_{w \in \child(k)} [l_{(w, k)}] \times [n_{k}] \times [m_{(k, \parent(k))}] \rightarrow \bR\). We now write out our choice of gauge and its consequences in Condition \ref{cond: ttns gauge choice}:
\begin{cond}\label{cond: ttns gauge choice}
(TTNS-Sketch gauge choice)
    The gauge for \(\Phi^{\star}_{k \to \parent(k)}\) is chosen so that the following holds:
    \begin{equation}\label{eqn: gauge choice}
        U^{\star}_{k}(\beta_{(k,\child(k))}, x_{k}, \alpha_{(k, \parent(k))}) = \sum_{x_{\leftside(k)}} S_{k}(\beta_{(k,\child(k))}, x_{\leftside(k)}) \Phi^{\star}_{k \to \parent(k)}(x_{\leftside(k) \cup k}, \alpha_{(k, \parent(k))}).
    \end{equation}
    As a consequence of \eqref{eqn: gauge choice} and \eqref{eqn: def of B_k}, one has:
    \begin{equation}\label{eqn: gauge def of B_k}
        B^{\star}_{k} = U^{\star}_{k}.
    \end{equation}
    As a consequence of \eqref{eqn: svd of Z star}, the matrix \(\left(Q^{\star}_{k}\right)^{\top}\) is a projection of \(\Psi^{\star}_{k \to \parent(k)}\) by \(T_{k}\), i.e.
    \begin{equation}
    \label{eqn: def of Q_k^T star}
        \left(Q^{\star}_{k}\right)^{\top}(\alpha_{(k, \parent(k))}, \gamma_{(k, \parent(k))}) = \sum_{x_{\rightside(k)}} \Psi^{\star}_{k \to \parent(k)}(\alpha_{(k, \parent(k))}, x_{\rightside(k)})T_{k}(x_{\rightside(k)}, \gamma_{(k, \parent(k))}).
    \end{equation}
\end{cond}


Likewise, we now show how \(A^{\star}_{k}\) can be obtained. By the choice of gauge in Condition \ref{cond: ttns gauge choice}, one forms Corollary \ref{cor: def of A_k star}. In Figure \ref{fig:structure_of_coefficient_tensor}, we include an short proof using tensor diagram. As a consequence of Corollary \ref{cor: def of A_k star}, one can form a linear system for \(G_{k}^\star\) completely in terms of the sketches $\{Z^{\star}_{w \to k}\}_{w \to k \in E}, \{Z^{\star}_i\}_{i = 1}^{d}$, which can be reasonably approximated by samples. As an illustration, one can rewrite the tensor diagram in Figure \ref{fig:sketched_down_CDE_illustration} as the new diagram illustrated in Figure \ref{fig:CDE_from_sketch}.

\begin{figure}
     \centering
     \includegraphics[width=0.6\textwidth]{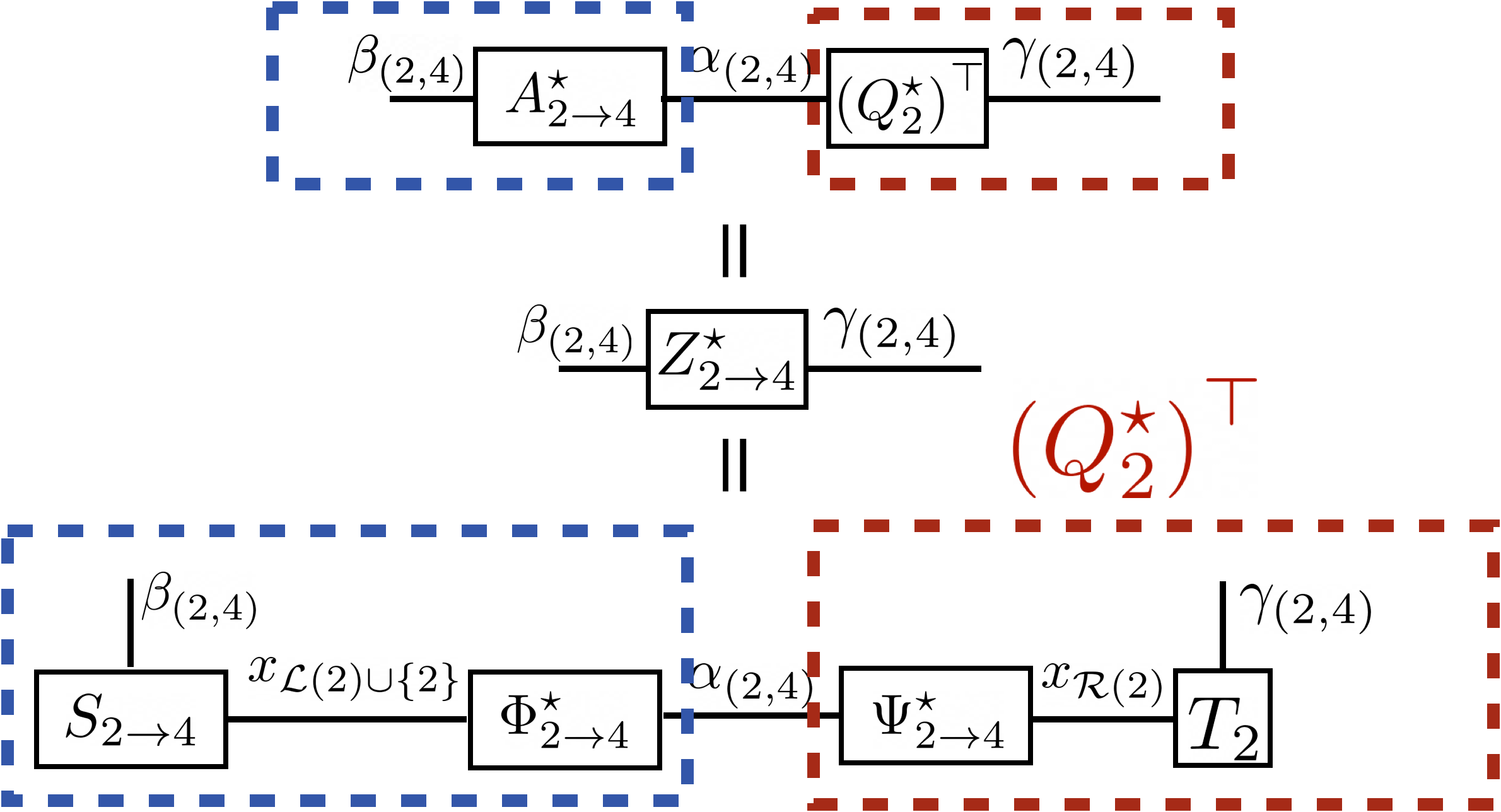}
     \caption{Proof of Corollary \ref{cor: def of A_k star} in terms of tensor diagram. Both equality holds due to \eqref{eqn: A w to k and phi w to k def}. Then, the tensors enclosed in red box coincide due to \eqref{eqn: def of Q_k^T star}, and so the tensors enclosed in blue box coincide, which is what we need to show.}
     \label{fig:structure_of_coefficient_tensor}
\end{figure}

\begin{cor}
\label{cor: def of A_k star}

Define the intermediate terms \(Z^{\star}_{w \to k}\) and \(A^{\star}_{w \to k}\) as follows:
\begin{equation}
\label{eqn: A w to k and phi w to k def}
    \begin{aligned}
    Z^{\star}_{w \to k}(\beta_{(w,k)}, \gamma_{(w, k)}) &= 
    \sum_{x_{[d]}} S_{w \to k}(\beta_{(w, k)}, x_{\leftside(w) \cup w})
    p^{\star}(x_{\leftside(w) \cup w}, x_{\rightside(w)})T_{w}(x_{\rightside(w)}, \gamma_{(w, k)})  
    \\
    A^{\star}_{w \to k}(\beta_{(w,k)}, \alpha_{(w, k)}) &= \sum_{\gamma_{(w, k)}} Z^{\star}_{w \to k}(\beta_{(w,k)}, \gamma_{(w, k)})Q^{\star}_{w}(\gamma_{(w, k)}, \alpha_{(w, k)}),
\end{aligned}
\end{equation}

Then \(A^{\star}_{k}\) satisfies the following equation
\begin{equation}
A^{\star}_{k}(\beta_{(k,\child(k))}, \alpha_{(k, \child(k))}) = 
\prod_{w \in \child(k)} A^{\star}_{w \to k}(\beta_{(w,k)}, \alpha_{(w, k)}).
\end{equation}
\end{cor}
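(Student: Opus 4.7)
The plan is to reduce the corollary to the already-derived factorization
\eqref{eqn: def of A_k simplified}, which writes $A^\star_k$ as a product over $w \in \child(k)$ of single-edge sketched quantities whose $w$-th factor is
\begin{equation*}
\tilde A_{w \to k}(\beta_{(w,k)}, \alpha_{(w,k)}) \;:=\; \sum_{x_{\leftside(w)\cup w}} S_{w \to k}(\beta_{(w,k)}, x_{\leftside(w) \cup w})\, \Phi^{\star}_{w \to k}(x_{\leftside(w) \cup w}, \alpha_{(w,k)}).
\end{equation*}
Thus the whole task collapses to proving the single-edge identity $A^\star_{w \to k} = \tilde A_{w \to k}$, since the product structure then transfers automatically to $A^\star_k$.

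To establish this identity, I would expand the definition of $Z^\star_{w \to k}$ from \eqref{eqn: A w to k and phi w to k def} by inserting the TTNS low-rank decomposition of $p^\star$ along the edge $w \to k$, namely
\begin{equation*}
p^{\star}(x_{\leftside(w) \cup w}, x_{\rightside(w)}) = \sum_{\alpha'_{(w,k)}} \Phi^{\star}_{w \to k}(x_{\leftside(w) \cup w}, \alpha'_{(w,k)})\, \Psi^{\star}_{w \to k}(\alpha'_{(w,k)}, x_{\rightside(w)}).
\end{equation*}
The sum over $x_{\leftside(w)\cup w}$ against $S_{w \to k}$ produces exactly $\tilde A_{w \to k}(\beta_{(w,k)}, \alpha'_{(w,k)})$, while the sum over $x_{\rightside(w)}$ against $T_w$ produces $(Q^\star_w)^\top(\alpha'_{(w,k)}, \gamma_{(w,k)})$ by the gauge identity \eqref{eqn: def of Q_k^T star} applied at node $w$ (with $\parent(w) = k$). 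In matrix form this reads $Z^\star_{w \to k} = \tilde A_{w \to k}\, (Q^\star_w)^\top$.

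The final step is to right-multiply by $Q^\star_w$. From the SVD construction $Q^\star_w = V^\star_w (\Sigma^\star_w)^{-1}$ and $(Q^\star_w)^\top = \Sigma^\star_w (V^\star_w)^\top$, together with the column orthonormality $(V^\star_w)^\top V^\star_w = I$ inherited from the SVD, we obtain $(Q^\star_w)^\top Q^\star_w = I$. Hence
\begin{equation*}
A^\star_{w \to k} \;=\; Z^\star_{w \to k}\, Q^\star_w \;=\; \tilde A_{w \to k}\,(Q^\star_w)^\top Q^\star_w \;=\; \tilde A_{w \to k},
\end{equation*}
and substituting into \eqref{eqn: def of A_k simplified} gives the claimed product formula.

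The argument is essentially linear algebra and mirrors the tensor-diagram computation depicted in Figure \ref{fig:structure_of_coefficient_tensor}. The only point requiring care is the bookkeeping of the gauge: \eqref{eqn: def of Q_k^T star} is written for a generic node, so one must apply it at $w$ (not at $k$) when identifying the right half of $Z^\star_{w \to k}$ with $(Q^\star_w)^\top$. Aside from keeping the children, parents, and joint variable indices straight, no further technical obstacle is anticipated.
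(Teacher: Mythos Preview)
Your proposal is correct and follows essentially the same route as the paper's own proof: reduce to the single-edge identity via \eqref{eqn: def of A_k simplified}, expand $Z^\star_{w\to k}$ through the rank factorization $p^\star=\Phi^\star_{w\to k}\Psi^\star_{w\to k}$, invoke the gauge relation \eqref{eqn: def of Q_k^T star} at node $w$ to recognize $\Psi^\star_{w\to k}T_w=(Q^\star_w)^\top$, and cancel $(Q^\star_w)^\top Q^\star_w=I$. The only cosmetic difference is that the paper works throughout in unfolding-matrix notation, writing $A^\star_{w\to k}=S_{w\to k}p^\star T_w Q^\star_w$ directly, whereas you introduce the intermediate symbol $\tilde A_{w\to k}$; the substance is identical.
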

\begin{proof}

    By the factorization structure of \(A^{\star}_{k}\), it suffices to show that
    \[
    A^{\star}_{w \to k}(\beta_{(w,k)}, \alpha_{(w, k)}) = \sum_{x_{\leftside(w)\cup w}}S_{w \to k}(\beta_{(w,k)}, x_{\leftside(w) \cup w})  \Phi^{\star}_{w \to k}(x_{\leftside(w) \cup w}, \alpha_{(w, k)}).
    \]

    We use a unfolding matrix structure for \(Z^{\star}_{w \to k}, S_{w \to k}, T_{w}\) in the following rewrite of \eqref{eqn: A w to k and phi w to k def}:
    \[
    Z^{\star}_{w \to k}(\beta_{(w,k)}; \gamma_{(w, k)}) = 
    S_{w \to k}(\beta_{(w, k)}; x_{\leftside(w) \cup w})
    p^{\star}(x_{\leftside(w) \cup w}; x_{\rightside(w)})T_{w}(x_{\rightside(w)}; \gamma_{(w, k)})  
    \]
    
    Likewise, we use the unfolding matrix structure of \(p^{\star}, \Phi^{\star}_{w \to k}, \Psi^{\star}_{w \to k}\) in 
    \[
    p^{\star}(x_{\leftside(w) \cup w}; x_{\rightside(w)}) = \Phi^{\star}_{w \to k}(x_{\leftside(w) \cup w}; \alpha_{(w, k)}) \Psi^{\star}_{w \to k}(\alpha_{(w, k)}; x_{\rightside(w)}).
    \]
    
    Using the unfolding matrix structure as just suggested, it suffices to prove
    \begin{equation*}
        A^{\star}_{w \to k} = S_{w \to k}\Phi^{\star}_{w \to k}.
    \end{equation*}
    
    The definition for the intermediate term \(Z^{\star}_{w \to k}\) simplifies to \(Z^{\star}_{w \to k} = S_{w \to k}p^{\star}T_{w}\) and more importantly, \[A^{\star}_{w \to k} = Z^{\star}_{w \to k}Q^{\star}_{w} = S_{w \to k}p^{\star}T_{w}Q^{\star}_{w}.\]
    
    Note that one can expand according to \(p^{\star} = \Phi_{w\to k}^{\star}\Psi^{\star}_{w\to k}\) and get
    \[
     A^{\star}_{w \to k} =  S_{w \to k}\Phi_{w\to k}^{\star}\Psi^{\star}_{w\to k}T_{w}Q^{\star}_{w} = S_{w \to k}\Phi_{w\to k}^{\star}\left(Q^{\star}_{w}\right)^{\top}Q^{\star}_{w} = S_{w \to k}\Phi_{w\to k}^{\star},
    \]
    where the second equality uses \eqref{eqn: def of Q_k^T star} and last equality uses that \(\left(Q^{\star}_{w}\right)^{\top}Q^{\star}_{w}\) is an identity matrix.
\end{proof}

\subsection{Sample estimation of \(A^{\star}_{k}\) and \(B^{\star}_{k}\) in TTNS-Sketch}
Practically, one only has access to the empirical distribution \(\hp\) via samples $\{(y_1^{(i)}, \ldots, y_d^{(i)})\}_{i = 1}^{N}$. A finite sample approximation of \(Z^{\star}_{k}\) is tractable and can be obtained by function evaluations of \(T_{k}\) and \(S_{k}\). One has
\begin{equation}
\label{eqn: def of tilde phi k}
    \hat Z_{k}(\beta_{(k,\child(k))}, x_{k}, \gamma_{(k, \parent(k))} )   =    \sum_{i = 1}^{N} S_{k}(\beta_{(k,\child(k))}, y_{\leftside(k)}^{(i)})
 \mathbf{1}(y_k^{(i)} = x_{k}) T_{k}(y_{\rightside(k)}^{(i)}, \gamma_{(k, \parent(k))}).
\end{equation}

Similarly, \(Z^{\star}_{w \to k}\) can be approximated by
\begin{equation}
\label{eqn: def of tilde phi w to k}
    \hat Z_{w \to k}(\beta_{(w,k)}, \gamma_{(w, k)})   =    \sum_{i = 1}^{N} S_{w \to k}(\beta_{(w, k)}, y_{\leftside(w) \cup w}^{(i)})T_{w}(y_{\rightside(w)}^{(i)}, \gamma_{(w, k)}).
\end{equation}

One can then form \(\hat U_{k}, \hat Q_{k}\) by replacing \(Z^{\star}_{k}\) with \(\hat Z_{k}\) in \eqref{eqn: svd of Z star}, noting that in this case the rank \(r_{(k, \parent(k))}\) SVD decomposition is not exact due to the presence of noise. We now explain why this algorithm is practical in the sample case. It suffices to see the linear equation in Figure \ref{fig:CDE_from_sketch}. If one replaces every tensor block by a finite sample approximation in the sense discussed above (e.g. replace \(Z^{\star}_{w \to k}\) by \(\hat Z_{w \to k}\)), then one can indeed form a linear equation, with accuracy increasing with sample size. The full algorithm for empirical distributions will be described in Section \ref{sec: main algorithm}.

\begin{figure}
    \centering
\begin{subfigure}[b]{\textwidth}
     \centering
     \includegraphics[width=\textwidth]{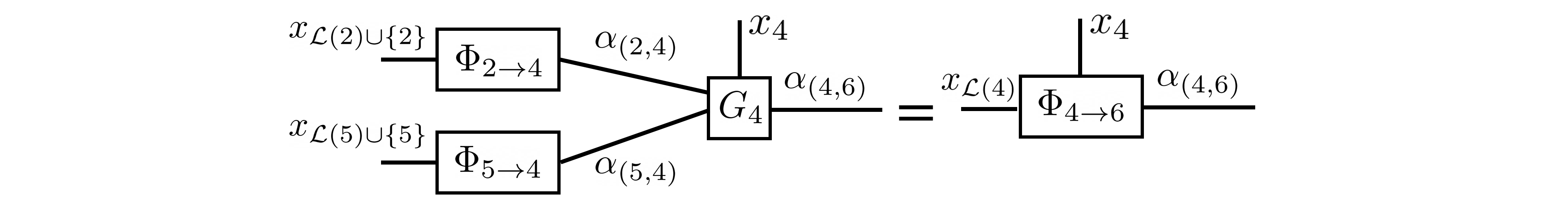}
    \caption{Tensor diagram representation of the Core Determining Equation  \eqref{eq:ttn-determining} for \(k = 4\) in Theorem \ref{thm:ttns-existence}. The left side uses \(\Phi_{2 \to 4}\) and \(\Phi_{5 \to 4}\) instead of \(\Phi_{\child(4) \to 4}\), which is allowed due to \eqref{eqn: phi C_k to k}.}  
    \label{fig:full_CDE_illustration}
\end{subfigure}
\begin{subfigure}[b]{\textwidth}
     \centering
     \includegraphics[width=\textwidth]{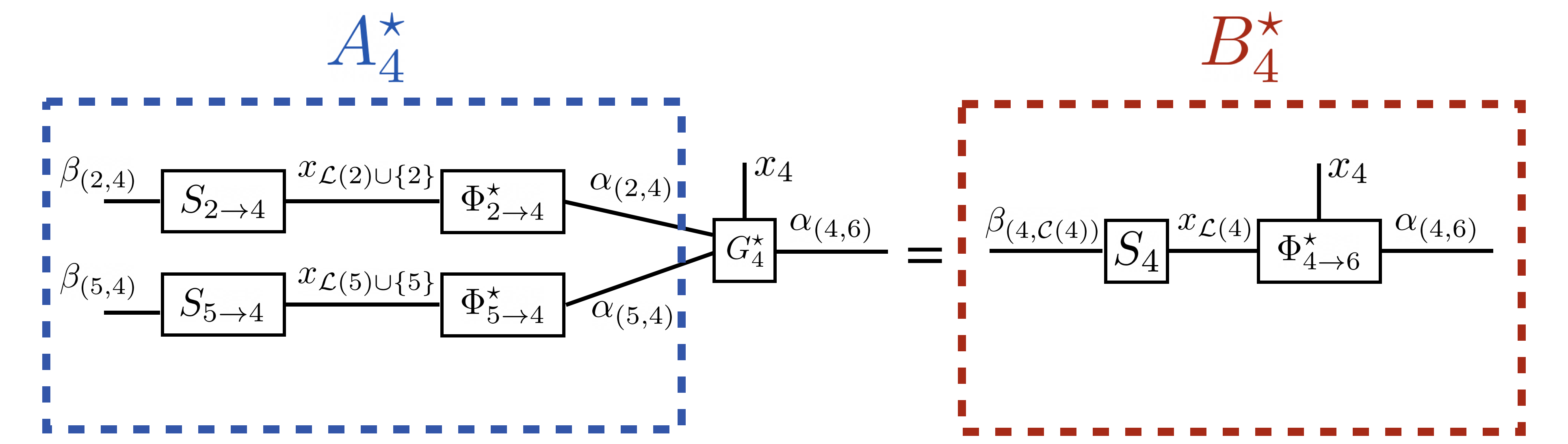}
    \caption{Tensor diagram representation of the sketched-down Core Determining Equation \eqref{eqn: noiseless equation for G_k^star} for \(k = 4\). The diagram can be thought of as a linear equation for \(G_{4}^{\star}\). Note that the tensors enclosed in blue box can be thought of as the coefficient, and the tensors enclosed in red box can be thought of as the right hand side. One can see that the tensor diagram equation is obtained from that of Figure \ref{fig:full_CDE_illustration} by a contraction according to \(S_{4}\) to both sides. The left side of the diagram uses blocks corresponding to \(S_{2 \to 4}, S_{5 \to 4}\), which is a consequence of \eqref{eqn: S_k splits}. }
    \label{fig:sketched_down_CDE_illustration}
\end{subfigure}
\begin{subfigure}[b]{\textwidth}
     \centering
     \includegraphics[width=\textwidth]{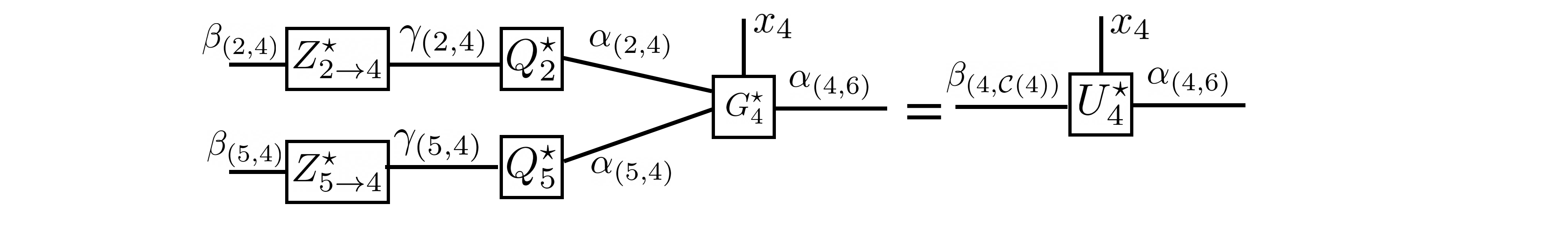}
    \caption{With $\{Z^{\star}_{w \to k}\}_{w \to k \in E}, \{Z^{\star}_i\}_{i = 1}^{d}$ from the sketch step and  $\{U^{\star}_i, Q^{\star}_i\}_{i = 1}^{d}$ from the SVD step, one can rewrite the tensor diagram in Figure \ref{fig:sketched_down_CDE_illustration} into a tractable linear equation for \(G^\star_{k}\). Note that the diagram is equivalent to that in Figure \ref{fig:sketched_down_CDE_illustration} due to Corollary \ref{cor: def of A_k star}.}
    \label{fig:CDE_from_sketch}
\end{subfigure}
    \label{fig:TTNS-Sketch summary}
    \caption{Tensor diagram representation of equations for TTNS tensor cores in TTNS-Sketch. The illustration is over the rooted tree in Figure \ref{fig:tree_topology_notation_infograph}. The equation in Figure \ref{fig:full_CDE_illustration} implies the rest. The diagram in Figure \ref{fig:CDE_from_sketch} is derived from the diagram in Figure \ref{fig:sketched_down_CDE_illustration} with a specific choice of gauge according to Condition \ref{cond: ttns gauge choice}. Importantly, the equation in Figure \ref{fig:CDE_from_sketch} allows for finite sample approximation.}
\end{figure}

\section{Topology finding}\label{sec: topology finding}

\newcommand{\TCL}{T_{\mathrm{CL}}}
\newcommand{\PCL}{p_{\mathrm{CL}}}

\subsection{The Chow-Liu algorithm for topology finding}
For an arbitrary distribution \(p^{\star}\), we discuss the tree topology specification problem. It could happen that one has no access to a pre-selected candidate tree topology, but one wishes to test if there is a tree topology to reasonably capture the structure for \(p^{\star}\). Our main algorithm for tree topology specification is the Chow-Liu algorithm, for which we now give a brief description. 

The input is a \(d\)-dimensional distribution \(p\) with its random vector denoted by \(X  :=  \left(X_{1}, \ldots, X_{d}\right)\). Typically, \(p\) is the empirical distribution \(\hp\) over given samples. In the first step, one computes the pairwise mutual information \(I(X_{i}, X_{j})\) over any distinct pair of \((i,j) \in [d] \times [d]\). In the second step, one forms a graph \(G\), which is a complete graph on \(V  :=  [d]\) with edge weight given by \(I(X_{i}, X_{j})\). In the third step, Kruskal's algorithm is used to obtain the maximal spanning tree on \(G\), i.e. a spanning tree over \(d\) nodes that maximizes the sum of mutual information over all of its edges. This maximal mutual information spanning tree is the Chow-Liu tree \(\TCL\). 

The Chow-Liu algorithm also outputs a graphical model over \(\TCL\). After specifying certain marginal and conditional probabilities to match that of \(p\), one uniquely determines the Chow-Liu model \(\PCL\), which is a graphical model based on \(\TCL\). 

We will discuss the rationale for using \(\TCL\) for our TTNS algorithm in three cases.

In the first case, if \(p^{\star}\) is indeed a graphical model over a tree \(T^{\star}\), then the Chow-Liu tree \(\TCL\) will be \(T^{\star}\) with high probability (see below). Moreover, it is well-known that \(\PCL\) is the tree graphical model with minimum KL divergence to the input distribution. With mild constraint on bond dimension, the class of functions representable by a TTNS format strictly covers density representable by a graphical model, and the performance of TTNS-Sketching with \(\TCL\) has a performance which is on par with \(\PCL\).

In the second case, if \(p^{\star}\) has a TTNS ansatz over a tree \(T^{\star}\), then it is typically true that farther-away nodes in \(T^{\star}\) are less correlated. By the maximal spanning tree procedure in Chow-Liu, variables that are far away in \(\TCL\) are also typically less correlated. If \(\TCL\) and \(T^{\star}\) differs locally, then the TTNS-sketching algorithm still performs well empirically. For a important example in this category, suppose \(p^{\star}\) is given by a graphical model over a graph with loops. In this case, \(\TCL\) will converge to a spanning tree of the graph, and one can form a TTNS ansatz with \(\TCL\) by choosing appropriately large bond dimension. 

In the third case, it may happen that \(p^{\star}\) cannot be represented by a TTNS anstaz. In this case, one can quickly reject the TTNS model assumption by looking at the mutual information \(I(X_{i}, X_{j})\) used in the Chow-Liu algorithm. For any node \(k \in [d]\), removing \(k\) will separate \(\TCL\) into two connected components. If one sees several pairs of strong correlation between nodes separated by \(k\), then the density \(p^{\star}\) most likely fails the TTNS model assumption, and more general tensor networks might be more applicable.

\subsection{Sample complexity for successful tree topology recovery}

We discuss the amount of samples required for Chow-Liu to pick the ``correct" tree topology. By a correct tree topology, we will mean that the Chow-Liu Tree \(\TCL\) equals to the tree one would have obtained if one forms the maximal spanning tree based on the exact mutual information. If \(p^{\star}\) is a graphical model over \(T^{\star}\), then this notion of correctness coincides with the intuitive notion of \(\TCL = T^{\star}\).

There has been considerable recent work in the past few years on the sample complexity of the Chow-Liu algorithm to infer the correct tree topology in the sample case. \cite{bhattacharyya2021near} shows that the sample complexity is bounded by \(O(\frac{n^3 d}{\epsilon} \log{1/\delta})\) to ensure a \(1-\delta\) success rate, where \(\epsilon\) is the gap in the sum of mutual information between the two best tree models. For tree-based Ising model with zero external field, \cite{bresler2020learning} proves an upper bound that is \(O(\log{(d/\delta)})\).




\section{TTNS-Sketch for empirical distributions}\label{sec: main algorithm}

We now give the main Algorithm \ref{alg:1}-\ref{alg:1-3} for the TTNS ansatz with empirical distribution as input, which is the main use case. We include it separately from Section \ref{sec: ttns-sketch full explanation} due to corner cases such as when \(k\) is a leaf or root node. To emphasize the sample estimation procedure, all of the intermediate terms will be labelled with \(\hat{}\) if it assumes access to \(\hp\). Importantly, Algorithm \ref{alg:1-3} only takes in the sketches $ \{Z_{w \to k}\}_{w \to k \in E}, \{Z_i\}_{i = 1}^{d}$ as input, which can be either noiseless or estimated, and so we do not label terms inside this subroutine.

\begin{figure}
    \centering
    \includegraphics[width = 0.6\textwidth]{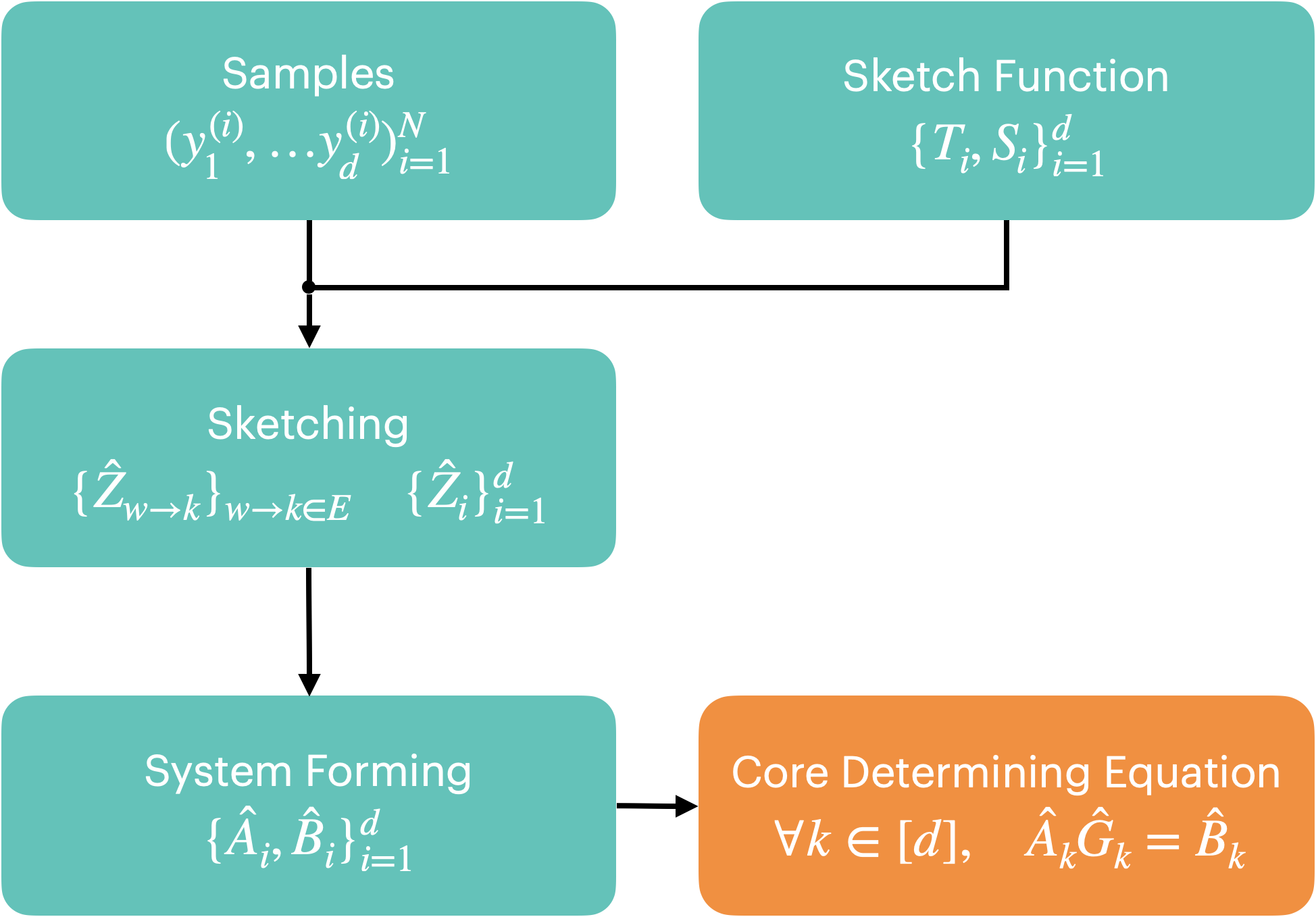}
    \caption{Schematic of Algorithm \ref{alg:1}}
    \label{fig:schematic_TTNS}
\end{figure}


\begin{algorithm}
\caption{TTNS-Sketch for empirical distribution $\hp$.}
\label{alg:1}
\begin{algorithmic}[1]
\REQUIRE Empirical distribution \(\hp\) formed by samples $\{(y_1^{(i)}, \ldots, y_d^{(i)})\}_{i = 1}^{N}$.
\REQUIRE A rooted tree structure \(T = ([d], E)\), and $\child, \parent, \leftside, \rightside$ as in Definition \ref{defn: tree topology information}.
\REQUIRE Target ranks $\{r_e : e \in E\} \subset \mathbb{N}$.
\REQUIRE $T_k \colon \prod_{i \in \rightside(k)} [n_i] \times [m_{(k, \parent(k))}] \to \bR$ for all non-root $k \in [d]$ and $T_k = 1$ if $k$ is the root.
\REQUIRE $S_1, \ldots, S_d$ formed by $S_{w \to k}$'s as in \eqref{eqn: S_k splits}.
\STATE $ \{\hat Z_{w \to k}\}_{w \to k \in E}, \{\hat Z_i\}_{i = 1}^{d}\leftarrow \textsc{Sketching}(\hp, T_{1}, \ldots, T_d, S_1, \ldots, S_{d})$.
\STATE $\{\hat{A}_i, \hat B_i\}_{i = 1}^{d} \leftarrow \textsc{SystemForming}(\{\hat Z_{w \to k}\}_{w \to k \in E}, \{\hat Z_i\}_{i = 1}^{d}).$

\STATE Solve the following $d$ equations via least-squares for the variables $\hat G_1, \ldots, \hat G_d$:
\begin{equation}
    \label{eq:alg-CDEs}
    \begin{aligned}
        \hat G_{k} & = \hat B_{k} \quad \text{if} ~ k ~ \text{is a leaf}, \\
        \textstyle\sum_{\alpha_{(k, \child(k))}} \hat A_{k}(\beta_{(k, \child(k))}, \alpha_{(k, \child(k))}) \hat G_{k}(x_k, \alpha_{(k, \neighbor(k))}) & =\hat B_{k} \quad \text{otherwise}, 
    \end{aligned}
\end{equation}
where $\hat G_k \colon [n_k] \times \prod_{w \in \neighbor(k)} [r_{(w, k)}] \to \mathbb{R}$. 
\RETURN $\hat G_1, \ldots, \hat G_d$ 
\end{algorithmic}
\end{algorithm}


\begin{algorithm}
\caption{\textsc{Sketching}.}
\label{alg:1-2}
\begin{algorithmic}
\REQUIRE $\hat p$, $T_{1}, \ldots, T_{d}$, and $S_1, \ldots, S_{d}$ as given in Algorithm \ref{alg:1}.
\REQUIRE \(T = ([d], E)\) as given in Algorithm \ref{alg:1}.
\FOR{$k = 1$ to $d$}
    \IF{$k$ is root}
    \STATE Define $\hat Z_{k} \colon \prod_{w \in \child(k)} [l_{(w, k)}] \times [n_{k}] \to \bR$ as
    \begin{equation*}
        \hat Z_{k}(\beta_{(k,\child(k))}, x_{k})
        =
        \sum_{i = 1}^{N} S_{k}(\beta_{(k,\child(k))}, y_{\leftside(k)}^{(i)})\mathbf{1}(y_k^{(i)} = x_{k}).
    \end{equation*}
    \ELSIF{$k$ is leaf}
    \STATE Define $\hat Z_{k} \colon  [n_{k}] \times [m_{(k, \parent(k))}] \to \bR$ as
    \begin{equation*}
        \hat Z_{k}(x_{k}, \gamma_{(k, \parent(k))})
        =
        \sum_{i = 1}^{N} \mathbf{1}(y_k^{(i)} = x_{k})T(y_{\rightside(k)}^{(i)}, \gamma_{(k, \parent(k))}).
    \end{equation*} 
    \STATE
    Define $\hat Z_{k \to \parent(k)}$ according to \eqref{eqn: def of tilde phi w to k}
    \ELSE
    \STATE Define $\hat Z_{k}\colon  \prod_{w \in \child(k)} [l_{(w, k)}] \times [n_{k}] \times [m_{(k, \parent(k))}] \to \bR$ according to \eqref{eqn: def of tilde phi k}.
    \STATE Define $\hat Z_{k \to \parent(k)} \colon [l_{(k,\parent(k))}] \times [m_{(k, \parent(k))}] \to \bR$ according to \eqref{eqn: def of tilde phi w to k}.
    \ENDIF
\ENDFOR
\RETURN $ \{\hat Z_{w \to k}\}_{w \to k \in E}, \{\hat Z_i\}_{i = 1}^{d}$.
\end{algorithmic}
\end{algorithm}


\begin{algorithm}
\caption{\textsc{SystemForming}.}
\label{alg:1-3}
\begin{algorithmic}
\REQUIRE Sketches $ \{Z_{w \to k}\}_{w \to k \in E}, \{Z_i\}_{i = 1}^{d}$. 
\REQUIRE Tree structure \(T = ([d], E)\) as given in Algorithm \ref{alg:1}.
\REQUIRE Target rank $r$ as given in Algorithm \ref{alg:1}.
\FOR{$k = 1$ to $d$}
    \STATE \(\beta_{k} \gets \beta_{(k,\child(k))}\).
    \STATE \(\gamma_{k} \gets \gamma_{\parent(k)}\).
    \STATE \(r_{k} \gets r_{(\parent(k), k)}\).
    \STATE \(l_{k} \gets \prod_{w \in \child(k)}l_{(w, k)}\).
    \STATE \(m_{k} \gets m_{(k , \parent(k))}\).
    \IF{$k$ is a leaf}
        \STATE Let $U_k \Sigma_k V^\top_k$, where $U_k \in \mathbb{R}^{n_k \times r_k}$, $V_k \in \mathbb{R}^{m_k \times r_k}$, $\Sigma_k \in \mathbb{R}^{r_k \times r_k}$, be the best rank-$r_k$ approximation to the matrix $Z_{k}(x_k; \gamma_k)$ via SVD. Define $B_k \colon [n_k] \times [r_k] \to \bR$ where $B_k(x_k, \alpha_k) = U_k(x_k; \alpha_k)$. Set \(A_{k} = 1\).
    \ELSIF{$k$ is root}
        \STATE Let $B_{k} = Z_{k}$. Set $Q_k = 1$.
    \ELSE
        \STATE Let $U_k \Sigma_k V^\top_k$, where $U_k \in \mathbb{R}^{l_{k} n_k \times r_k}$, $V_k \in \mathbb{R}^{m_k \times r_k}$, $\Sigma_k \in \mathbb{R}^{r_k \times r_k}$, be the best rank-$r_k$ approximation to the matrix $Z_{k}(\beta_{k}, x_{k}; \gamma_{k})$ via SVD. Define $B_{k} \colon \prod_{w \in \child(k)}[l_{(w, k)}] \times [n_k] \times [r_k] \to \bR$ where $B_{k}(\beta_{k}, x_{k}, \alpha_{k}) = U_{k}(\beta_{k}, x_{k}; \alpha_{k})$.
    \ENDIF
    \IF{$k$ is non-root}
    \STATE Let $Q_{k} = V_k \Sigma_k^{-1}$.
    \ENDIF
    \ENDFOR
    \FOR{$k = 1$ to $d$}
    \IF{$k$ is non-leaf}
        \STATE Compute $A_{k} \colon \prod_{w \in \child(k)} [l_{(w, k)}] \times \prod_{w \in \child(k)}[r_{(w,k)}] \to \bR$:
        \STATE \begin{equation*}
            A_{k}(\beta_{(k,\child(k))}, \alpha_{(k, \child(k))}) = 
            \prod_{w \in \child(k)}\sum_{\gamma_{(w, k)}} Z_{w \to k}(\beta_{(w,k)}, \gamma_{(w, k)})Q_{w}(\gamma_{(w, k)}, \alpha_{(w, k)}).
        \end{equation*}
    \ENDIF
\ENDFOR
\RETURN $\{A_i, B_i\}_{i = 1}^{d}$.
\end{algorithmic}
\end{algorithm}




\subsection{Condition for consistency of TTNS-Sketch}
We introduce Condition \ref{cond: Condition for sketch functions} on the choice of sketch functions. Essentially, Theorem \ref{thm:ttns-existence} shows that \eqref{eq:ttn-determining} is an over-determined linear system with a unique exact solution, and one needs the ``sketched-down" version of \eqref{eq:ttn-determining} to still have a unique solution:
\begin{cond}
    \label{cond: Condition for sketch functions}
    Let $p^{\star}$ be a function which satisfies Condition \ref{cond: TTNS ansatz condition}. Moreover, let $\{\Phi^{\Delta}_{(w, k)}, \Psi^{\Delta}_{(w, k)}\}_{(w, k) \in E}$ be an arbitrary collection of tensors forming the low-rank decomposition of \(p^{\star}\) in the sense of Condition \ref{cond: TTNS ansatz condition}, with gauge chosen arbitrarily. Let \(\{T_{i}, S_{i}\}_{i = 1}^{d}\) be the sketch functions in Algorithm \ref{alg:1}.
    Define two intermediate terms \(A^{\Delta}_{k} \colon \prod_{w \in \child(k)}[l_{w \to k}] \times \prod_{w \in \child(k)}[r_{w \to k}] \to \bR\) and \(B^{\Delta}_k \colon \prod_{w \in \child(k)}[l_{w \to k}] \times [n_{k}] \times [\gamma_{k \to \parent(k)}] \to \bR\) by
    \begin{equation}
    \label{eqn: def of A^ast and B^ast}
    \begin{aligned}
        &A^{\Delta}_{k}(\beta_{(k, \child(k))}, \alpha_{(k, \child(k))})  :=  
        \prod_{w \in \child(k)} \sum_{x_{\leftside(w)\cup w}}S_{w \to k}(\beta_{(w,k)}, x_{\leftside(w) \cup w})  \Phi^{\Delta}_{w \to k}(x_{\leftside(w) \cup w}, \alpha_{(w, k)}),\\
        &B^{\Delta}_{k}(\beta_{(k,\child(k))}, x_{k}, \alpha_{(k, \parent(k))})  :=  \sum_{x_{\leftside(k)}} S_{k}(\beta_{(k,\child(k))}, x_{\leftside(k)}) \Phi^{\Delta}_{k \to \parent(k)}(x_{\leftside(k) \cup k}, \alpha_{(k, \parent(k))}).
    \end{aligned}
    \end{equation}
    Moreover, define an intermediate term $\bar{\Phi}^{\star}_k \colon \prod_{i \in \leftside(k) \cup k} [n_{i}] \times [m_{(k, \parent(k))}] \to \bR$ by 
    \[
    \bar{\Phi}^{\star}_k(x_{\leftside(k) \cup k}, \gamma_{(k, \parent(k))}) = \sum_{x_{\rightside(k)}} p^{\star}(x_{1}, \ldots, x_{d}) T_{k}(x_{\rightside(k)}, \gamma_{(k, \parent(k))}).
    \]
    
    Then, \(\{T_{i}, S_{i}\}_{i = 1}^{d}\) in Algorithm \ref{alg:1} is chosen to be such that the following conditions hold:
    \begin{itemize}
        \item[(i)] $\bar{\Phi}^{\star}_k(x_{\leftside(k) \cup k}; \gamma_{(k, \parent(k))})$ and $\Phi^{\Delta}_{k \to \parent(k)}(x_{\leftside(k) \cup k}; \alpha_{(k, \parent(k))})$ have the same $r_{(k, \parent(k))}$-dimensional column space for every non-root $k$.
        \item[(ii)]  $Z^{\star}_k(\beta_{(k, \child(k))}, x_k; \gamma_{(k, \parent(k))})$ defined in \eqref{eqn: def of Z k} is of rank $r_{(k, \parent(k))}$ for every non-leaf and non-root $k$.
        \item[(iii)] $A^{\Delta}_k(\beta_{(k, \child(k))}; \alpha_{(k, \child(k))})$ has full column rank for every non-leaf $k$.
    \end{itemize}

\end{cond}

If one has oracle access access to \(p^{\star}\), and moreover suppose that computing \(Z^{\star}_{w \to k}\) and \(Z^{\star}_{k}\) are tractable, then one can directly start with the \textsc{SystemForming} step in Algorithm \ref{alg:1} with the noiseless terms \(Z^{\star}_{w \to k}\) and \(Z^{\star}_{k}\) as input. Similar to Theorem \ref{thm:ttns-existence}, under a technical condition for the given sketch functions, one can solve for the tensor cores exactly. We give out the condition in Theorem \ref{thm:ttns-algorithm-recovery}. 

\begin{theorem}
    \label{thm:ttns-algorithm-recovery}
    (Exact recovery of TTNS-Sketch)
    For an underlying distribution $p^{\star}$ satisfying Condition \ref{cond: TTNS ansatz condition}, suppose its tree structure \(T\) and internal bond \(r\) coincides with the input for Algorithm \ref{alg:1}. Let \(\{T_{i}, S_{i}\}_{i = 1}^{d}\) satisfy Condition \ref{cond: Condition for sketch functions}. Then, in Algorithm \ref{alg:1}, assume one has oracle access to $\{Z^{\star}_{w \to k}\}_{w \to k \in E}, \{Z^{\star}_i\}_{i = 1}^{d}$. Then, let
    \[\{A^{\star}_i, B^{\star}_i\}_{i = 1}^{d} \leftarrow \textsc{SystemForming}(\{Z^{\star}_{w \to k}\}_{w \to k \in E}, \{Z^{\star}_i\}_{i = 1}^{d}),\]
    and let the $\{G_{i}^\star\}_{i = 1}^{d}$ be the least-squares solution the following linear system for the variables \(G_{1}, \ldots, G_{d}\):
    \begin{equation}
        \label{eq:alg-CDEs noiseless}
        \begin{aligned}
            G_{k} & =  B^{\star}_{k} \quad \text{if} ~ k ~ \text{is a leaf}, \\
            \textstyle\sum_{\alpha_{(k, \child(k))}}  A^{\star}_{k}(\beta_{(k, \child(k))}, \alpha_{(k, \child(k))})  G_{k}(x_k, \alpha_{(k, \neighbor(k))}) & = B^{\star}_{k} \quad \text{otherwise}, 
        \end{aligned}
    \end{equation}

    Then linear system \eqref{eq:alg-CDEs noiseless} has a unique exact solution $\{G^{\star}_{i}\}_{i = 1}^{d}$, which forms the cores of the TTNS ansatz of \(p^{\star}\) given $T = ([d], E)$ and $\{r_e : e \in E\}$. As a consequence, the output of Algorithm \ref{alg:1} given an empirical distribution \(\hp\) as input satisfies \(\lim_{N \to \infty} \hat G_{k} = G^{\star}_{k}\), i.e. \(\hat G_{k}\) is a consistent estimator of \(G^{\star}_{k}\).
\end{theorem}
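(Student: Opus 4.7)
The plan is to first identify, among the gauge-equivalence class of TTNS decompositions of \(p^{\star}\), the specific representative that matches what the noiseless execution of \textsc{SystemForming} produces, to verify that the cores \(\{G^{\star}_i\}\) from Theorem \ref{thm:ttns-existence} for this gauge satisfy \eqref{eq:alg-CDEs noiseless} uniquely, and then to obtain consistency via the strong law of large numbers together with continuity of the downstream operations.

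First I would use Condition \ref{cond: Condition for sketch functions}(i) and (ii) to pin down the gauge. Condition (ii) says that \(Z^{\star}_{k}(\beta_{(k,\child(k))}, x_{k}; \gamma_{(k,\parent(k))})\) has rank exactly \(r_{(k,\parent(k))}\), so the truncated SVD inside \textsc{SystemForming} is exact and produces an orthonormal basis \(U^{\star}_{k}\) for the column space of \(Z^{\star}_{k}\). Condition (i) asserts that this column space coincides with the column space of \(\bar{\Phi}^{\star}_k\), which by construction equals the column space of \(\sum_{x_{\leftside(k)}} S_{k}(\cdot)\, \Phi^{\Delta}_{k \to \parent(k)}(\cdot)\). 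Consequently one can absorb a nonsingular change-of-basis matrix into \(\Phi^{\Delta}_{k \to \parent(k)}\) and its inverse into \(\Psi^{\Delta}_{k \to \parent(k)}\) to obtain gauge-equivalent factors \(\Phi^{\star}_{k \to \parent(k)}, \Psi^{\star}_{k \to \parent(k)}\) for which Condition \ref{cond: ttns gauge choice} holds verbatim; hence \(B^{\star}_k = U^{\star}_k\) and \(Q^{\star}_k\) is precisely the object defined in \eqref{eqn: def of Q_k^T star}.

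With the gauge fixed, Corollary \ref{cor: def of A_k star} identifies the \(A^{\star}_{k}\) returned by \textsc{SystemForming} with the coefficient tensor in the sketched-down CDE \eqref{eqn: noiseless equation for G_k^star}, so the cores \(\{G^{\star}_i\}\) from Theorem \ref{thm:ttns-existence} solve \eqref{eq:alg-CDEs noiseless}. Uniqueness then follows from Condition \ref{cond: Condition for sketch functions}(iii): the unfolding \(A^{\Delta}_k(\beta_{(k,\child(k))}; \alpha_{(k,\child(k))})\) has full column rank, and since \(A^{\star}_k\) differs from \(A^{\Delta}_k\) only by a nonsingular right action induced by the gauge change between \(\Phi^{\star}\) and \(\Phi^{\Delta}\), it likewise has full column rank, making the least-squares solution of each equation in \eqref{eq:alg-CDEs noiseless} unique and equal to \(G^{\star}_k\).

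For consistency of Algorithm \ref{alg:1}, note that \(\hat Z_{w \to k}\) and \(\hat Z_k\) are empirical means of bounded functions of an i.i.d.\ sample from \(p^{\star}\), so by the strong law of large numbers they converge almost surely to \(Z^{\star}_{w \to k}\) and \(Z^{\star}_k\). Because \(Z^{\star}_k\) has exactly \(r_{(k,\parent(k))}\) positive singular values with a strict gap to zero, the best rank-\(r_k\) approximation, the inverse \(\Sigma_k^{-1}\), and \(Q_k = V_k \Sigma_k^{-1}\) are all continuous at \(Z^{\star}_k\); so \(\hat B_k \to B^{\star}_k\) and \(\hat Q_k \to Q^{\star}_k\) almost surely up to the intrinsic SVD sign (or block-rotation) ambiguity, which is absorbed by a gauge transformation on the companion cores and therefore does not affect the TTNS function \(\prod_k G_k\). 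Finally, because \(A^{\star}_k\) has full column rank, the least-squares map \((A, B) \mapsto \argmin_{G} \lVert A G - B \rVert\) is continuous in a neighborhood of \((A^{\star}_k, B^{\star}_k)\); composing these continuous maps yields \(\hat G_k \to G^{\star}_k\) almost surely.

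The main obstacle is the gauge-matching step: the SVD in \textsc{SystemForming} is only determined up to signs (and orthogonal rotations within any degenerate singular-value block), so the identification \(B^{\star}_k = U^{\star}_k\) must be interpreted modulo this ambiguity. The resolution is that the TTNS ansatz itself is gauge-covariant per edge (Section \ref{sec: gauge equivalence}), so any SVD convention \textsc{SystemForming} produces can be matched by re-gauging \(\Phi^{\star}_{k \to \parent(k)}\) and compensating in \(\Psi^{\star}_{k \to \parent(k)}\); care is needed to apply this edge-by-edge, propagating the chosen basis into the coefficient \(A^{\star}_k\) consistently through the product in Corollary \ref{cor: def of A_k star}.
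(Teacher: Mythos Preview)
Your approach matches the paper's: fix the gauge so that Condition \ref{cond: ttns gauge choice} holds, invoke Theorem \ref{thm:ttns-existence} for existence of an exact solution, use Condition \ref{cond: Condition for sketch functions}(iii) for uniqueness, and deduce consistency from $\hat A_k \to A^{\star}_k$, $\hat B_k \to B^{\star}_k$. The only difference in execution is that the paper constructs the gauge explicitly as $\Phi^{\star}_{k\to\parent(k)} := \bar{\Phi}^{\star}_k \, Q^{\star}_k$ and then verifies $S_k \circ \Phi^{\star}_{k\to\parent(k)} = U^{\star}_k$ by a short 3-tensor computation, rather than arguing abstractly via column-space matching as you do.
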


\subsection{Introduction to recursive sketch functions}
Storing generic \(S_{i}\) is not possible as its number of possible inputs is exponential in \(d\). As an example, one can consider cases where sketch functions \(\{S_{i}, T_{i}\}_{i =1}^{d}\) have explicit formula, which makes its evaluation easy. Alternatively, one can also use sketch functions which are themselves derived by a TTNS ansatz. We introduce a nice special case called \emph{recursive sketch functions}, which allows for sketch functions more general than those with an analytic formula, but is still tractable for computation.

For recursive sketch functions, there is a natural notion of subgraph TTNS function. We will define it here:
\begin{defn}\label{defn: TTNS on subgraphs}
    (Subgraph TTNS function)
    Suppose that \(f\) is a function satisfying Condition \ref{cond: TTNS ansatz condition} with tree \(T\), and moreover suppose \(f\) admits tensor cores \(\{s_{i}\}_{i = 1}^{d}\) for its TTNS ansatz. Let \(\mathcal{S} \subset V\) and let \(T_{\mathcal{S}} = (\mathcal{S}, E_{\mathcal{S}})\) be the subgraph of \(T\) with vertex set \(\mathcal{S}\). 
    Then \(\{s_{i}\}_{i = 1}^{d}\) and \(T_{\mathcal{S}}\) jointly defines the \emph{subgraph TTNS function} \(f_{\mathcal{S}} \colon \prod_{i \in \mathcal{S}}[n_{i}] \times \prod_{e \in \partial \mathcal{S}}[r_{e}] \to \bR \) by
    \[
    f_{\mathcal{S}}(x_{\mathcal{S}}, \alpha_{\partial \mathcal{S}}) = \sum_{\substack{\alpha_{e} \\ e \in E_{\mathcal{S}}}} \prod_{k \in \mathcal{S}} G_{k}\left(x_{k}, \alpha_{(k, \neighbor(k))}\right).
    \]
    where \(\partial\mathcal{S} :=  \{(v,w) \in E \mid v \in \mathcal{S}, w \not \in \mathcal{S}\}\). 
\end{defn}
In the recursive sketching regime, each sketch function \(T_{k}\) and \(S_{k}\) has a TTNS structure, and is recursively defined by the collection of sketch cores \(\{(t_{i},s_{i})\}_{i=1}^{d}\). We specify such a relationship. Given the convention in section \ref{sec: main algorithm}, one can easily define as follows:
\begin{cond}
    \label{cond: recurisve sketching}
    (Recursive sketching condition)
    Let \(T = (V,E)\) be a tree. Assume \(f\) (resp. \(g\)) are two functions with a TTNS ansatz over \(T\) in the sense of Condition \ref{cond: TTNS ansatz condition}, with internal bond \(l\) (resp. \(m\)) and sketch cores  \(\{s_{i}\}_{i=1}^{d}\) (resp. \(\{t_{i}\}_{i=1}^{d}\)). Then \(S_{k} = f_{\leftside(k)}\) and \(T_{k} = g_{\rightside(k)}\). 
    
    In particular, \(S_{k}\) satisfies a recursive relation
    \begin{equation}
    \label{eqn: S_k under recursive sketching}
        S_{k}(\beta_{(k, \child(k))}, x_{\leftside(k)}) = \prod_{w \in \leftside(k)}\sum_{\beta_{(w, \child(w))}}s_{w}(x_{w}, \beta_{(w, k)}, \beta_{(w, \child(w))})S_{w}(\beta_{(w, \child(w))}, x_{\leftside(w)}),
    \end{equation}
    and so \(S_{k}\) satisfies the factorization structure \eqref{eqn: S_k splits} with its \(S_{w \to k}\) defined by
    \[
    S_{w \to k}(\beta_{(w, k)}, x_{\leftside(w)}) = \sum_{\beta_{(w, \child(w))}}s_{w}(x_{w}, \beta_{(w, k)}, \beta_{(w, \child(w))})S_{w}(\beta_{(w, \child(w))}, x_{\leftside(w)}).
    \]
\end{cond}

Moreover, the recursive definition over the left sketch functions leads to a simplified equation for \(\hat A_{k}\). As a consequence of \eqref{eqn: S_k under recursive sketching}, the terms \(\hat Z_{w \to k}\) and \(\hat Z_{w}\) are now connected:
\begin{equation}
\label{eqn: phi w to k under recursive sketching}
    \hat Z_{w \to k}(\beta_{(w,k)}, \gamma_{(w, k)}) 
    =
    \sum_{x_{w}}\sum_{\substack{ \gamma_{(w, k)}}}
    s_{w}(\beta_{(w,k)}; \beta_{(w, \child(w))},x_{w} ) 
    \hat Z_{w}(\beta_{(w, \child(w))}, x_{w}; \gamma_{(w,k)}).
\end{equation}

Define an intermediate term \(\hat A_{w \to k}\) similar to \eqref{eqn: A w to k and phi w to k def}. As a consequence of \eqref{eqn: phi w to k under recursive sketching},
\begin{align*}
    \hat A_{w \to k}(\beta_{(w, k)}, \alpha_{(w, k)}) 
    = &
    \sum_{\beta_{(w, \child(w))}} \sum_{x_{w}} \sum_{\substack{ \gamma_{(w, k)}}}
    s_{w}(\beta_{(w,k)}; \beta_{(w, \child(w))},x_{w} ) 
    \hat Z_{w}(\beta_{(w, \child(w))}, x_{w}; \gamma_{(w,k)}) \hat Q_{w}(\gamma_{(w,k)}, \alpha_{(w,k)})\\
    = &\sum_{x_{w}}\sum_{ \beta_{(w, \child(w))}  } s_{w}(\beta_{(w,k)}; \beta_{(w, \child(w))},x_{w} ) 
    \hat B_{w}(\beta_{(w, \child(w))}, x_{w};\alpha_{(w,k)}),
\end{align*}
where the second equality is a consequence of Algorithm \ref{alg:1-3}.

As a consequence, we conclude that for recursive sketching one can compute the left-hand side by
\begin{equation}
\label{eqn: A_k under recursive sketching}
    \hat A_{k}(\beta_{(k,\child(k))}, \alpha_{(k, \child(k))})
    =  \prod_{w \in \child(k)}  \sum_{(\beta_{(w, \child(w))}, x_{w})} s_{w}(\beta_{(w,k)}; \beta_{(w, \child(w))},x_{w} ) 
    \hat B_{w}(\beta_{(w, \child(w))}, x_{w};\alpha_{(w,k)}).
\end{equation}

Note that the above result is solely due to the property of the left sketch function. Hence, \eqref{eqn: A_k under recursive sketching} holds true if one replaces \(\hat{A}_{k}, \hat{B}_{k}\) with \(A^{\star}_k, B^{\star}_k\). Moreover, the equation \eqref{eqn: A_k under recursive sketching} is mathematically equivalent to the formula for \(\hat A_{k}\) in Algorithm \ref{alg:1-3}. Nevertheless, it will simplify our subsequent error analysis for Markov sketch function, as one only needs to account for the error in \(\hat B_{k}\).


\subsection{Estimation of target rank \(r\)}
If one has access to a tree \(T\) but not a target rank \(r\), then one can define a noise threshold \(\delta\), and determine \(r_{(k, \parent(k))}\) from the SVD result in Algorithm \ref{alg:1-3} directly. Namely, one checks the singular values in \(\Sigma_{k}\), and sets \(r_{(k, \parent(k))}\) to be the number of singular values above the level set by \(\delta\). 

Hence, for samples from a distribution \(p^{\star}\) with no known structure, one can set \(T = \TCL\) as in Section \ref{sec: topology finding}, and set the internal bond rank by thresholding with \(\delta\) in the sense described above. The result should be reasonable if \(p^{\star}\) satisfies Condition \ref{cond: TTNS ansatz condition} with tractable bond dimension, or if \(p^{\star}\) can be well approximated by a TTNS ansatz. Analysis with approximated target rank is beyond the scope of this paper.

\section{Choice of sketch function}\label{sec: sketch functions}
We begin with Section \ref{sec: main idea of TTNS-Sketch}, which explains that the differences in sketch functions conceptually leads to matching different statistical moments. The rest of the subsections give concrete examples of sketch functions.

\subsection{Connection of sketching to moment matching} \label{sec: main idea of TTNS-Sketch}

In this subsection, we give the sketched-down linear equation another interpretation as enforcing a match between the statistical moment of the output TTNS ansatz to that of the empirical distribution.
Essentially, different sketch functions lead to different statistical moments to match, which conceptually leads to different optimization objectives. 

First, any algorithm for solving for tensor components of a TTNS boils down to attempting to fit the following equation over \(\{G_{i}\}_{i = 1}^{d}\) to the sample, with the end goal of approximating \(p^{\star}\). In terms of an equation, one can write
\begin{equation}\label{eqn: motivational equation for solving TTNS component}
    \sum_{\substack{\alpha_e \\ e \in E }} \prod_{i=1}^{d} G_{i}(x_{i}, \alpha_{(i, \neighbor(i))})
    \approx
      \hp(x_{1}, \ldots, x_{d})
    \approx
     p^{\star}(x_{1}, \ldots, x_{d}),\quad \forall (x_{1}, \ldots, x_{d}).
\end{equation}

In practical settings, \(\hp \approx p^{\star}\) only weakly. Moreover, the above equation relates to the equivalence of two tensors of  \(n^{d}\) entries, but the number of unknown parameters involved is only \(O(d)\). 
To apply sketching, one defines a sketch function \(f(\mu, x_{1}, \ldots, x_{d})\). One then multiplies \eqref{eqn: motivational equation for solving TTNS component} by \(f\) and then sum over the joint \(x_{[d]}\) variable. The sketched down equation becomes
\begin{equation}\label{eqn: sketched equation for solving TTNS component}
    \sum_{x_{[d]} } f(\mu, x_{1}, \ldots, x_{d})\left(\sum_{\substack{\alpha_e \\ e \in E }} \prod_{i=1}^{d} G_{i}(x_{i}, \alpha_{(i, \neighbor(i))})\right)
    =
    \sum_{i = 1}^{N} f\left(\mu, y_1^{(i)}, \ldots, y_d^{(i)}\right)
    \approx
    \mathbb{E}_{X \sim p^{\star}} \left[f\left(\mu, X\right)\right], \quad \forall \mu.
\end{equation}
where the first approximation sign in \eqref{eqn: motivational equation for solving TTNS component} is swapped with an equality sign, which is meant to signal that one then solves for the tensor cores using this equation. 
As for the approximate sign in \eqref{eqn: sketched equation for solving TTNS component}, the design of \(f\) will be typically such that the variance \( f\left(\mu, X\right)\) is \(O(1)\) or grows slowly with \(d\), which is why the approximation will be reasonable according to the law of large numbers. 

Moreover, one can let \(\theta\) stand for the TTNS tensor cores \(\{G_{i}\}_{i =1}^{d}\), and let \(p_{\theta}\) stand for the probability distribution obtained from the TTNS ansatz under such cores. The equation \eqref{eqn: sketched equation for solving TTNS component} is equivalent to
\begin{equation}\label{eqn: Sketching is moment matching}
    \mathbb{E}_{X \sim p_{\theta}} \left[f\left(\mu, X\right)\right]
    =
    \mathbb{E}_{X \sim \hp} \left[f\left(\mu, X\right)\right]
    \approx
    \mathbb{E}_{X \sim p^{\star}} \left[f\left(\mu, X\right)\right], \quad \forall \mu.
\end{equation}

In summary, the solution is such that \(p_{\theta}\) is close to \(\hp\) in terms of statistical moments \(\mathbb{E}_{X} \left[f\left(\mu, X\right)\right]\).
For the connection to TTNS-Sketch, we consider a simple case where one has already solved for \(\{G_{i}\}_{i \not = k}\). Consider a sketch function \(f_{k}\) by first defining its corresponding joint variable \(\mu\) by \(\mu = (\beta_{(k, \child(k))}, \gamma_{(k, \parent(k))}, \iota)\). Then, let
\[
f_{k}(\beta_{(k, \child(k))}, \gamma_{(k, \parent(k))}, \iota, x_{[d]}) = S_{k}(\beta_{(k,\child(k))}, x_{\leftside(k)})\mathbf{1}\left[\iota = x_{k}\right]T_{k}(x_{\rightside(k)}, \gamma_{(k, \parent(k))}).
\]

As a result, one has
\[
    \mathbb{E}_{X \sim \hp} \left[f_{k}\left(\mu, X\right)\right] = \hat{Z}_{k}(\beta_{(k, \child(k))}, \iota, \gamma_{(k, \parent(k))}),
\]
and so the sketched-down equation \eqref{eqn: Sketching is moment matching} tries to enforce a match between \(\mathbb{E}_{X \sim p_\theta} \left[f_k\left(\mu, X\right)\right]\) and \(Z^\star_k\). Different sketch functions thus leads to different sketches \(Z^\star_k\) to match.



\subsection{Markov sketch function}\label{section:markov sketch}
A Markov sketch function allows the TTNS procedure to essentially solve for marginal distribution information for each node \(k\) around its neighbors, which we will show with the definition of its sketch function.

For the Markov sketch function, one has \(l_{k} := n_{\parent(k)}\), and the right-sketch function \(T_{k}\) is defined by
\begin{equation}
\label{eq:markov-right-sketch}
    T_{k}(x_{\rightside(k)}, \gamma_{(k, \parent(k))}) = \mathbf{1}\left[x_{\parent(k)} =  \gamma_{(k, \parent(k))}\right].
\end{equation}

As for the left-sketch function, the form is similar, but it is complicated by the fact that a tree node can have multiple child nodes. Likewise, \(m_{(w,k)} := n_{w}\), and
\begin{equation}
\label{eq:markov-left-sketch}
    S_{k}(\beta_{(k, \child(k))}, x_{\leftside(k)}) = \prod_{w \in \child(k)} \mathbf{1}\left[x_{w} = \beta_{(w,k)}\right].
\end{equation}

By applying left-sketch and right sketch, one has
\begin{equation}\label{eq:tildepdef_markov}
    Z^{\star}_{k}(\beta_{(k, \child(k))}, x_{k}, \gamma_{(k, \parent(k))}) = \mathbb{P}_{X \sim p^{\star}}\left[X_{w} = \beta_{(w, k)} \forall w \in \child(k), X_{k} = x_{k}, X_{\parent(k)} = \gamma_{k}\right],
\end{equation}
which is then the marginal distribution on the subset of nodes \(\neighbor(k) \cup \{k\} = \{v \in V \mid \mathrm{dist}(v, k) \leq 1\}\). 

We also use
\begin{equation}
    (\mathcal{M}_{\mathcal{S}}p)(x_\mathcal{S}) := \mathbb{P}_{X \sim p}\left[X_{v} = x_{v}, \forall v \in \mathcal{S}\right]
\end{equation}
to denote the marginalization of $p$ to the variables given by the index set $\mathcal{S}$, which is a $\vert \mathcal{S} \vert$-dimensional function.

Due to the construction, the joint variables \((\beta_{(k, \child(k))}, \gamma_{(k, \parent(k))})\) each has a natural correspondence to a node neighboring \(k\). By identifying \(\beta_{(w,k)} = x_{w}\) and \(\gamma_{(k,\parent(k))} = x_{\parent(k)}\), one has the following entry-wise equality
\begin{equation*}
    Z^{\star}_{k}(\beta_{(k, \child(k))}, x_{k}, \gamma_{(k, \parent(k))}) = \mathcal{M}_{\mathcal{S}_{k}}p^{\star}(x_{\child(k)}, x_{k}, x_{\parent(k)}),
\end{equation*}
where \(\mathcal{S}_{k} = \neighbor(k)\cup \{k\}\). 

By the description in Section \ref{sec: main idea of TTNS-Sketch}, with Markov sketch function, TTNS-Sketch essentially tries to fit \(\mathcal{M}_{\neighbor(k)\cup k}\hp\) over all \(k \in V\). 
One can show that the Markov sketch function allows exact recovery for tree-based graphical models. We summarize the result in Lemma \ref{lem: ttns-markov-exact}.
\begin{lemma}
\label{lem: ttns-markov-exact}
    Assume that $p^{\star}$ is a graphical model over the tree structure $T = ([d], E)$. Then \(p^{\star}\) satisfies Condition \ref{cond: TTNS ansatz condition} with the tree structure \(T\) and \(r_{e} = n\) for any \(e \in E\). Moreover, sketches in \eqref{eq:markov-right-sketch} and \eqref{eq:markov-left-sketch} satisfies the condition in Theorem \ref{thm:ttns-algorithm-recovery}.
\end{lemma}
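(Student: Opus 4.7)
The lemma decomposes into two parts, corresponding to its two statements. For the first, I plan to apply the tree Markov property of $p^{\star}$ directly: removing any edge $w \to k$ of $T$ partitions the vertices into the subtrees $\leftside(w) \cup \{w\}$ and $\rightside(w)$, separated by the single vertex $w$. Therefore $p^{\star}(x_1, \ldots, x_d) = p^{\star}(x_{\leftside(w) \cup w})\, p^{\star}(x_{\rightside(w)} \mid X_w = x_w)$. Setting
\[
\Phi_{w \to k}(x_{\leftside(w) \cup w}, \alpha) := p^{\star}(x_{\leftside(w) \cup w})\,\mathbf{1}[x_w = \alpha], \qquad \Psi_{w \to k}(\alpha, x_{\rightside(w)}) := p^{\star}(x_{\rightside(w)} \mid X_w = \alpha),
\]
for $\alpha \in [n_w]$ yields a decomposition matching Condition \ref{cond: TTNS ansatz condition} with rank $n_w \leq n$; pad with zero columns (resp.\ rows) for $\alpha \in [n_w+1, n]$ to embed as a rank-$n$ decomposition.

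For the second part, my plan is to exploit that the Markov sketches turn every object in Condition \ref{cond: Condition for sketch functions} into a marginal of $p^{\star}$. Direct substitution of \eqref{eq:markov-right-sketch}--\eqref{eq:markov-left-sketch} into the definitions, followed by one application of tree Markov at $k$, yields
\[
Z^{\star}_k(\beta_{(k, \child(k))}, x_k, \gamma_{(k, \parent(k))}) = p^{\star}\bigl(X_{\child(k)} = \beta,\, X_k = x_k,\, X_{\parent(k)} = \gamma\bigr),
\]
\[
\bar\Phi^{\star}_k(x_{\leftside(k) \cup k}, \gamma) = p^{\star}(x_{\leftside(k) \cup k})\, p^{\star}(X_{\parent(k)} = \gamma \mid X_k = x_k),
\]
and, for any admissible $(\Phi^{\Delta}_{w \to k}, \Psi^{\Delta}_{w \to k})$, the matrix identity $A^{\Delta}_{w \to k} \Psi^{\Delta}_{w \to k} = S_{w \to k} p^{\star} = p^{\star}(X_w = \beta,\, x_{\rightside(w)})$. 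From here, condition (i) follows because tree Markov at $k$ factors each column of $p^{\star}(x_{\leftside(k) \cup k};\, x_{\rightside(k)})$ as $p^{\star}(x_{\rightside(k) \setminus \parent(k)} \mid x_{\parent(k)})$ times the corresponding column of $\bar\Phi^{\star}_k$, while summing these columns over $x_{\rightside(k) \setminus \parent(k)}$ recovers $\bar\Phi^{\star}_k$; hence $\bar\Phi^{\star}_k$ and $p^{\star}$ share a column space, which coincides with that of $\Phi^{\Delta}_{k \to \parent(k)}$. For condition (ii), tree Markov factors $Z^{\star}_k$ as $p^{\star}(X_{\child(k)} = \beta \mid X_k = x_k) \cdot p^{\star}(X_k = x_k, X_{\parent(k)} = \gamma)$, so the unfolding $Z^{\star}_k((\beta, x_k); \gamma)$ has rank controlled by the pairwise marginal $p^{\star}(X_k, X_{\parent(k)})$. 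Condition (iii) follows from the Kronecker structure $A^{\Delta}_k = \bigotimes_{w \in \child(k)} A^{\Delta}_{w \to k}$ once each $A^{\Delta}_{w \to k}$ is shown to have full column rank, which via the identity above reduces to $S_{w \to k}p^{\star}$ having rank $n$ in $\beta$; marginalizing out $x_{\rightside(w) \setminus \parent(w)}$ reduces this again to the pairwise marginal $p^{\star}(X_w, X_{\parent(w)})$.

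The main obstacle is a non-degeneracy assumption on $p^{\star}$: the rank claims in (ii) and (iii) ultimately demand that every pairwise marginal $p^{\star}(X_k, X_{\parent(k)})$ of adjacent nodes be of full rank $n$ as an $n \times n$ matrix (and similarly that the internal bond dimensions match the true ranks so that the Markov sketches are not rank-deficient along any edge). I plan to state this as a standing mild non-degeneracy assumption, standard in the tree graphical model setting, under which all three conditions of Condition \ref{cond: Condition for sketch functions} follow cleanly from the matrix-product identities above with only routine bookkeeping of tensor unfoldings.
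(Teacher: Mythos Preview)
Your proposal is correct and follows the same overall strategy as the paper: identify the Markov-sketched tensors $\bar\Phi^{\star}_k$, $Z^{\star}_k$, $A^{\Delta}_k$ with marginals of $p^{\star}$ and use the tree Markov property to establish the required column-space and rank statements. The paper packages the column/row-space manipulations into a separate helper lemma (Lemma~\ref{lem:markov-col/row-spaces}), which says that for $\mathcal{S}_1 \subset \leftside(k)\cup k$ and $\mathcal{S}_2 \subset \rightside(k)$ with $\parent(k)\in\mathcal{S}_2$, the marginals $\mathcal{M}_{\mathcal{S}_1\cup\mathcal{S}_2}p$ and $\mathcal{M}_{\mathcal{S}_1\cup\parent(k)}p$ share column space; you carry out the equivalent factorizations inline. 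For part~(iii) the paper takes a slightly different tack: it specializes to the $\star$-gauge of Condition~\ref{cond: ttns gauge choice}, writes $A^{\star}_{w\to k}=Z^{\star}_{w\to k}Q^{\star}_w$, and analyzes the column spaces of $Z^{\star}_{w\to k}$ and $Q^{\star}_w$ separately via the helper lemma. Your identity $A^{\Delta}_{w\to k}\Psi^{\Delta}_{w\to k}=S_{w\to k}p^{\star}$ is a cleaner, gauge-independent shortcut that bypasses the SVD machinery. You are also more explicit than the paper on two points it leaves implicit: the first sentence of the lemma (your construction of $\Phi_{w\to k},\Psi_{w\to k}$ directly furnishes the rank-$n$ decomposition), and the non-degeneracy hypothesis that each adjacent pairwise marginal $p^{\star}(X_k,X_{\parent(k)})$ has full rank~$n$, which both proofs ultimately require but neither the lemma statement nor the paper's argument spells out.
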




\subsection{Higher order Markov sketch function}\label{sec: high-order markov sketch}
One can use high-order Markov sketch functions to solve for marginal probability information over more nodes than in the previous cases. Let \(\mathcal{S}_{k} \subset V\) be a choice of nodes of interest to \(k\). By a suitable change of the sketch functions $T_k$ and $S_{k}$ in Section \ref{section:markov sketch}, one can make it so that one has the following entry-wise equality
\begin{align*}
     Z_{k}^{\star}
    =
    \mathcal{M}_{\mathcal{S}_{k}}  p^{\star}; \quad  Z_{w \to k}^{\star} = \mathcal{M}_{(\mathcal{S}_{k} \cup \leftside(k)) \cup (\mathcal{S}_{w} \cap \rightside(w))}  p^{\star}
\end{align*}

If one wishes to ensure the high-order Markov sketching function satisfies recursive sketching, one needs the following constraint:
\begin{equation}\label{eq: nbhd recursiveness constraint}
    \{k\} \subset \mathcal{S}_{k} \subset \cup_{w \in \child(k)} \mathcal{S}_{w}.
\end{equation}

For an example, for any integer \(L \geq 1\), one sets \(L\) as a distance cutoff and let
\begin{equation}\label{eq: nbhd with cutoff}
     \mathcal{S}_{k} = \{v \in V \mid \mathrm{dist}(v, k) \leq L \}.
\end{equation}

By triangle inequality, the construction in \eqref{eq: nbhd with cutoff} satisfies \eqref{eq: nbhd recursiveness constraint}, and one obtains the Markov sketching function when \(L = 1\). For the choice of neighborhood in \eqref{eq: nbhd with cutoff}, we refer to the corresponding sketch function as \(L\)-Markov sketch function. In particular, \(2\)-Markov sketch function will play an important role in numerical experiments.

\subsection{Perturbative sketching}
The idea of perturbative sketching is to use recursive random projection to form the sketch functions. In Condition \ref{cond: perturbative sketching symmetry}-\ref{cond: perturbative sketching cores}, we define the structural assumption of perturbative sketching. In Theorem \ref{thm: perturbative sketching form}, we give a structural theorem for perturbative sketching, which shows that the sketch \(Z_k^\star\) is a power series of tensors. Moreover, each term in the power series corresponds to a random projection of a marginal distribution of \(p^{\star}\), i.e. tensor of the form \(\mathcal{M}_{\mathcal{S}}  p^{\star}\) for a subset \(\mathcal{S} \subset [d]\).

In Condition \ref{cond: perturbative sketching symmetry}, we make a significant simplification to recursive sketching by unifying left and right sketching to an equal footing, which allows for a cleaner structural analysis. In Condition \ref{cond: perturbative sketching cores}, we make the assumption that each sketch core is made up of an all one tensor plus a perturbation term. 
\begin{cond}
\label{cond: perturbative sketching symmetry}
    (Directional symmetry for perturbative sketching)
    Assume the sketch functions \(\{T_{i}, S_{i}\}\) satisfies Condition \ref{cond: recurisve sketching}. Furthermore, one lets \(l_{e} = m_{e}\) for any \(e \in E\) and \(t_{i} = s_{i}\) for any \(i \in [d]\).
\end{cond}
\begin{cond}
    (perturbative structure for sketch cores)
    \label{cond: perturbative sketching cores}
    Fix a constant \(\epsilon > 0\) as the \emph{perturbative scale}. The tensor core \(s_{k}\) is defined by tensors \(O_{k}\) and \(\Delta_{k}\) of the form:
    \begin{equation}\label{eq: decomposition of perturbative core}
        s_{k}(x_{k}, \beta_{(k, \neighbor(k))}) = O_{k}(x_{k}, \beta_{(k, \neighbor(k))}) + \epsilon \Delta_{k}(x_{k}, \beta_{(k, \neighbor(k))}),
    \end{equation}
    and moreover \(O_{k}\) is the all one tensor satisfying
    \[
    O_{k}(x_{k}, \beta_{(k, \neighbor(k))}) = 1.
    \]
\end{cond}
For a concrete example, if the perturbation is formed by entry-wise i.i.d. random variable, one can use the following line in MATLAB to define a perturbative sketch core:
\begin{verbatim}
    s_k = ones(size(s_k)) + epsilon*rand(size(s_k)).
\end{verbatim}

Due to Condition \ref{cond: perturbative sketching symmetry}, it follows that \( Z^\star_{k}\) only depends on the sketch cores \(\{s_{i}\}_{i \not = k}\). Therefore, one can identify \(\gamma_{(k, \parent(k))}\) with \(\beta_{(k, \parent(k))}\). By simple algebra, one can derive the following result:
\begin{theorem}\label{thm: perturbative sketching form}
    (Structure theorem of perturbative sketching)
    Assume that Condition \ref{cond: perturbative sketching symmetry}-\ref{cond: perturbative sketching cores} are satisfied for the chosen sketch function. 
    For \(k \in [d]\) and \(\mathcal{S} \subset [d] - \{k\}\), let \(T_{\mathcal{S}} = (\mathcal{S}, E_{\mathcal{S}})\) be the subgraph of \(T\) with vertex set \(\mathcal{S}\). As in Definition \ref{defn: TTNS on subgraphs}, define \(\Delta_{\mathcal{S}}\) by
    \begin{equation}
    \label{eqn: def of Delta S}
        \Delta_{\mathcal{S}}(x_{\mathcal{S}}, \beta_{\partial \mathcal{S}})  :=  \sum_{\substack{\beta_{e} \\ e \in E_{\mathcal{S}}}}\prod_{i \in \mathcal{S} }\Delta_{i}(x_{i}, \beta_{(i, \neighbor(i))}).
    \end{equation}

    Then the following equation holds for \(Z_{k}^{\star}\):
    \begin{equation}\label{eq: structural form}
        Z^{\star}_{k}(x_{k}, \beta_{(k, \neighbor(k))}) = \sum_{l = 0}^{d-1} \epsilon^{l}\sum_{\mathcal{S} \subset [d] - \{k\}, |\mathcal{S}| = l}Z^{\star}_{k; \mathcal{S}}(x_{k}, \beta_{(k, \neighbor(k))}),
    \end{equation}
    where
    \begin{equation}
    \label{eq: structural form of  Z star kS}
        Z^{\star}_{k; \mathcal{S}}(x_{k}, \beta_{(k, \neighbor(k))}) = \sum_{\beta_{e}, k \not \in e}\left(\sum_{x_{\mathcal{S}}}\mathcal{M}_{\mathcal{S} \cup \{k\}}  p^{\star}(x_{k},x_{\mathcal{S}})\Delta_{\mathcal{S}}(x_{\mathcal{S}}, \beta_{\partial \mathcal{S}})\right).
    \end{equation}

\end{theorem}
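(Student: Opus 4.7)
The plan is to exploit the multilinearity of the sketch functions $S_k$ and $T_k$ in the cores $\{s_i\}$: under the perturbative decomposition $s_i = O_i + \epsilon \Delta_i$, the sketch $Z^\star_k$ becomes a polynomial in $\epsilon$ whose degree-$l$ coefficient is a sum over the size-$l$ subsets $\mathcal{S} \subset [d] - \{k\}$ recording which cores contribute the perturbation $\Delta_i$ rather than the all-ones tensor $O_i$.

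First, I would collapse the left and right sketches into a single subgraph TTNS function on $[d] - \{k\}$. By Condition \ref{cond: recurisve sketching} together with the directional symmetry of Condition \ref{cond: perturbative sketching symmetry} ($t_i = s_i$ and $l_e = m_e$), both $S_k$ and $T_k$ are built from the common cores $\{s_i\}_{i \neq k}$. Removing $k$ from $T$ disconnects it into the subtrees spanning $\leftside(k)$ and $\rightside(k)$, so Definition \ref{defn: TTNS on subgraphs} yields
\[
S_k(\beta_{(k, \child(k))}, x_{\leftside(k)}) \, T_k(x_{\rightside(k)}, \beta_{(k, \parent(k))}) = f_{[d] - \{k\}}(x_{[d] - \{k\}}, \beta_{(k, \neighbor(k))}).
\]
Substituting into \eqref{eqn: def of Z k} reduces the sketch to a single contraction
\[
Z^\star_k(x_k, \beta_{(k, \neighbor(k))}) = \sum_{x_{[d] - \{k\}}} p^\star(x_{[d]}) \, f_{[d] - \{k\}}(x_{[d] - \{k\}}, \beta_{(k, \neighbor(k))}).
\]

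Second, I would expand $f_{[d] - \{k\}}$ by distributing the product $\prod_{i \neq k} (O_i + \epsilon \Delta_i)$. This produces one term per subset $\mathcal{S} \subset [d] - \{k\}$, weighted by $\epsilon^{|\mathcal{S}|}$. Since each $O_i \equiv 1$, the non-$\mathcal{S}$ factors drop out of the integrand and what remains is $\prod_{i \in \mathcal{S}} \Delta_i$. Contracting the $\beta_e$-sums for $e \in E_\mathcal{S}$ assembles $\Delta_\mathcal{S}$ in the sense of \eqref{eqn: def of Delta S}; pushing the sum over $x_{([d] - \{k\}) - \mathcal{S}}$ through $p^\star$ produces the marginal $\mathcal{M}_{\mathcal{S} \cup \{k\}} p^\star(x_k, x_\mathcal{S})$; and the remaining $\beta_e$-sums with $k \notin e$ either contract against the boundary of $\Delta_\mathcal{S}$ (when $e \in \partial \mathcal{S}$) or contribute only constant bond-dimension factors (when neither endpoint of $e$ lies in $\mathcal{S}$). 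The edges $e \in \mathcal{E}(k)$ are never summed over and so form the free argument $\beta_{(k, \neighbor(k))}$ of $Z^\star_{k; \mathcal{S}}$. Regrouping by $|\mathcal{S}| = l$ reassembles \eqref{eq: structural form} with $Z^\star_{k; \mathcal{S}}$ of the form \eqref{eq: structural form of  Z star kS}.

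The main obstacle is the bookkeeping of the four edge classes that arise: internal edges $E_\mathcal{S}$ absorbed into $\Delta_\mathcal{S}$, outer-boundary edges $\partial \mathcal{S} \setminus \mathcal{E}(k)$ contracted against the free legs of $\Delta_\mathcal{S}$, boundary-to-$k$ edges $\partial \mathcal{S} \cap \mathcal{E}(k)$ retained as components of $\beta_{(k, \neighbor(k))}$, and purely non-$\mathcal{S}$ edges in $E_{([d]-\{k\}) - \mathcal{S}}$. The unified summation $\sum_{\beta_e, k \notin e}$ in \eqref{eq: structural form of  Z star kS} conflates three of these classes at once, and I would verify carefully that the trivial bond-dimension prefactors picked up from summing inert $\beta_e$'s line up on both sides of the identity. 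Once this combinatorial alignment is performed for a fixed $\mathcal{S}$, collecting the powers of $\epsilon$ over $|\mathcal{S}| = 0, 1, \ldots, d-1$ finishes the proof.
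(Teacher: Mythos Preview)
Your proposal is correct and mirrors the paper's own proof almost step for step: the paper likewise packages $S_k$ and $T_k$ into a single subgraph TTNS function $H_k$ on $[d]\setminus\{k\}$ with cores $\{s_i\}_{i\neq k}$, expands $\prod_{i\neq k}(O_i+\epsilon\Delta_i)$ by multilinearity, drops the $O_j$'s, and then pushes the $x$-sums over the complement of $\mathcal{S}$ through $p^\star$ to produce the marginal. Your flagged ``obstacle'' about inert $\beta_e$-sums is exactly the one loose thread the paper does not spell out---its formula keeps the blanket summation $\sum_{\beta_e,\,k\notin e}$ outside $\Delta_{\mathcal{S}}$ so that any constant bond-dimension factors are absorbed uniformly on both sides---so your instinct to check that alignment is well placed, but no new idea is needed beyond what you already describe.
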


The proof is simple and left in the Appendix. There are a few consequence of Theorem \ref{thm: perturbative sketching form}. First, each \(Z^{\star}_{k; \mathcal{S}}\) is a projection of the marginal distribution tensor \(\mathcal{M}_{\mathcal{S} \cup \{k\}} p^{\star}\). Second, in \eqref{eq: structural form}, terms corresponding to \(\mathcal{S}\) is scaled by \(\epsilon^{|\mathcal{S}|}\), which itself means that contribution of \(\mathcal{S}\) with large cardinality is insignificant. By the description in Section \ref{sec: main idea of TTNS-Sketch}, with perturbative sketching function, TTNS-Sketch essentially tries to fit over all \(\mathcal{M}_{\mathcal{S}}p^{\star}\), and a \(\epsilon^{|\mathcal{S}|}\) factor is placed to ensure \(\hat Z_{k}\) stabilizes quickly.

Moreover, if \(\mathcal{S} \cup \{k\}\) is not a connected component of \(T\), then according to \eqref{eq: structural form of  Z star kS}, \(Z^{\star}_{k; \mathcal{S}}\) does not vary with \(x_{s}\) or \(\beta_{(k, \neighbor(k))}\). Such \(Z^{\star}_{k; \mathcal{S}}\) has no contribution to \(Z^{\star}_{k}\) except for on a linear subspace spanned by an all-one tensor.
Hence, the subsets \(\mathcal{S}\) with nontrivial contribution has a one-to-one correspondence with connected components of \(T\) which contains \(k\). 

Importantly, the number of connected components of \(T\) both containing \(k\) and having a small cardinality only depends on the local topology of \(T\) around \(k\). In the numerical examples, one can see that perturbative sketching performs quite well if the interaction is local, and it is more adaptable than Markov sketch functions or high-order Markov sketch functions. 

In our numerical experiments, we keep a fixed design on the the perturbative scale \(\epsilon\), but there is slight numerical benefit in tuning \(\epsilon\). In general, the parameter \(\epsilon\) should decrease with sample size. The decay rate in \(N\) depends on how much marginal distribution information is needed to determine \(G_{k}^{\star}\). In practice, one can simply choose a decay rate of \(\epsilon  :=  cN^{-f}\), with the parameter \(c,f\) determined by cross validation.

As a remark, theoretically by applying Section \ref{sec:sample-comlexity}, one would be able to derive the dependence of sample complexity on \(\epsilon\), and tune \(\epsilon\) accordingly. To capture the dependence on \(\epsilon\) accurately, the reader is advised to use the tighter original version of the Matrix Bernstein inequality (cf. Theorem 6.1.1 in \cite{tropp2015introduction}). The rigorous account on the choice of \(\epsilon\) will be left for future works.

\section{Sample complexity bound of TTNS-Sketch}\label{sec:sample-comlexity}

This section gives an upper bound for the sample complexity of TTNS-Sketch. Within this section, we only consider the case where the sketch functions satisfies recursive sketching in the sense of Condition \ref{cond: recurisve sketching}. In this simplified case, one can use the alternative definition of \(A_{k}\) in \eqref{eqn: A_k under recursive sketching}. As an application, we obtain a sample complexity bound to the simple case where \(p^{\star}\) is a graphical model over a tree \(T\), and the sketching function is the Markov sketch function. Here is an informal version of the obtained generalized sample complexity theorem:
\begin{theorem}(Informal statement of Theorem \ref{thm:sample-complexity})

Let $p^\star \colon [n_1] \times \cdots \times [n_d] \to \mathbb{R}$ satisfy the TTNS assumption in Condition \ref{cond: TTNS ansatz condition}. Let sketch function $\{T_i, S_i\}_{i=1}^{d}$ which satisfy recursive sketching in Condition \ref{cond: recurisve sketching}. Let \(\{\hat G_{i}\}_{i=1}^{d}\) be the output of Algorithm \ref{alg:1}, and let \(\PTS\) denote the TTNS ansatz formed by \(\{\hat G_{i}\}_{i=1}^{d}\). There exists problem-dependent constant \(\zeta, L\) such that, for $\eta \in (0, 1)$ and $\epsilon \in (0, 1)$, if
\begin{equation*}
    N \ge \frac{18L^2d^2 + 4L\epsilon \zeta d}{\zeta^2 \epsilon^2}\log{\left(\frac{(ln + m)d}{\eta}\right)},
\end{equation*}
and the constants are defined as follows:
\begin{itemize}
    \item $l = \max_{k \in [d]} l_k$, where $l_k = \prod_{w \in \child(k)}l_{(w, k)}$
    \item $m = \max_{k \in [d]} m_k$, where $m_{k} = m_{(k , \parent(k))}$.
    \item $n = \max_{k \in [d]} n_k$.
\end{itemize}
Then with probability at least $1 - \eta$ one has
\begin{equation*}
    \frac{\|\PTS - p^{\star}\|_\infty}{\normi{G_1^\star} \cdots \normi{G_d^\star}} \le \epsilon,
\end{equation*}
where the notation \(\normi{G^\star_{i}}\) is to be defined later. Moreover, the sample complexity upper bound for \(N\) does not explicitly depend on \(\{n_{k}\}_{k \in [d]}, \{m_{e}, n_{e}\}_{e \in E}\) except in the log factor.
\end{theorem}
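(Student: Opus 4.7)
The plan is to trace the randomness back to its source and propagate the error layer by layer. All stochasticity enters through the empirical sketches $\hat Z_{k}$ and $\hat Z_{w \to k}$, each of which is an i.i.d.\ average of $N$ bounded random matrices (bounded because the sketch cores in recursive sketching are fixed and the indicators in \eqref{eqn: def of tilde phi k}--\eqref{eqn: def of tilde phi w to k} are uniformly bounded). The strategy is to (i) control each sketch by the matrix Bernstein inequality, (ii) pass through the truncated SVD via the Wedin theorem, (iii) propagate to $\hat A_k$, $\hat B_k$, and $\hat G_k$ using the simplified recursive formula \eqref{eqn: A_k under recursive sketching}, and (iv) convert per-core errors into the $\ell_\infty$ bound on $\PTS - p^\star$ by telescoping.

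For step (i), I would apply Theorem 6.1.1 of \cite{tropp2015introduction} to the appropriate unfoldings of $\hat Z_k - Z^\star_k$ and $\hat Z_{w \to k} - Z^\star_{w \to k}$. A uniform bound $L$ on the per-sample contribution produces, for each $k$, an event of the form
\begin{equation*}
\|\hat Z_k - Z^\star_k\|_{\mathrm{op}} \;\lesssim\; L\sqrt{\tfrac{\log((ln+m)d/\eta)}{N}},
\end{equation*}
and a union bound over the at-most $2d$ sketches pays a factor of $d$ inside the logarithm. The dimension-dependent prefactor $(ln+m)$ comes from the usual trace-dimension term in Bernstein applied to matrices of size $(l_k n_k) \times m_k$. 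For step (ii), the Wedin theorem converts this operator-norm bound into a bound on $\hat U_k - U^\star_k$ and $\hat Q_k - Q^\star_k$ (after fixing the $\mathrm{O}(r)$ rotational gauge), with a constant proportional to $1/\sigma_{r_{(k,\parent(k))}}(Z^\star_k)$; this quantity, together with $\sigma_{\min}(A^\star_k)$, is absorbed into the problem-dependent constant $\zeta$.

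Step (iii) uses the fact that under recursive sketching $\hat A_k$ is a product over children of contractions of the sketch core $s_w$ with $\hat B_w$ (from \eqref{eqn: A_k under recursive sketching}), while $\hat B_k = \hat U_k$ by the gauge in Condition \ref{cond: ttns gauge choice}. A first-order expansion of the product shows $\|\hat A_k - A^\star_k\|$ grows linearly in the children's $\hat B_w$-errors with a prefactor at most $|\child(k)|$ times a product of stable norms — this is where the $\Delta(T)$ factor in the final sample complexity enters. Solving the least-squares system $\hat A_k \hat G_k = \hat B_k$ and invoking standard perturbation bounds for linear systems (e.g.\ $\|\hat G_k - G^\star_k\|$ controlled by $\sigma_{\min}(A^\star_k)^{-1}(\|\hat A_k - A^\star_k\| \|G^\star_k\| + \|\hat B_k - B^\star_k\|)$) yields a uniform per-core error bound.

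Step (iv) converts these per-core bounds into the global statement by the telescoping identity
\begin{equation*}
\PTS - p^\star \;=\; \sum_{i=1}^{d} \Bigl(\text{contract } \hat G_1,\dots,\hat G_{i-1},\, \hat G_i - G^\star_i,\, G^\star_{i+1},\dots,G^\star_d\Bigr),
\end{equation*}
where each summand is bounded in $\ell_\infty$ by $\|\hat G_i - G^\star_i\|_{\infty}$ times $\prod_{j\neq i}\normi{\cdot}$ of the remaining cores, and the relevant tensor-network norm $\normi{\cdot}$ is exactly the one that multiplies through contractions of TTNS tensor cores. Summing over $i$ produces the $d$ in $N = O(\Delta(T)^2 d^2)$ (one factor of $d$) together with the normalization $\prod_j \normi{G_j^\star}$ on the right-hand side. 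The main obstacle will be keeping the error-propagation constants tree-topology independent except through $\Delta(T)$: naively the recursion in step (iii) could compound multiplicatively down each root-to-leaf path, which would give an exponential-in-depth blow-up; to avoid this, I would work with the recursive sketching formulation throughout so that the dependence of $\hat A_k$ on descendant cores is already absorbed into the $\hat B_w$ terms, yielding only the neighbor-wise (degree-$\Delta(T)$) accumulation that appears in the stated bound.
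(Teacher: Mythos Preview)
Your four-step plan matches the paper's proof almost exactly: the paper combines matrix Bernstein (Corollary~\ref{cor: cor of Matrix Bernstein}) with a union bound over the $d$ sketches $\hat Z_k$, passes through Wedin (Corollary~\ref{cor: cor of Wedin}) and the Kronecker-product perturbation (Lemma~\ref{lem:kronecker_product_bound}) to bound the deviations of $\hat A_k, \hat B_k$, applies the least-squares perturbation bound (Lemma~\ref{lem:tensor-eq-perturbation}), and telescopes via Lemma~\ref{lem:three-norm-total-contraction-perturbation}. One minor simplification: under recursive sketching the $\hat Z_{w\to k}$'s are deterministic functions of the $\hat Z_w$'s (equation~\eqref{eqn: phi w to k under recursive sketching}), so Bernstein is only needed on $d$ matrices, not $2d$.

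The one point that would actually fail as written is your step~(iv). Wedin only controls $\min_{R\in \operatorname{O}(r)} \|\hat B_k - B_k^\star R\|$, so $\hat G_k$ is close not to $G_k^\star$ but to a \emph{rotated} core $G_k^\circ = \bigl(\bigotimes_{w\in\child(k)} R_w^\top\bigr) \circ G_k^\star \circ R_k$. Your telescoping identity mixes $\hat G_i$'s with $G_j^\star$'s across the tree, but these live in incompatible gauges: the internal-bond indices of $\hat G_{i-1}$ and $G_i^\star$ are related by an unknown $R_{i-1}$, so the contraction you wrote is not the intended object and need not be close to either $\PTS$ or $p^\star$. The fix, carried out in Theorem~\ref{thm:rotated-perturbation}, is to first verify that $\{G_i^\circ\}_{i=1}^d$ still contracts to $p^\star$ (the $R_k$ and $R_k^\top$ appearing at the two ends of each edge cancel) and that $\normi{G_i^\circ} = \normi{G_i^\star}$ (orthogonal matrices preserve the slice spectral norms), and then run the telescoping against the $G_i^\circ$'s rather than the $G_i^\star$'s. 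You allude to this gauge fixing in step~(ii), but it has to be threaded consistently through steps~(iii) and~(iv) for the argument to close.
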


We give a summary of organization of this section. In Section \ref{sec: 3-tensor preliminrary}, we introduce notations and conventions which are important for sample complexity analysis. In Section \ref{sec: local error to global}, we prove small perturbations of the cores lead to small perturbations of the obtained TTNS ansatz. In Section \ref{sec: local error and sample complexity}, we prove that small error in the estimator \(\hat{Z}_k\) leads to small perturbation of the cores, leading to an upper bound for the sample complexity of TTNS-Sketch in Theorem \ref{thm:sample-complexity}. In Section \ref{sec: proof of sample complexity results}, we give a proof of all the lemmas and corollaries. In Section \ref{sec: remark to l_1 sample complexity}, we remark how our derived results can be extended sample complexity bounds for total variation distance.

\subsection{Preliminaries}
\label{sec: 3-tensor preliminrary}

In what follows, for a given vector $v$, let $\|v\|$ and $\|v\|_\infty$ denote its Euclidean norm and its supremum norm, respectively. For any matrix $M$, denote its spectral norm, Frobenius norm, and the $r$-th singular value by $\|M\|$, $\|M\|_F$, and $\sigma_r(M)$, respectively. Also, for a generic tensor \(p\), let $\|p\|_\infty$ denote the largest absolute value of the entries of $p$. Lastly, the orthogonal group in dimension $r$ is denoted by $\operatorname {O}(r)$.

A mathematical structure important in this section is 3-tensors. Similar to unfolding matrix in Definition \ref{def: unfolding matrix}, the 3-tensors we typically use come from viewing high-dimensional tensors in terms of \(3\)-tensors by grouping joint variables:
\begin{defn}(Unfolding 3-tensor Notation)
    For a generic \(d\)-dimensional tensor $p \colon [n_1] \times \cdots \times [n_d] \to \bR$ and for three disjoint subsets \(\mathcal{U}, \mathcal{V}, \mathcal{W}\) with $\mathcal{U} \cup \mathcal{V} \cup \mathcal{W} = [d]$, we define the corresponding \emph{unfolding 3-tensor} by $p(x_{\mathcal{U}}; x_{\mathcal{V}}; x_{\mathcal{W}})$. The 3-tensor $p(x_{\mathcal{U}}; x_{\mathcal{V}}; x_{\mathcal{W}})$ is of size \(\left[\prod_{i \in \mathcal{U}}n_{i}\right] \times \left[\prod_{j \in \mathcal{V}}n_{j}\right] \times \left[\prod_{k \in \mathcal{W}}n_{k}\right] \to \bR\). 
\end{defn}

It is helpful to introduce a slice of the 3-tensor. In our convention, we only need to consider taking slice at the second component:
\begin{defn}
\label{def: Matrix slice of 3-tensor} (Middle index slice of 3-tensor)
For any 3-tensor $G \colon [r_{1}] \times [n_{1}] \times [r_{2}] \to \mathbb{R}$, we use $G(\cdot, x, \cdot) \colon [r_{1}] \times [r_{2}] \to \mathbb{R}$ to denote an \(r_{1} \times r_{2}\) matrix obtained by fixing the second slot of \(G\) to be \(x\). 
\end{defn}

In Definition \ref{def: 3 tensors norm}-\ref{def: 3 tensors product}, we introduce a new norm and two operations for $3$-tensors. 
\begin{defn}\label{def: 3 tensors norm}(\(\normi{\cdot}\) norm for 3-tensors)
Define the norm \(\normi{G}\) by
\begin{equation}\label{eqn: 3-tensor norm}
    \normi{G}  :=  \max_{x \in [n_{1}]} \|G(\cdot, x, \cdot)\|.
\end{equation}

\end{defn}
\begin{defn}\label{def: 3 tensors contraction}(contraction operator for 3-tensors)
Let \(G \colon [r_{1}] \times [n_{1}] \times [r_{2}] \to \mathbb{R}, G' \colon [r_{3}] \times [n_{2}] \times [r_{4}] \to \mathbb{R}\) be two 3-tensors. Under the assumption \(r_{2} = r_{3}\), define the 3-tensor \(G \circ G' \colon [r_{1}] \times [n_{1} \times n_{2}] \times [r_{4}] \to \mathbb{R}\) by
\begin{equation}\label{eqn: circ operation}
    G \circ G'(\alpha; (x, y); \gamma) = \sum_{\beta \in [r_{2}]} G(\alpha, x, \beta) G'(\beta, y, \gamma).
\end{equation}


\end{defn}

\begin{defn}\label{def: 3 tensors product}(tensor product operator for 3-tensors)
Let \(G \colon [r_{1}] \times [n_{1}] \times [r_{2}] \to \mathbb{R}, G' \colon [r_{3}] \times [n_{2}] \times [r_{4}] \to \mathbb{R}\) be two 3-tensors. Define the 3-tensor \(G \otimes G' \colon [r_{1} \times r_{3}] \times [n_{1} \times n_{2}] \times [r_{2} \times r_{4}] \to \mathbb{R}\) by
\begin{equation}\label{eqn: prod operation}
    G \otimes G'((\alpha, \beta); (x, y); (\gamma, \theta)) =  G(\alpha, x, \beta) G'(\gamma, y, \theta).
\end{equation}

\end{defn}

We summarize the simple properties of the defined operation in Lemma \ref{lem: 3-tensor basic results}, which will be useful for our derivations:
\begin{lemma}
\label{lem: 3-tensor basic results}
The following results hold:
\begin{enumerate}[(i)]
    \item  Associativity of \(\circ\) holds:
    \begin{equation}
        (G \circ G') \circ G'' = G \circ (G' \circ G'').
    \end{equation}
    \item Associativity of \(\otimes\) holds:
    \begin{equation}
         (G \otimes G') \otimes G'' = G \otimes (G' \otimes G'').
    \end{equation}
    \item Inequality of \(\circ\) under \(\normi{\cdot}\) norm:
    \begin{equation}
    \label{eqn: circ sub-multiplicative}
    \normi{G \circ G'} \leq \normi{G} \cdot \normi{G'} 
    \end{equation}
    \item Equality of \(\otimes\) under \(\normi{\cdot}\) norm:
    \begin{equation}
    \label{eqn: otimes multiplicative}
    \normi{G \otimes G'} = \normi{G} \cdot \normi{G'}
    \end{equation}
    \item 
    \label{enumerate: three tensor spectral norm bound}
    For a three tensor  \(G(\alpha, x, \beta) \colon [r_{1}] \times [n] \times [r_{2}] \to \mathbb{R}\), denote \(G(\alpha, x; \beta)\colon [r_{1}n] \times [r_{2}] \to \mathbb{R}\) as the unfolding matrix by grouping the first and second index of \(G\). One has
    \begin{equation}
        \normi{G} \le \|G(\alpha, x; \beta)\| \le n\normi{G}
    \end{equation}
\end{enumerate}
\end{lemma}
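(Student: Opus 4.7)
The lemma is a catalogue of small structural identities about the two operations $\circ$ and $\otimes$ on $3$-tensors, and the plan is to prove each part directly from the definitions in \eqref{eqn: circ operation}--\eqref{eqn: prod operation} together with the slice definition of $\normi{\cdot}$ in \eqref{eqn: 3-tensor norm}.

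For the associativity claims (i) and (ii) I would simply expand an arbitrary entry of both sides and observe that they agree. For $\circ$, both $((G\circ G')\circ G'')(\alpha;(x,y,z);\delta)$ and $(G\circ(G'\circ G''))(\alpha;(x,y,z);\delta)$ unfold to the triple sum $\sum_{\beta,\gamma} G(\alpha,x,\beta)G'(\beta,y,\gamma)G''(\gamma,z,\delta)$; for $\otimes$ there are no sums at all, and the entries of either bracketing are the triple product $G(\alpha,x,\beta)G'(\gamma,y,\theta)G''(\mu,w,\nu)$, so the identity is immediate.

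For the norm identities (iii) and (iv) the key observation is that the middle-index slices behave nicely under each operation: $(G\circ G')(\cdot,(x,y),\cdot)$ is exactly the matrix product $G(\cdot,x,\cdot)\,G'(\cdot,y,\cdot)$, while $(G\otimes G')(\cdot,(x,y),\cdot)$ is the Kronecker product of $G(\cdot,x,\cdot)$ and $G'(\cdot,y,\cdot)$ (up to the index relabelling consistent with the declared shape in Definition \ref{def: 3 tensors product}). Sub-multiplicativity of the spectral norm under matrix products then yields (iii) after taking the maximum over $(x,y)$. For (iv) one combines the identity $\|A\otimes B\| = \|A\|\,\|B\|$ for Kronecker products with the fact that $\max_{x,y} f(x)g(y) = \left(\max_x f(x)\right)\left(\max_y g(y)\right)$ for nonnegative $f,g$, which gives equality rather than an inequality.

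For (v) I would view the unfolding $G(\alpha,x;\beta)$ as the vertical stacking of the $n$ slice matrices $G(\cdot,x,\cdot)$. For any unit vector $v\in\mathbb{R}^{r_2}$ this stacking yields $\|G(\alpha,x;\beta)v\|^2 = \sum_{x'\in[n]} \|G(\cdot,x',\cdot)v\|^2$; retaining a single term on the right gives $\|G(\alpha,x;\beta)\|\ge \|G(\cdot,x,\cdot)\|$ for each $x$ and hence $\|G(\alpha,x;\beta)\|\ge \normi{G}$, while bounding the entire sum by $n\,\normi{G}^2\|v\|^2$ yields $\|G(\alpha,x;\beta)\|\le \sqrt{n}\,\normi{G}\le n\,\normi{G}$, which is the stated (non-tight) upper bound. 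None of these steps poses a real obstacle: the lemma is deliberately elementary and exists to set up clean notation for the perturbation arguments in Sections~\ref{sec: local error to global}--\ref{sec: local error and sample complexity}. The only thing to watch is bookkeeping of which index groupings form rows and columns in each unfolding so that the matrix-product and Kronecker-product structure can be read off cleanly.
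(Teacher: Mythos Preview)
Your proposal is correct and follows essentially the same approach as the paper: reduce $\circ$ and $\otimes$ to matrix and Kronecker products on the middle-index slices, invoke associativity and (sub)multiplicativity of the spectral norm, and for (v) view the unfolding as a vertical stack of slice matrices. Your treatment of (v) via $\|G(\alpha,x;\beta)v\|^2 = \sum_{x'} \|G(\cdot,x',\cdot)v\|^2$ is in fact cleaner than the paper's write-up and yields the sharper constant $\sqrt{n}$, which you correctly note still implies the stated bound $n$.
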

As a consequence of associativity, given any collection of 3-tensors \(\{G_{i}\}_{i=1}^{d}\), one can define \(G_{1} \otimes G_{2} \otimes \ldots \otimes G_{d}\). Moreover, if the collection is such that the size of the third index of \(G_{i}\) coincides with the first index of \(G_{i+1}\), then one can naturally define the 3-tensor \(G_{1} \circ G_{2} \circ \ldots \circ G_{d}\).

\subsection{3-tensor structure for TTNS}
\label{sec: local error to global}

For cleaner analysis, one often gives unfolding matrices a 3-tensor structure:
\begin{defn}\label{def: matrix 3-tensor structure}(3-tensor structure for unfolding matrix)
    Consider a generic \(D\)-dimensional tensor $f \colon [n_1] \times \cdots \times [n_D] \to \bR$. Moreover, suppose one picks a disjoint union \(\mathcal{U} \cup \mathcal{V} = [D]\) and forms an unfolding matrix \(f(x_{\mathcal{U}}; x_{\mathcal{V}})\) in the sense of Definition \ref{def: unfolding matrix}. Define \(f(x_{\mathcal{U}}; 1; x_{\mathcal{V}})\) as the 3-tensor of size \(\left[\prod_{i \in \mathcal{U}}n_{i}\right] \times \left\{1\right\} \times \left[\prod_{j \in \mathcal{V}}n_{j}\right] \to \bR\). One likewise defines 3-tensor structure of \(f(1; x_{\mathcal{U}}; x_{\mathcal{V}})\), whose first index is of size \(1\), and \(f( x_{\mathcal{U}}; x_{\mathcal{V}}; 1)\), whose third index is of size \(1\).
\end{defn}

A tensor core from a TTNS ansatz has a default 3-tensor view, whereby the indices are grouped according to tree topology:
\begin{defn}\label{def: 3-tensor view of TTNS tensor core}
    (3-tensor structure for TTNS tensor cores)
    Suppose a tensor \(p\) is defined by a collection of tensor cores $\{G_i\}_{i = 1}^{d}$ in the sense of Definition \ref{def: TTNS short def}. 
    
    If \(k\) is neither a root node nor a leaf node, then  $G_{k}$ is viewed with the 3-tensor unfolding structure  $$G_k(\alpha_{(k, \child(k))}; x_k; \alpha_{(k, \parent(k))}) \colon \left[\prod_{w \in \child(k)} r_{(w, k)}\right] \times [n_{k}] \times [r_{(k, \parent(k))}] \to \bR.$$
    
    If \(k\) is a leaf node, then 
    $G_{k}$ is viewed with the 3-tensor unfolding structure  $$G_k(1 ; x_k; \alpha_{(k, \parent(k))}) \colon \left\{1\right\} \times [n_{k}] \times [r_{(k, \parent(k))}] \to \bR.$$
    
    If \(k\) is the root node, then 
    $G_{k}$ is viewed with the 3-tensor unfolding structure  $$G_k(\alpha_{(k, \child(k))} ; x_k; 1) \colon \left[\prod_{w \in \child(k)} r_{(w, k)}\right] \times [n_{k}] \times \left\{1\right\} \to \bR.$$
\end{defn}

With this design of norms, one can prove the following result by simple algebra:
\begin{lemma}
\label{lem:three-norm-contraction-bound}
Suppose a generic tensor \(p\colon [n_1] \times \cdots \times [n_d] \to \bR\) is defined by a collection of tensor cores $\{G_i\}_{i = 1}^{d}$ in the sense of Definition \ref{def: TTNS short def}. Moreover, suppose the tensor cores are viewed by 3-tensor structures as in Definition \ref{def: 3-tensor view of TTNS tensor core}. Then
\begin{equation*}
    \|p\|_\infty \le \prod_{i = 1}^{d} \normi{G_i}. 
\end{equation*}
\end{lemma}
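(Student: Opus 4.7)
The plan is to prove the inequality by induction on the rooted-tree topology, building the tensor $p$ up from the leaves using the operations $\otimes$ and $\circ$ introduced in Definition \ref{def: 3 tensors contraction}--\ref{def: 3 tensors product}, and then applying the norm identities (iii)--(iv) of Lemma \ref{lem: 3-tensor basic results}. Specifically, for each node $k$ I would define a partial-contraction 3-tensor
\[
H_k \colon \{1\} \times \prod_{j \in \leftside(k)\cup k}[n_j] \times [r_{(k,\parent(k))}] \to \bR
\]
that represents the result of contracting all internal bonds of the subtree rooted at $k$, using the cores $\{G_j\}_{j\in \leftside(k)\cup k}$ with the 3-tensor structure of Definition \ref{def: 3-tensor view of TTNS tensor core}. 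The inductive claim is
\[
\normi{H_k} \le \prod_{j\in\leftside(k)\cup k}\normi{G_j}.
\]

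For the base case, if $k$ is a leaf, then $H_k = G_k$ by Definition \ref{def: 3-tensor view of TTNS tensor core}, and the inequality is trivial. For the inductive step, if $k$ has children $w_1,\ldots,w_c$, I would set
\[
H_k \;=\; \bigl(H_{w_1}\otimes \cdots \otimes H_{w_c}\bigr)\circ G_k,
\]
where the tensor product collects the $c$ child subtrees into a single 3-tensor whose third index is the joint bond $\alpha_{(k,\child(k))}$, and the $\circ$-product contracts this joint bond against the first index of $G_k$. The key verification is that this construction reproduces the TTNS contraction \eqref{eq:ttn-contraction} restricted to the subtree at $k$; this follows because the subtrees rooted at distinct $w_i$ share no internal bonds, so the sums over $\alpha_e$ for $e$ in different child subtrees factor, matching the definition of $\otimes$. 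Applying \eqref{eqn: otimes multiplicative} and \eqref{eqn: circ sub-multiplicative} then gives
\[
\normi{H_k} \le \normi{G_k}\prod_{i=1}^{c}\normi{H_{w_i}} \le \prod_{j\in\leftside(k)\cup k}\normi{G_j}.
\]

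Finally, at the root $r$, the 3-tensor $H_r$ has both first and last indices of size $1$, so $H_r(1, (x_1,\ldots,x_d), 1) = p(x_1,\ldots,x_d)$, and $\normi{H_r} = \max_{x}|H_r(1,x,1)| = \|p\|_\infty$. Combining with the inductive bound at $r$, which ranges over all of $[d]$, yields the desired $\|p\|_\infty \le \prod_{i=1}^d \normi{G_i}$.

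The only nontrivial step is the bookkeeping in the inductive step: checking that the shapes of $H_{w_1},\ldots,H_{w_c}$ are compatible with $\otimes$ and then with $\circ G_k$, and that the resulting entries agree with what one gets from \eqref{eq:ttn-contraction} restricted to the subtree at $k$. Once this is set up carefully, the norm bound is a direct double application of Lemma \ref{lem: 3-tensor basic results}. I would not expect any difficulty beyond the notational care needed to handle the tree branching correctly; the root and leaf cases are handled transparently by the size-$1$ conventions in Definition \ref{def: 3-tensor view of TTNS tensor core}.
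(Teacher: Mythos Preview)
Your proposal is correct and relies on the same mechanism as the paper: write the TTNS contraction in terms of the $\circ$ and $\otimes$ operations and then apply parts (iii)--(iv) of Lemma~\ref{lem: 3-tensor basic results}. The only difference is organizational: the paper groups all cores at a fixed distance $l$ from the root via $\otimes$ and then chains the levels together by $\circ$ in one shot, whereas you build the partial contractions $H_k$ recursively over subtrees. Your subtree recursion is arguably cleaner for general trees (it requires no care about leaves occurring at different depths), but the underlying argument and the key inequalities invoked are identical.
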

Using Lemma \ref{lem:three-norm-contraction-bound}, one can bound global errors by errors in tensor cores:
\begin{lemma}
\label{lem:three-norm-total-contraction-perturbation}
In Lemma \ref{lem:three-norm-contraction-bound}, let $\Delta G_k$ be a perturbation of $G_k$. Define a tensor \(p'\colon [n_1] \times \cdots \times [n_d] \to \bR\) by tensor cores $\{G_k + \Delta G_k\}_{k = 1}^{d}$ in the sense of Definition \ref{def: TTNS short def}, with the tree topology \(T\) and the internal rank \(\{r_{e}\}_{e \in E}\) the same as that of \(p\). Suppose $\normi{\Delta G_k} \le \delta_k \normi{G_k}$ for all $k \in [d]$, and set \(\Delta p  :=  p' - p\). Then,
\begin{equation*}
    \|\Delta p\|_\infty 
    \le
    \normi{G_1} \cdots \normi{G_d} \left(\sum_{i = 1}^{d} \delta_i\right) \exp\left(\sum_{i = 1}^{d} \delta_i\right).
\end{equation*}
If $\max_{k \in [d]} \delta_k \le \epsilon / (3 d)$ for some fixed $\epsilon \in (0, 1)$, 
\begin{equation}
\label{eqn:three-norm-total-contraction-perturbation}
    \frac{\|\Delta p\|_\infty}{\normi{G_1} \cdots \normi{G_d}} \le \epsilon.
\end{equation}
\end{lemma}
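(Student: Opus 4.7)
The plan is to expand $p'$ by multilinearity of the TTNS contraction and then sum term-by-term using Lemma \ref{lem:three-norm-contraction-bound}. Since both $p$ and $p'$ arise from the \emph{same} tree $T$ and the \emph{same} collection of bond dimensions $\{r_e\}_{e\in E}$, we may view the contraction map
\[
\Theta(H_1,\ldots,H_d)(x_1,\ldots,x_d) \;=\; \sum_{\alpha_e,\,e\in E}\prod_{k=1}^{d} H_k\bigl(x_k,\alpha_{(k,\neighbor(k))}\bigr)
\]
as a multilinear function of the cores. Then $p=\Theta(G_1,\ldots,G_d)$ and $p'=\Theta(G_1+\Delta G_1,\ldots,G_d+\Delta G_d)$, so by multilinearity
\[
p' \;=\; \sum_{S\subseteq[d]} \Theta\bigl(H_1^{S},\ldots,H_d^{S}\bigr), \qquad H_k^S := \begin{cases}\Delta G_k,& k\in S\\ G_k,& k\notin S,\end{cases}
\]
and therefore $\Delta p = p'-p = \sum_{\emptyset\neq S\subseteq[d]}\Theta(H_1^S,\ldots,H_d^S)$.

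Next I would bound each summand. Each $\Theta(H_1^S,\ldots,H_d^S)$ is itself a TTNS-style contraction over $T$ (with the same ranks), and the $3$-tensor viewpoint of Definition \ref{def: 3-tensor view of TTNS tensor core} is preserved because the $\Delta G_k$ share the shape of $G_k$. Hence Lemma \ref{lem:three-norm-contraction-bound} applies and yields
\[
\bigl\|\Theta(H_1^S,\ldots,H_d^S)\bigr\|_\infty \;\le\; \prod_{k\in S}\normi{\Delta G_k}\,\prod_{k\notin S}\normi{G_k} \;\le\; \Bigl(\prod_{k=1}^{d}\normi{G_k}\Bigr)\prod_{k\in S}\delta_k,
\]
using the hypothesis $\normi{\Delta G_k}\le \delta_k\normi{G_k}$. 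Applying the triangle inequality and summing over all nonempty $S$,
\[
\|\Delta p\|_\infty \;\le\; \Bigl(\prod_{k=1}^{d}\normi{G_k}\Bigr)\Bigl(\prod_{k=1}^{d}(1+\delta_k)-1\Bigr).
\]

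To obtain the stated form, I would apply the elementary inequalities $\prod_{k}(1+\delta_k)\le \exp\bigl(\sum_k\delta_k\bigr)$ and $e^{x}-1\le x e^{x}$ for $x\ge 0$, giving
\[
\|\Delta p\|_\infty \;\le\; \normi{G_1}\cdots\normi{G_d}\,\Bigl(\sum_{i=1}^{d}\delta_i\Bigr)\exp\Bigl(\sum_{i=1}^{d}\delta_i\Bigr),
\]
which is exactly the claimed bound. Finally, if $\max_k \delta_k\le \epsilon/(3d)$ then $\sum_i\delta_i\le \epsilon/3<1/3$, whence $\exp(\sum_i\delta_i)\le e^{1/3}<3$, and multiplying gives $(\sum_i\delta_i)\exp(\sum_i\delta_i)\le (\epsilon/3)\cdot 3 = \epsilon$, establishing \eqref{eqn:three-norm-total-contraction-perturbation}.

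The only conceptual obstacle is justifying multilinearity and the applicability of Lemma \ref{lem:three-norm-contraction-bound} to the mixed contractions $\Theta(H_1^S,\ldots,H_d^S)$; but since that lemma only requires a collection of $3$-tensors with matching bond sizes (which the $\Delta G_k$ inherit from $G_k$), this is immediate. The remainder is bookkeeping plus two standard scalar inequalities.
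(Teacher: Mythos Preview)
Your proof is correct. The paper proceeds slightly differently: instead of the full multilinear expansion over all $2^d$ subsets, it uses a telescoping sum $p'-p=\sum_{k=1}^d(p_k-p_{k-1})$, where $p_k$ has the first $k$ cores perturbed, and then bounds each of the $d$ differences via Lemma~\ref{lem:three-norm-contraction-bound} to obtain $\|\Delta p\|_\infty\le(\prod_i\normi{G_i})(\sum_i\delta_i)\prod_i(1+\delta_i)$. Your expansion yields the tighter intermediate $(\prod_i\normi{G_i})(\prod_i(1+\delta_i)-1)$ at the cost of invoking the extra elementary inequality $e^x-1\le xe^x$; the paper's telescoping argument is shorter (only $d$ terms) and reaches the final form with just $1+x\le e^x$. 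Both routes rest on the same key ingredient, Lemma~\ref{lem:three-norm-contraction-bound}, and the second claim about $\epsilon/(3d)$ is handled identically.
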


\subsection{Derivation for sample complexity of TTNS-Sketch}
\label{sec: local error and sample complexity}

We first give a lemma which bounds the perturbation of solutions of a linear equation $AX = B$, where in particular $X,B$ are two 3-tensors viewed under an unfolding matrix. This result will be the main building block to form our subsequent error analysis:
\begin{lemma}
\label{lem:tensor-eq-perturbation}
Consider a matrix \(A^{\star}(\beta, \alpha) \in \bR^{l \times r}\) with $\mathrm{rank}(A^{\star}) = r \le l$ and a 3-tensor \(B^{\star}(\beta, x, \gamma)\in \mathbb{R}^{l \times n \times m}\) with unfolding matrix structure \(B^{\star}(\beta; (x, \gamma)) \in \mathbb{R}^{l\times (nm)}\). Let \(X^{\star}(\alpha, x, \gamma) \in \bR^{r \times n \times m}\) be the 3-tensor with an unfolding matrix view \(X^{\star}(\alpha; (x, \gamma)) \in \mathbb{R}^{r\times (nm)}\) which uniquely solves the linear equation \(A^{\star}X = B^{\star}\) in the sense of least squares:
\begin{equation*}
    \sum_{\alpha}A^{\star}(\beta, \alpha)X(\alpha, (x, \gamma)) = B^{\star}(\beta, (x, \gamma)).
\end{equation*}

Moreover, let $\Delta B^{\star}\in \mathbb{R}^{l \times n \times m}$ be a perturbation of $B^{\star}$, and let $\Delta A^{\star}\in \mathbb{R}^{l \times n \times m}$ be a perturbation of $A^{\star}$ with $\|\left(A^{\star}\right)^\dagger\| \|\Delta A^{\star}\| < 1$ so that $\mathrm{rank}(A^{\star} + \Delta A^{\star}) = n$. Then, let $\Delta X^{\star}$ be a 3-tensor so that $X^{\star} + \Delta X^{\star}$ which uniquely solves the linear equation \((A^{\star}+\Delta A^{\star})X = (B^{\star}+\Delta B^{\star})\) in the sense of least squares:
\begin{equation*}
    \sum_{\alpha} (A^{\star} + \Delta A^{\star})(\beta, \alpha)X(\alpha, (x, \gamma)) = (B^{\star} + \Delta B^{\star})(\beta, (x, \gamma))
\end{equation*}

Under the unfolding matrix structure, suppose the column space of $B^{\star}$ is contained in that of $A^{\star}$, i.e. $X^{\star}$ solves \(A^{\star}X = B^{\star}\) exactly, one has
\begin{equation}
\label{eqn: Delta X bound tight version}
    \normi{\Delta X^{\star}} \le \frac{\|\left(A^{\star}\right)^\dagger\|}{1 - \|\left(A^{\star}\right)^\dagger\| \|\Delta A^{\star}\|} \left( \|\Delta A^{\star}\| \normi{X^{\star}} +  \normi{\Delta B^{\star}}\right).
\end{equation}

 In particular, if $\normi{X^{\star}}\geq\chi >0$ for some constant $\chi,$ and $\Delta A^{\star}$ satisfies  $\|\left(A^{\star}\right)^\dagger\| \|\Delta A^{\star}\| \le 1 / 2,$ then 
\begin{equation}
\label{eqn: Delta X bound relaxed version}
    \frac{\normi{\Delta X^{\star}}}{\normi{X^{\star}}} \le {2 \|\left(A^{\star}\right)^\dagger\|}\,\left(\|\Delta A^{\star}\| + \chi^{-1}\normi{\Delta B^{\star}}\right).
\end{equation}
\end{lemma}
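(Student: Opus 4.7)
The plan is to exploit the column-space hypothesis to collapse the unperturbed equation to an exact matrix identity, then slice the 3-tensor equation in the middle index $x$ and estimate each $r \times m$ slice in spectral norm.

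First I would use the assumption that the column space of $B^{\star}$ is contained in that of $A^{\star}$ to conclude that $A^{\star}X^{\star} = B^{\star}$ holds exactly as an $l \times (nm)$ matrix identity. This identity holds column-by-column in $(x,\gamma)$, hence slice-by-slice: $A^{\star} X^{\star}(\cdot, x, \cdot) = B^{\star}(\cdot, x, \cdot)$ for every $x \in [n]$. Setting $\tilde A := A^{\star} + \Delta A^{\star}$ and $\tilde B := B^{\star} + \Delta B^{\star}$, the hypothesis $\|(A^{\star})^\dagger\|\|\Delta A^{\star}\| < 1$ combined with Weyl's inequality gives $\sigma_r(\tilde A) \ge 1/\|(A^{\star})^\dagger\| - \|\Delta A^{\star}\| > 0$, so $\tilde A$ retains full column rank $r$, yielding both $\tilde A^\dagger \tilde A = I_r$ and the standard pseudoinverse perturbation bound
\begin{equation*}
    \|\tilde A^\dagger\| = \frac{1}{\sigma_r(\tilde A)} \le \frac{\|(A^{\star})^\dagger\|}{1 - \|(A^{\star})^\dagger\|\|\Delta A^{\star}\|}.
\end{equation*}

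Next I would derive a closed-form expression for $\Delta X^{\star}$. The least-squares normal equation for the perturbed problem gives $\tilde A(X^{\star} + \Delta X^{\star}) = P_{\tilde A}\tilde B$ with $P_{\tilde A} := \tilde A \tilde A^\dagger$. Subtracting the exact identity $A^{\star}X^{\star} = B^{\star}$ and rearranging yields $\tilde A \Delta X^{\star} = P_{\tilde A}\tilde B - B^{\star} - \Delta A^{\star} X^{\star}$. Applying $\tilde A^\dagger$ on the left, using $\tilde A^\dagger \tilde A = I_r$ and $\tilde A^\dagger P_{\tilde A} = \tilde A^\dagger$, the two $B^{\star}$-terms telescope to $\tilde A^\dagger \Delta B^{\star}$, leaving the clean identity
\begin{equation*}
    \Delta X^{\star} = \tilde A^\dagger\bigl(\Delta B^{\star} - \Delta A^{\star} X^{\star}\bigr),
\end{equation*}
which holds slice-by-slice: for each $x$, $\Delta X^{\star}(\cdot, x, \cdot) = \tilde A^\dagger\bigl[\Delta B^{\star}(\cdot, x, \cdot) - \Delta A^{\star} X^{\star}(\cdot, x, \cdot)\bigr]$.

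The final step is to take spectral norms of both sides slicewise, apply submultiplicativity, and maximize over $x \in [n]$; invoking the definition $\normi{G} := \max_x \|G(\cdot, x, \cdot)\|$ converts the slicewise bound to $\normi{\Delta X^{\star}} \le \|\tilde A^\dagger\|\bigl(\normi{\Delta B^{\star}} + \|\Delta A^{\star}\|\normi{X^{\star}}\bigr)$. Plugging in the bound on $\|\tilde A^\dagger\|$ from the first step delivers \eqref{eqn: Delta X bound tight version}. For the relaxed version \eqref{eqn: Delta X bound relaxed version}, the assumption $\|(A^{\star})^\dagger\|\|\Delta A^{\star}\| \le 1/2$ bounds the denominator below by $1/2$, so $\|\tilde A^\dagger\| \le 2\|(A^{\star})^\dagger\|$; dividing through by $\normi{X^{\star}} \ge \chi$ yields the claim. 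The main subtlety I expect is the derivation of the clean formula for $\Delta X^{\star}$: it fundamentally relies on the column-space hypothesis to make $A^{\star}X^{\star} = B^{\star}$ exact rather than only valid in least squares, since without this the general Wedin pseudoinverse difference formula picks up an extra term involving $(I - A^{\star}(A^{\star})^\dagger)B^{\star}$ that need not vanish.
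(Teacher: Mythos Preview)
Your proof is correct and follows the same overall strategy as the paper: slice the 3-tensor by the middle index $x$, then bound each slice in spectral norm and take the maximum. The one substantive difference is in how the slice-level bound is obtained. The paper reduces each slice further to a vector equation by right-multiplying by an arbitrary $v \in \bR^m$, and then invokes an off-the-shelf least-squares perturbation result (Theorem~3.48 in Wendland) for the consistent system $A^\star x = b^\star$. You instead derive the explicit identity $\Delta X^\star = \tilde A^\dagger(\Delta B^\star - \Delta A^\star X^\star)$ directly from $\tilde A^\dagger \tilde A = I_r$ and $A^\star X^\star = B^\star$, then bound $\|\tilde A^\dagger\|$ via Weyl. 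Your route is more self-contained and makes transparent exactly where the column-space hypothesis is used; the paper's route is shorter on the page but outsources the key inequality to a reference. Both arrive at the identical bound.
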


For our use case of Lemma \ref{lem:tensor-eq-perturbation}, the coefficient matrix is viewed as a Kronecker product of some smaller matrices, e.g. \(A^{\star}_{k}\) is formed by $\{A^{\star}_{w \to k}\}_{w \in \child(k)}$. One can bound the $\|\Delta A^{\star}\|$ in this case, as the following lemma shows:
\begin{lemma}
\label{lem:kronecker_product_bound}
Consider a collection of matrices $\{E_i,C_i\}_{i \in [n]}$ such that $E_{i}, C_{i}$ are of the same shape. Moreover, let $\|C_i\| \le 1, \|E_i\| \le \delta_i$. Then
\begin{equation*}
    \left\|\bigotimes_{i = 1}^{n} (C_i + E_i) - \bigotimes_{i = 1}^{n} C_i\right\| 
    \le 
    \left(\sum_{i = 1}^{n} \delta_i\right) \exp\left(\sum_{i = 1}^{n} \delta_i\right).
\end{equation*}
\end{lemma}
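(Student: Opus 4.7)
The plan is to exploit the multiplicativity of the spectral norm under the Kronecker product, namely the identity $\|A \otimes B\| = \|A\| \|B\|$, and reduce the statement to a one-parameter scalar inequality. Given this tool, the cleanest route is a telescoping argument where one swaps the factors $C_i$ for $C_i + E_i$ one index at a time.

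Concretely, I would introduce the hybrid products
\begin{equation*}
    P_k \;=\; \Bigl(\bigotimes_{i \le k}(C_i + E_i)\Bigr) \otimes \Bigl(\bigotimes_{i > k} C_i\Bigr), \qquad k = 0, 1, \ldots, n,
\end{equation*}
so that $P_0 = \bigotimes_i C_i$ and $P_n = \bigotimes_i (C_i + E_i)$. Writing the total difference as the telescoping sum $\sum_{k=1}^n (P_k - P_{k-1})$ isolates the perturbation $E_k$ in a single tensor slot at step $k$:
\begin{equation*}
    P_k - P_{k-1} \;=\; \Bigl(\bigotimes_{i < k}(C_i + E_i)\Bigr) \otimes E_k \otimes \Bigl(\bigotimes_{i > k} C_i\Bigr).
\end{equation*}

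Next, I would apply the norm-multiplicativity of $\otimes$ factor by factor, using $\|C_i\| \le 1$, $\|E_k\| \le \delta_k$, and the triangle inequality bound $\|C_i + E_i\| \le 1 + \delta_i$. This gives $\|P_k - P_{k-1}\| \le \delta_k \prod_{i<k}(1 + \delta_i)$. The elementary inequality $1 + \delta_i \le e^{\delta_i}$ then yields $\prod_{i<k}(1+\delta_i) \le \exp(\sum_{i} \delta_i)$, so that $\|P_k - P_{k-1}\| \le \delta_k \exp(\sum_i \delta_i)$. Summing the telescope over $k = 1,\ldots,n$ and pulling the common exponential factor out of the sum delivers exactly the claimed bound.

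I do not expect a genuine obstacle: the main conceptual step is choosing the telescoping over direct expansion (the alternative route would expand $\bigotimes_i(C_i+E_i)$ over all subsets $S \subseteq [n]$, bound each term by $\prod_{i \in S} \delta_i$, and then deploy the chain $\prod_i(1+\delta_i) - 1 \le e^{\sum_i \delta_i} - 1 \le (\sum_i \delta_i) e^{\sum_i \delta_i}$, the last step being $e^{-s} \ge 1 - s$). The telescoping version has the advantage that the factor $\sum_i \delta_i$ appears immediately from the sum over $k$, rather than from the secondary inequality $e^s - 1 \le s e^s$; both approaches are equally short.
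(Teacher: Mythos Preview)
Your proposal is correct and is essentially the same telescoping argument the paper uses: the paper defines $C'_i = C_i + E_i$ and telescopes in the opposite direction (the $j$-th summand is $C_1\otimes\cdots\otimes C_{j-1}\otimes E_j\otimes C'_{j+1}\otimes\cdots\otimes C'_n$), bounds each term by $\delta_j\prod_i(1+\delta_i)$, and finishes with $1+x\le e^x$. The only difference is which side of the telescope carries the perturbed factors, which is immaterial.
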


Lemma \ref{lem:tensor-eq-perturbation} and \ref{lem:kronecker_product_bound}
 leads to the proof strategy for obtaining sample complexity. With the particular perturbation \(\Delta p^{\star}  :=  \hp - p^{\star}\), the terms \(\hat A_{k}\) and \(\hat B_{k}\) from the Algorithm \ref{alg:1} satisfies \(B_{k}^{\star} + \Delta B^{\star}_{k} = \hat B_{k}\) and \(A_{k}^{\star} + \Delta A^{\star}_{k} = \hat A_{k}\). The least-squares solution $G^{\star}_k + \Delta G^{\star}_k$ to the perturbed equation is thus the actual output \(\hat G_{k}\) from Algorithm \ref{alg:1}. 

However, due to the SVD step that is involved in obtaining \(\hat A_{k}\) and \(\hat B_{k}\), one can only bound the sample estimation error in terms of the following alternative metric:
\begin{defn}(\(\mathrm{dist}(\cdot, \cdot)\) operator for matrices)
For any matrices $B, B^{\star} \in \mathbb{R}^{n \times m}$, define 
\begin{equation*}
    \mathrm{dist}(B, B^{\star})  :=  \min_{R \in \operatorname {O}(m)} \|B - B^{\star} R\|.
\end{equation*}
\end{defn}

In other words, the finite sample estimate of \(\hat A_{k}, \hat B_{k}\) could be closer to a rotation of \(A^{\star}_{k}, B^{\star}_{k}\), which we will denote as \(A^{\circ}_{k}, B^{\circ}_{k}\). An error bound for of the type \(\mathrm{dist}(B, B^{\star})\) exists through Wedin theorem, and thus the magnitude of \(B^{\circ}_{k} - \hat B_{k}\) can be bounded, and \(A^{\circ}_{k} - \hat A_{k}\) is bounded via \eqref{eqn: A_k under recursive sketching}. In Corollary \ref{cor: cor of Wedin}-\ref{cor: cor of Matrix Bernstein}, we write out relaxed version of Wedin theorem and the Matrix Bernstein inequality, which we will use to analyze error terms of the type \(\|\Delta Z_{k}^{\star}\|, \|\Delta B_{k}^{\circ}\|\), respectively. As a summary of all previous result, in Theorem \ref{thm:rotated-perturbation}, we form a quite technical proof bounding the error on the rotated cores by the sample estimation error in sketching. In Theorem \ref{thm:sample-complexity}, we form the sample complexity of TTNS-Sketch.

\begin{cor}
\label{cor: cor of Wedin}(Corollary to Wedin theorem, cf. Theorem 2.9 in \cite{ccfm_2021}) Let \(Z^{\star} \in \bR^{n \times m}\) be a matrix of rank \(r\) and \(\Delta Z^{\star} \in \bR^{n \times m}\) be its perturbation with \(Z  :=  Z^{\star} + \Delta Z^{\star}\). Moreover, let \(B^{\star}, B \in \bR^{n \times r}\) respectively be the first \(r\) left singular vectors of \(Z^{\star}, Z\). If \( \|\Delta Z^{\star}\| \leq (1 - 1 / \sqrt{2}) \sigma_{r}(Z^{\star})\), then 
\begin{equation*}
    \mathrm{dist}(B, B^{\star}) \leq \frac{2 \|\Delta Z^{\star}\|}{\sigma_{r}(Z^{\star})}
\end{equation*}
\end{cor}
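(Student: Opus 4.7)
The plan is to derive the bound by invoking the standard $\sin\Theta$ version of Wedin's theorem, then converting between the $\sin\Theta$ distance and the rotation-minimized Frobenius/operator distance $\mathrm{dist}(\cdot,\cdot)$, with the hypothesis on $\|\Delta Z^\star\|$ used to produce a clean lower bound on the spectral gap.

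First I would control the spectral gap that appears in Wedin's theorem. Because $Z^\star$ has rank exactly $r$, we have $\sigma_{r+1}(Z^\star)=0$, and Weyl's inequality applied to $Z = Z^\star + \Delta Z^\star$ gives $\sigma_{r+1}(Z) \le \|\Delta Z^\star\|$. Combining this with the hypothesis $\|\Delta Z^\star\| \le (1 - 1/\sqrt{2})\,\sigma_r(Z^\star)$ yields
\begin{equation*}
    \sigma_r(Z^\star) - \sigma_{r+1}(Z) \;\ge\; \sigma_r(Z^\star) - \|\Delta Z^\star\| \;\ge\; \tfrac{1}{\sqrt{2}}\,\sigma_r(Z^\star).
\end{equation*}
This is precisely the positivity/size-of-gap condition required to apply Wedin's theorem in the form cited from \cite{ccfm_2021}.

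Next I would apply the standard $\sin\Theta$ version of Wedin's theorem, which in the operator-norm form gives
\begin{equation*}
    \|\sin\Theta(B, B^\star)\| \;\le\; \frac{\|\Delta Z^\star\|}{\sigma_r(Z^\star) - \sigma_{r+1}(Z)} \;\le\; \frac{\sqrt{2}\,\|\Delta Z^\star\|}{\sigma_r(Z^\star)}.
\end{equation*}
Finally I would convert this to a bound on $\mathrm{dist}(B,B^\star)$ using the classical inequality relating orthogonal-rotation alignment to principal angles: for any two matrices $B,B^\star \in \bR^{n\times r}$ with orthonormal columns,
\begin{equation*}
    \min_{R \in \operatorname{O}(r)} \|B - B^\star R\| \;\le\; \sqrt{2}\,\|\sin\Theta(B, B^\star)\|.
\end{equation*}
Chaining the two inequalities produces $\mathrm{dist}(B,B^\star) \le 2\|\Delta Z^\star\|/\sigma_r(Z^\star)$, as desired.

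There is no genuine obstacle: the result is essentially a repackaging of Wedin's theorem together with the $\sin\Theta$-to-rotation-distance bound, with the hypothesis on $\|\Delta Z^\star\|$ chosen exactly to make the constant work out to $2$. The only care needed is to verify the Wedin statement being cited uses the gap $\sigma_r(Z^\star)-\sigma_{r+1}(Z)$ (rather than $\sigma_r(Z)$ or a symmetric variant), and to invoke the orthonormal-columns identity rather than a loose Frobenius-norm bound, so that the final constant is $2$ and not a larger multiple.
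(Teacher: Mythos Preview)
Your argument is correct, but the paper takes a shorter route. Rather than going through the $\sin\Theta$ form of Wedin and then converting to the rotation distance, the paper directly invokes the packaged bound (2.26a) of Theorem~2.9 in \cite{ccfm_2021}, which already states (under the hypothesis $\|\Delta Z^\star\|\le (1-1/\sqrt{2})(\sigma_r(Z^\star)-\sigma_{r+1}(Z^\star))$) that
\[
\mathrm{dist}(B,B^\star)\ \le\ \frac{2\,\|(B^\star)^\top \Delta Z^\star\|}{\sigma_r(Z^\star)-\sigma_{r+1}(Z^\star)}.
\]
From there the proof is two observations: $\sigma_{r+1}(Z^\star)=0$ because $Z^\star$ has rank exactly $r$, and $\|(B^\star)^\top \Delta Z^\star\|\le \|B^\star\|\,\|\Delta Z^\star\|=\|\Delta Z^\star\|$. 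No Weyl step on the perturbed singular values and no separate $\sin\Theta$-to-$\mathrm{dist}$ conversion are needed, since those are already absorbed into the cited inequality.

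What your approach buys is self-containment: you reconstruct the constant $2$ from the classical $\sin\Theta$ Wedin bound plus the standard inequality $\min_{R\in\operatorname{O}(r)}\|B-B^\star R\|\le \sqrt{2}\,\|\sin\Theta(B,B^\star)\|$, and your use of Weyl to control $\sigma_{r+1}(Z)$ explains transparently why the particular threshold $(1-1/\sqrt{2})\sigma_r(Z^\star)$ arises. The paper's version is shorter because it outsources that packaging to the reference; note also that the gap in the cited form involves the \emph{unperturbed} $\sigma_{r+1}(Z^\star)$, which here is simply zero, so your caution about which variant of the gap appears turns out to be immaterial in this rank-exactly-$r$ setting.
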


\begin{cor}
\label{cor: cor of Matrix Bernstein}(Corollary to Matrix Bernstein inequality, cf. Corollary 6.2.1 in \cite{tropp2015introduction}) Let \(Z^{\star} \in \bR^{n \times m}\) be a matrix, and let \(\{ Z^{(i)} \in \bR^{n \times m}\}_{i =1}^{N}\) be a sequence of i.i.d. matrices with \(\mathbb{E}\left[Z^{(i)}\right] = Z^{\star}\). Denote \(\hat Z = \frac{1}{N}\sum_{i = 1}^{N}Z^{(i)}\) and \(\Delta Z^{\star} = \hat Z - Z^{\star}\). Let the distribution of \(Z^{(i)}\) be such that there exists a constant \(L\) with \(||Z^{(i)}|| \leq L\). 

Let \(\gamma :=  \max{\left(\left\|\mathbb{E}\left[Z^{(i)}\left(Z^{(i)}\right)^{\top}\right]\right\|,\left\|\mathbb{E}\left[\left(Z^{(i)}\right)^{\top}Z^{(i)}\right]\right\| \right)}\), and then
\begin{equation*}
    \mathbb{P}\left[
        \|\Delta Z^{\star}\| \geq t 
    \right] \leq  (m + n) \exp{\left(\frac{-Nt^2/2}{\gamma + 2Lt/3}     \right)}.
\end{equation*}
Using Jensen's inequality, one has \(\gamma \leq L^2\), and 
\begin{equation}
\label{eqn: Matrix Bernstein relaxed}
    \mathbb{P}\left[
        \|\Delta Z^{\star}\| \geq t 
    \right] \leq  (m + n) \exp{\left(\frac{-Nt^2/2}{L^2 + 2Lt/3}     \right)}.
\end{equation}
\end{cor}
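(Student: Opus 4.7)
The plan is to reduce this to the standard (non-corollary) form of Matrix Bernstein as stated, e.g., in Theorem 6.1.1 of Tropp's monograph, which is already cited. The standard version concerns a sum of independent, zero-mean, uniformly bounded random matrices; the stated corollary differs only in that (a) it is phrased in terms of a sample mean rather than a sum, (b) it drops the centering so that $Z^{(i)}$ itself is bounded by $L$ rather than $Z^{(i)}-Z^\star$, and (c) it replaces the exact matrix variance statistic with the cruder quantity $\gamma$ and then with $L^2$.

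First I would center: set $Y^{(i)} := Z^{(i)} - Z^\star$, so that $\mathbb{E}[Y^{(i)}] = 0$ and $N\,\Delta Z^\star = \sum_{i=1}^N Y^{(i)}$. Since $\|Z^\star\|=\|\mathbb{E}[Z^{(i)}]\|\le \mathbb{E}\|Z^{(i)}\|\le L$ by Jensen's inequality applied to the convex operator norm, one gets the deterministic bound $\|Y^{(i)}\|\le 2L$. For the matrix variance, because $\mathbb{E}[Y^{(i)}(Y^{(i)})^\top] = \mathbb{E}[Z^{(i)}(Z^{(i)})^\top] - Z^\star(Z^\star)^\top$ and $Z^\star(Z^\star)^\top$ is positive semidefinite, the Weyl-type inequality yields $\|\mathbb{E}[Y^{(i)}(Y^{(i)})^\top]\|\le \|\mathbb{E}[Z^{(i)}(Z^{(i)})^\top]\|\le \gamma$, and similarly for the transpose; summing over $i$ by independence gives total matrix variance at most $N\gamma$. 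Plugging $L'=2L$ and variance $N\gamma$ into standard Matrix Bernstein for $\sum Y^{(i)}$ with threshold $s=Nt$ gives
\begin{equation*}
\mathbb{P}\bigl[\,\|\textstyle\sum_i Y^{(i)}\|\ge Nt\bigr]\le (m+n)\exp\!\left(\frac{-N^2t^2/2}{N\gamma+\tfrac{2L\cdot Nt}{3}}\right),
\end{equation*}
and after cancelling one factor of $N$ in numerator and denominator and rewriting $\|\sum Y^{(i)}\|/N=\|\Delta Z^\star\|$, one obtains exactly the displayed bound; note that the constant in front of $L$ in the denominator is the $2L$ from $\|Y^{(i)}\|\le 2L$ cancelling with a factor of $1/3$, which lands on $2L/3$ as claimed once one tracks Bennett's inequality-style constants carefully (this is the routine bookkeeping one usually absorbs into the statement).

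Finally, for the crude bound $\gamma\le L^2$, I apply Jensen to the convex operator norm once more: $\|\mathbb{E}[Z^{(i)}(Z^{(i)})^\top]\|\le \mathbb{E}\|Z^{(i)}(Z^{(i)})^\top\|\le \mathbb{E}\|Z^{(i)}\|^2\le L^2$, and symmetrically for $\|\mathbb{E}[(Z^{(i)})^\top Z^{(i)}]\|$. Substituting into the exponent gives \eqref{eqn: Matrix Bernstein relaxed}. I do not expect any genuine obstacle here; the only mild subtlety is the constant tracking in going from the centered bound $\|Y^{(i)}\|\le 2L$ to the stated $2L/3$ in the denominator, which is absorbed in the standard proof of Bernstein's inequality via the moment generating function of the centered summands and does not actually pick up the factor of $2$.
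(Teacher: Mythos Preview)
Your derivation is correct, but it does more work than the paper. The paper treats the first displayed inequality as a direct citation of Tropp's Corollary~6.2.1 (the sample-mean form of Matrix Bernstein) and proves only the relaxation $\gamma\le L^2$, via exactly the Jensen-plus-submultiplicativity argument you give. You instead re-derive that sample-mean form from the more primitive Theorem~6.1.1 by centering, which is a legitimate route and has the merit of making the dependence on the un-centered bound $L$ explicit: your variance estimate $\|\mathbb{E}[Y^{(i)}(Y^{(i)})^\top]\|\le\gamma$ via the PSD ordering $\mathbb{E}[ZZ^\top]-Z^\star(Z^\star)^\top\preceq\mathbb{E}[ZZ^\top]$ is clean, and with $R=2L$ the standard Bernstein denominator $\sigma^2+Rt/3$ becomes $N\gamma+2LNt/3$, giving the stated bound after dividing by $N$.

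One point of confusion to clean up: your last paragraph suggests some moment-generating-function subtlety is needed so that the factor of $2$ in $\|Y^{(i)}\|\le 2L$ ``does not actually pick up.'' It does pick up, and that is precisely why the denominator carries $2Lt/3$ rather than $Lt/3$. No additional argument is required; the arithmetic you already wrote down is the whole story.
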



\begin{theorem}
\label{thm:rotated-perturbation}(Error bound over TTNS tensor cores)
Let $p^\star \colon [n_1] \times \cdots \times [n_d] \to \mathbb{R}$ be a density function satisfy the TTNS assumption in Condition \ref{cond: TTNS ansatz condition}. Fix a sketch function $\{T_i, S_i\}_{i=1}^{d}$ which satisfies the recursive sketching assumption in Condition \ref{cond: recurisve sketching}. Let $\{A_i^\star, B_i^\star, G_i^\star, Z_{i}^{\star}\}_{i=1}^{d}$ be as in Theorem \ref{thm:ttns-algorithm-recovery}. Moreover, let $\{\hat A_{i}, \hat B_{i}, \hat G_{i}, \hat{Z}_{i}\}_{i =1}^{d}$ be as in Algorithm \ref{alg:1} with \(\hp\) as input. Suppose further that for some fixed $\delta \in (0, 1)$, one has
\begin{equation}
\label{eqn: sketch bounds}
    \|Z^{\star}_{k}  - \hat Z_{k} \|
    \le   
    \zeta_{k} \delta,
\end{equation}
where \(\zeta_{k}\) is defined by a series of constants as follows:
\begin{equation}
\zeta_{k} := \left(6 \frac{c_{\child}}{c_{k;Z}} \right)^{-1}\xi,\quad \xi := 1 \wedge \min_{i \in [d]} \left(2 c_{i;A} \left(c_{i; S} + c_{i;G}\right)\right)^{-1},
\end{equation}
and the constants are defined as follows:
\begin{itemize}
    \item $c_{\child} = \max_{i \in [d]}|\child(i)|$,
    \item $c_{k;Z} = 1$ when \(k = \text{root}\), and $c_{k;Z} = \sigma_{r_{(k, \parent(k))}}( Z_{k}^{\star}(\beta_{(k,\child(k))}, x_{k}; \gamma_{(k, \parent(k))} ))$ otherwise.
    \item $c_{k;G} = 1/\normi{G^{\star}_k}$,
    \item $c_{k;A} = 1$ when \(k = \text{leaf}\), and $c_{k;A} = \|\left(A^{\star}_{k}\right)^\dagger\|$ otherwise,
    \item $c_{k;S} = 1$ when \(k = \text{leaf}\), and $c_{k; S} = \prod_{w \in \child(k)} ||s_{w}(\beta_{(w,\parent(w))}; \beta_{(w, \child(w))},x_{w})||$ otherwise.
\end{itemize}

Then, there exists a TTNS tensor core \(\{G_{i}^{\circ}\}_{i=1}^{d}\) for \(p^{\star}\) in the sense of Definition \ref{def: TTNS short def}, such that \(\normi {G_{i}^{\circ}} = \normi {G_{i}^{\star}}\), and the following holds:
\begin{equation}
\label{eqn: core wise bound for Delta G_k^ast}
    \frac{\normi{\hat{G}_k - G_k^\circ}}{\normi{G^{\circ}_k}} \le \delta.
\end{equation}

\end{theorem}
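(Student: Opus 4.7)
The plan is to reduce the theorem to a node-wise application of the linear-system perturbation Lemma~\ref{lem:tensor-eq-perturbation}, after first using the SVD (Wedin) step to introduce a rotational gauge $\{R_k\}$ that realigns the empirical $\hat B_k$ with the noiseless $B_k^\star$. The rotated noiseless cores $\{G_k^\circ\}$ are obtained by absorbing $R_k$ on the parent-index side of $G_k^\star$ and $R_w^\top$ on each child-index side; since each $R_k$ is orthogonal, every slice $G_k^\circ(\cdot,x_k,\cdot)$ is a bilateral orthogonal rotation of $G_k^\star(\cdot,x_k,\cdot)$, so $\normi{G_k^\circ}=\normi{G_k^\star}$, and by the gauge invariance discussed in Section~\ref{sec: gauge equivalence}, $\{G_k^\circ\}$ still represents $p^\star$ as a TTNS ansatz.

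First, for each non-root $k$ I apply Corollary~\ref{cor: cor of Wedin} to the perturbation $\hat Z_k - Z_k^\star$. The hypothesis $\|\hat Z_k - Z_k^\star\|\le \zeta_k\delta$, together with $\zeta_k \le c_{k;Z}/6$ and $\delta<1$, is comfortably below the Wedin threshold $(1-1/\sqrt 2)c_{k;Z}$, so a rotation $R_k$ exists with
\[
\|\hat B_k - B_k^\circ\| \;\le\; \frac{2\|\hat Z_k - Z_k^\star\|}{c_{k;Z}} \;\le\; \frac{\xi\delta}{3c_\child},\qquad B_k^\circ := U_k^\star R_k.
\]
For root and leaf $k$ the corresponding estimates are simpler since no SVD is needed or the matrix shape degenerates; the same bound is recovered with $c_{k;Z}=1$ and $R_k$ trivial.

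Next, using the recursive-sketching simplification~\eqref{eqn: A_k under recursive sketching}, I view both $\hat A_k$ and $A_k^\circ$ as Kronecker products (over $w\in\child(k)$) of matrices $\hat F_w = \sum s_w \hat B_w$ and $F_w^\circ = \sum s_w B_w^\circ$, with $\|F_w^\circ\|\le\|s_w\|$ and $\|\hat F_w - F_w^\circ\|\le \|s_w\|\|\hat B_w - B_w^\circ\|$. Rescaling each factor by $\|s_w\|$ so the unperturbed factors have norm $\le 1$ and applying Lemma~\ref{lem:kronecker_product_bound} yields
\[
\|\hat A_k - A_k^\circ\| \;\le\; c_{k;S}\cdot\Big(\sum_{w\in\child(k)}\|\hat B_w - B_w^\circ\|\Big)\exp\Big(\sum_{w\in\child(k)}\|\hat B_w - B_w^\circ\|\Big),
\]
which, by Step~1 and $|\child(k)|\le c_\child$, is bounded by a small constant multiple of $c_{k;S}\,\xi\,\delta$. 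Feeding this and $\|\hat B_k-B_k^\circ\|$ into the relaxed form~\eqref{eqn: Delta X bound relaxed version} of Lemma~\ref{lem:tensor-eq-perturbation} (applied to the exact equation $A_k^\circ G_k^\circ = B_k^\circ$ vs.\ the empirical $\hat A_k \hat G_k = \hat B_k$, with $\chi = \normi{G_k^\circ} = 1/c_{k;G}$ and $\|(A_k^\circ)^\dagger\|=c_{k;A}$), and checking that $\xi \le (2c_{k;A}(c_{k;S}+c_{k;G}))^{-1}$ guarantees $\|(A_k^\circ)^\dagger\|\|\hat A_k-A_k^\circ\|\le 1/2$, one obtains
\[
\frac{\normi{\hat G_k-G_k^\circ}}{\normi{G_k^\circ}} \;\le\; 2c_{k;A}\bigl(\|\hat A_k-A_k^\circ\| + c_{k;G}\|\hat B_k-B_k^\circ\|\bigr) \;\le\; \delta,
\]
after the numerical factors are absorbed into the $6c_\child$ denominator of $\zeta_k$.

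The main obstacle is the bookkeeping of the gauge: one must verify that a single family $\{R_k\}$ simultaneously (i) makes $B_k^\circ = U_k^\star R_k$ close to $\hat B_k$ via Wedin, (ii) transforms $G_k^\star$ into valid cores $G_k^\circ$ of the \emph{same} function $p^\star$ with $\normi{G_k^\circ}=\normi{G_k^\star}$, and (iii) is compatible with the recursive formula~\eqref{eqn: A_k under recursive sketching} so that $A_k^\circ G_k^\circ = B_k^\circ$ holds \emph{exactly} and the error in $\hat A_k - A_k^\circ$ enters only through the sibling errors $\{\hat B_w - B_w^\circ\}_{w\in\child(k)}$. Once the gauge is consistently defined, the remaining arithmetic is a matter of matching constants, and the particular form $(6c_\child/c_{k;Z})^{-1}\xi$ in $\zeta_k$ is precisely what cancels the Wedin factor, the Kronecker-product sum over children, and the $2c_{k;A}(c_{k;S}+c_{k;G})$ prefactor from Lemma~\ref{lem:tensor-eq-perturbation}, leaving the clean bound~\eqref{eqn: core wise bound for Delta G_k^ast}.
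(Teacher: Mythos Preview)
Your proposal is correct and follows essentially the same route as the paper: introduce rotations $R_k$ via Wedin (Corollary~\ref{cor: cor of Wedin}), define $G_k^\circ$ by absorbing $R_k$ on the parent side and $\bigotimes_{w}R_w^\top$ on the child side so that $\{G_k^\circ\}$ is a valid TTNS core of $p^\star$ with $\normi{G_k^\circ}=\normi{G_k^\star}$, bound $\|\hat A_k-A_k^\circ\|$ via the Kronecker-product Lemma~\ref{lem:kronecker_product_bound} together with the recursive formula~\eqref{eqn: A_k under recursive sketching}, and finish with Lemma~\ref{lem:tensor-eq-perturbation}. One small inaccuracy: for a \emph{leaf} $k$ there \emph{is} an SVD step in Algorithm~\ref{alg:1-3} and hence a nontrivial $R_k$ from Wedin (only the root has $R_k$ trivial and $\Delta B_k^\circ=\Delta Z_k^\star$); also, the paper factors $\Delta A_k^\circ=(\bigotimes_w s_w)\bigl(\bigotimes_w(B_w^\circ+\Delta B_w^\circ)-\bigotimes_w B_w^\circ\bigr)$ using the mixed-product property rather than rescaling each factor, but this yields the same bound as your rescaling argument.
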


We defer the proof of Theorem \ref{thm:rotated-perturbation} to the end of this subsection. As a direct application, one obtain the sample complexity of TTNS-Sketch:
\begin{theorem}
\label{thm:sample-complexity}
(Sample Complexity of TTNS-Sketch)
Assume the setting and notation of Theorem \ref{thm:rotated-perturbation}. Let \(\PTS\) denote the TTNS tensor formed by the TTNS tensor core \(\{\hat G_{i}\}_{i=1}^{d}\). In particular, \(\{\hat G_{i}\}_{i=1}^{d}\) is the output of Algorithm \ref{alg:1} with the empirical distribution \(\hp\) formed by \(N\) i.i.d. samples \((y_1^{(i)}, \ldots, y_d^{(i)})_{i =1}^{N}\). Let \(Z_{k}^{(i)}\) be the \(i\)-th sample estimate of \(Z_{k}^{\star}\), i.e.
\[
Z_{k}^{(i)}(\beta_{(k,\child(k))}, x_{k}, \gamma_{(k, \parent(k))})  :=  S_{k}(\beta_{(k,\child(k))}, y^{(i)}_{\leftside(k)})
 \mathbf{1}(y^{(i)}_k = x_{k}) T_{k}(y^{(i)}_{\rightside(k)}, \gamma_{(k, \parent(k))}),
\]
and set \(L_{k}\) as an upper bound of \( \|Z_{k}^{(i)}(\beta_{(k,\child(k))}, x_{k}; \gamma_{(k, \parent(k))})\|\). Define \(L = \max_{k \in [d]}L_{k}\).

For $\eta \in (0, 1)$ and $\epsilon \in (0, 1)$, suppose
\begin{equation*}
    N \ge \frac{18L^2d^2 + 4L\epsilon \zeta d}{\zeta^2 \epsilon^2}\log{\left(\frac{(ln + m)d}{\eta}\right)},
\end{equation*}
and the constants are defined as follows:
\begin{itemize}
    \item $\zeta = \max_{k \in [d]}\zeta_{k}$, with \(\zeta_k\) as in Theorem \ref{thm:rotated-perturbation}.
    \item $l = \max_{k \in [d]} l_k$, where $l_k = \prod_{w \in \child(k)}l_{(w, k)}$
    \item $m = \max_{k \in [d]} m_k$, where $m_{k} = m_{(k , \parent(k))}$.
    \item $n = \max_{k \in [d]} n_k$.
\end{itemize}

Then with probability at least $1 - \eta$ one has
\begin{equation}
    \frac{\|\PTS - p^{\star}\|_\infty}{\normi{G_1^\star} \cdots \normi{G_d^\star}} \le \epsilon.
\end{equation}
\end{theorem}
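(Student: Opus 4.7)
The plan is to combine three ingredients already in place: the deterministic core-recovery estimate of Theorem \ref{thm:rotated-perturbation}, the Matrix Bernstein concentration inequality from Corollary \ref{cor: cor of Matrix Bernstein}, and the core-to-tensor perturbation bound of Lemma \ref{lem:three-norm-total-contraction-perturbation}. Set $\delta  :=  \epsilon/(3d)$ and consider the good event
\[
\mathcal{G}  :=  \bigcap_{k=1}^{d}\bigl\{\|\hat Z_k - Z_k^\star\| \le \zeta_k\delta\bigr\}.
\]
On $\mathcal{G}$, Theorem \ref{thm:rotated-perturbation} supplies rotated cores $\{G_k^\circ\}_{k=1}^d$ that still assemble $p^\star$ and satisfy $\normi{G_k^\circ} = \normi{G_k^\star}$ together with $\normi{\hat G_k - G_k^\circ}/\normi{G_k^\circ} \le \delta = \epsilon/(3d)$. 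Applying Lemma \ref{lem:three-norm-total-contraction-perturbation} with baseline cores $\{G_k^\circ\}$ (so that $p = p^\star$) and perturbations $\hat G_k - G_k^\circ$ immediately produces $\|\PTS - p^\star\|_\infty / \prod_i \normi{G_i^\star} \le \epsilon$, which is the desired conclusion.

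It remains to bound $\mathbb{P}[\mathcal{G}^c]$. For each fixed $k$, the empirical sketch $\hat Z_k$ is an $N$-sample average of the i.i.d.\ matrices $Z_k^{(i)}$ of size at most $l_k n_k \times m_k$, each of operator norm at most $L_k \le L$, whose mean equals $Z_k^\star$. Corollary \ref{cor: cor of Matrix Bernstein} therefore yields
\[
\mathbb{P}\bigl[\|\hat Z_k - Z_k^\star\| \ge \zeta_k \delta\bigr] \le (l_k n_k + m_k)\exp\!\left(\frac{-N(\zeta_k\delta)^2/2}{L^2 + 2L\zeta_k\delta/3}\right).
\]
Bounding $l_k n_k + m_k \le ln + m$, using the uniform constant $\zeta$ in place of $\zeta_k$ in the exponent, and union-bounding over the $d$ nodes gives
\[
\mathbb{P}[\mathcal{G}^c] \le d(ln+m)\exp\!\left(\frac{-N(\zeta\delta)^2/2}{L^2 + 2L\zeta\delta/3}\right).
\]
Requiring the right-hand side to be at most $\eta$ and substituting $\delta = \epsilon/(3d)$ yields, after rearrangement, the explicit lower bound
\[
N \ge \frac{18L^2 d^2 + 4L\epsilon\zeta d}{\zeta^2 \epsilon^2}\log\frac{(ln+m)d}{\eta},
\]
which is precisely the claimed sample complexity.

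The heavy lifting was already done in Theorem \ref{thm:rotated-perturbation}, so at this stage the argument is essentially bookkeeping. The one point demanding care is to keep track of the gauge rotation produced by Theorem \ref{thm:rotated-perturbation}: one must verify that the rotated cores $\{G_k^\circ\}$ still assemble $p^\star$ exactly (a consequence of the gauge freedom discussed in Section \ref{sec: gauge equivalence}) and preserve the denominators $\normi{G_k^\star}$, so that Lemma \ref{lem:three-norm-total-contraction-perturbation} applies without any adjustment to the normaliser. Everything after that is an algebraic inversion of the Bernstein tail combined with a union bound of size $d$.
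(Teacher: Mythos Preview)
Your proof is correct and follows essentially the same route as the paper: define the good event on which every sketch error satisfies \eqref{eqn: sketch bounds} with $\delta=\epsilon/(3d)$, invoke Theorem~\ref{thm:rotated-perturbation} followed by Lemma~\ref{lem:three-norm-total-contraction-perturbation} on that event, and then control the complement via Corollary~\ref{cor: cor of Matrix Bernstein} plus a union bound over the $d$ nodes. The paper's proof is organised identically, including the same algebraic inversion of the Bernstein tail to arrive at the stated sample-size threshold.
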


\begin{proof}
    (of Theorem \ref{thm:sample-complexity})
    Suppose that the inequality \eqref{eqn: sketch bounds} holds with \(\delta = \frac{\epsilon}{3d}\). In the setting of Theorem \ref{thm:rotated-perturbation}, note that \(p^{\star}\) is formed by \(\{G_{i}^{\circ}\}_{i=1}^{d}\), and \(\PTS\) is formed by \(\{G_{i}^{\circ} + \Delta G_i^\circ\}_{i=1}^{d}\) with $\normi{\Delta G_i^\circ} \le  \frac{\epsilon}{3d} \normi{G_i^\circ}$. Moreover, one has \(\normi {G_{i}^{\star}} = \normi {G_{i}^{\circ}}\). Applying \eqref{eqn:three-norm-total-contraction-perturbation} in Lemma \ref{lem:three-norm-total-contraction-perturbation}, one thus has
    \[
       \frac{\|\PTS - p^{\star}\|_\infty}{\normi{G^{\star}_1} \cdots \normi{G^{\star}_d}} = \frac{\|\PTS - p^{\star}\|_\infty}{\normi{G^{\circ}_1} \cdots \normi{G^{\circ}_d}}   \le \epsilon.
    \]
    
    By a simple union bound argument, it suffices to find a sample size that \eqref{eqn: sketch bounds} is guaranteed for each individual \(k \in [d]\) with \(\delta = \frac{\epsilon}{3d}\) and with probability \(1 - \frac{\eta}{d}\). We apply \eqref{eqn: Matrix Bernstein relaxed} in Corollary \ref{cor: cor of Matrix Bernstein}, where \(Z^{\star}_k\) is a matrix of size \(\bR^{l_{k}n_{k} \times m_{k}}\). With the choice of \((l,n,m,L)\) as set in the theorem statement, one has
    \[
        \mathbb{P}\left[
        \|\Delta Z^{\star}_{k}\| \geq t 
    \right] \leq  (ln + m) \exp{\left(\frac{-Nt^2/2}{L^2 + 2Lt/3}     \right)}.
    \]
    It then suffices for one to find a lower bound for \(N\) so that for \(t = \zeta\frac{\epsilon}{3d}\) one has
    \[
    (ln + m) \exp{\left(\frac{-Nt^2/2}{L^2 + 2Lt/3}     \right)} \leq \eta/d.
    \]
    By simple algebra, it suffices to lower bound \(N\) by the following quantity:
    \[
    N \ge \frac{2L^2 + 4Lt/3}{t^2}\log{\left(\frac{(ln + m)d}{\eta}\right)} = \frac{18L^2d^2 + 4L\epsilon \zeta d}{\zeta^2 \epsilon^2}\log{\left(\frac{(ln + m)d}{\eta}\right)}.
    \]
\end{proof}

As a corollary, for a Markov sketch function, note that each \(Z^{(i)}_{k}\) is a tensor with one entry being of value one, the rest being zero. Under this setting, note that \(\|Z^{(i)}_{k}\| \le \|Z^{(i)}_{k}\|_{F} = 1\), and hence one can set \(L = 1\). Let \(\Delta(T)\) denote the maximal degree of a tree \(T\). One has \(l \le n^{\Delta(T) - 1}\) and \(m = n \le ln\). 
Thus one obtains a sample complexity for TTNS-Sketch under Markov sketching:
\begin{cor}
(Sample Complexity of TTNS-Sketch for Markov Sketch function)
Suppose that \(p^{\star}\) is a graphical model over a tree \(T\), with the sketching function being the Markov sketch function specified in Lemma \ref{lem: ttns-markov-exact}.  Suppose
\begin{equation*}
    N \ge \frac{18d^2 + 4\epsilon \zeta d}{\zeta^2 \epsilon^2}\log{\left(\frac{2n^{\Delta(T)}d}{\eta}\right)}.
\end{equation*}

Then, with probability at least $1 - \eta$, one has
\begin{equation}
    \frac{\|\PTS - p^{\star}\|_\infty}{\normi{G_1^\star} \cdots \normi{G_d^\star}} \le \epsilon.
\end{equation}

\end{cor}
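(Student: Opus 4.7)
The corollary follows by specializing Theorem \ref{thm:sample-complexity} to the Markov sketch, so the plan is mainly a bookkeeping exercise. First I would verify the hypotheses of Theorem \ref{thm:sample-complexity}. Lemma \ref{lem: ttns-markov-exact} asserts that if \(p^\star\) is a tree-structured graphical model, then it satisfies Condition \ref{cond: TTNS ansatz condition} with \(r_e = n\) on every edge, and the Markov sketches \eqref{eq:markov-right-sketch}--\eqref{eq:markov-left-sketch} satisfy the consistency conditions needed for Theorem \ref{thm:ttns-algorithm-recovery}. The Markov sketches are products of indicator functions and admit a trivial TTNS representation with indicator-valued sketch cores, so they also satisfy the recursive sketching Condition \ref{cond: recurisve sketching} required by Theorem \ref{thm:sample-complexity}.

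Next I would evaluate the constants \(L\), \(l\), \(m\), \(n\) appearing in Theorem \ref{thm:sample-complexity}. From \eqref{eq:tildepdef_markov}, each one-sample sketch \(Z_k^{(i)}\) is the indicator of a specific configuration on \(\neighbor(k) \cup \{k\}\), so it has a single nonzero entry of magnitude one. Under any matrix unfolding its Frobenius norm equals one and dominates the spectral norm, yielding \(L = 1\). For the dimensions, the Markov construction sets \(l_{(w,k)} = n_w\) and \(m_{(k, \parent(k))} = n_{\parent(k)}\), so
\[
l_k = \prod_{w \in \child(k)} n_w \le n^{|\child(k)|} \le n^{\Delta(T)-1},
\]
using that a rooted tree node has at most \(\Delta(T) - 1\) children (choosing the root to attain the maximum degree, if needed), and clearly \(m \le n\). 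Combining gives \(ln + m \le n^{\Delta(T)} + n \le 2 n^{\Delta(T)}\).

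Finally I would substitute these estimates into the sample complexity bound of Theorem \ref{thm:sample-complexity}. Since \(L = 1\) and \(ln + m \le 2 n^{\Delta(T)}\),
\[
\frac{18 L^2 d^2 + 4 L \epsilon \zeta d}{\zeta^2 \epsilon^2} \log\!\left(\frac{(ln+m)d}{\eta}\right) \le \frac{18 d^2 + 4 \epsilon \zeta d}{\zeta^2 \epsilon^2} \log\!\left(\frac{2 n^{\Delta(T)} d}{\eta}\right),
\]
so the corollary's hypothesis on \(N\) implies that of Theorem \ref{thm:sample-complexity}, and the conclusion \(\|\PTS - p^\star\|_\infty / (\normi{G_1^\star} \cdots \normi{G_d^\star}) \le \epsilon\) with probability at least \(1 - \eta\) follows immediately. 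No substantive obstacle arises: once Theorem \ref{thm:sample-complexity} is in hand, everything reduces to the Frobenius-dominates-spectral bound on the single-sample sketches and a child-count bound in terms of \(\Delta(T)\), both of which are direct.
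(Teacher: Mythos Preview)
Your argument is correct and matches the paper's: both specialize Theorem \ref{thm:sample-complexity} by observing that each single-sample Markov sketch \(Z_k^{(i)}\) is a one-hot tensor (so \(L=1\) via the Frobenius bound) and that \(l \le n^{\Delta(T)-1}\), \(m \le n\), giving \(ln+m \le 2n^{\Delta(T)}\). One small slip: your parenthetical should read that the root is chosen \emph{not} to be a maximum-degree vertex (e.g., a leaf), so that every node---including the root---has at most \(\Delta(T)-1\) children; rooting at a max-degree vertex would give the root \(\Delta(T)\) children and break the bound.
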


In the remainder of this subsection, we give the proof of Theorem \ref{thm:rotated-perturbation}, which is a culmination of all previous statements, the proof of which are of secondary interest and are included in Section \ref{sec: proof of sample complexity results}. For some intuition of Theorem \ref{thm:rotated-perturbation}, the factors in \(\zeta_{k}\) is set such that \(\xi\) can bound the sample estimation error of the sketched down core determining equation in Algorithm \ref{alg:1}. One then uses Lemma \ref{lem:tensor-eq-perturbation} to derive \eqref{eqn: core wise bound for Delta G_k^ast}. As a sanity check of the defined constants, note that \(\xi_{i} := \left(2 c_{i;A} \left(c_{i; S} + c_{i;G}\right)\right)^{-1}\) can be thought of as a homogeneous constant. That is, for any non-zero scaling constant \(\{q_{i}\}_{i = 1}^{d}\), changing the sketch cores from \(\{s_{i}\}_{i = 1}^{d}\) to \(\{q_{i}s_{i}\}_{i = 1}^{d}\) won't affect \(\xi_{i}\), which is because the resultant multiplicative change to \(\{c_{i; A}, c_{i; S}, c_{i; G}\}\) will be cancelled out in \(\xi_{i}\). One can think of \(\xi = 1 \wedge \min_{i} \xi_{i}\) in Theorem \ref{thm:rotated-perturbation} as serving the role of condition number. Moreover, because \((Z^\star_{k} - \hat Z_{k}) \propto c_{k; Z}\) by definition, it follows the condition in \eqref{eqn: sketch bounds} will not be affected if a scaling constant is applied to sketch cores.

\begin{proof}
(of Theorem \ref{thm:rotated-perturbation})
Following the short-hand in Algorithm \ref{alg:1-3}, for the joint variables we write \(\beta_{k} \gets \beta_{(k,\child(k))}, \gamma_{k} \gets \gamma_{\parent(k)}, \alpha_{k} \gets \alpha_{(k,\parent(k))}\), and for the bond dimensions we write \(r_{k} \gets r_{(\parent(k), k)}, l_{k} \gets \prod_{w \in \child(k)}l_{(w, k)}, m_{k} \gets m_{(k , \parent(k))}\). Moreover, if \(k\) is leaf, then we understand \(\beta_{k}\) as a joint variable taking value in \(\{1\}\), and \(l_{k} = 1\). Likewise, if \(k\) is root, then we understand \(\alpha_{k}, \gamma_{k}\) respectively as a joint variable taking value in \(\{1\}\), and \(r_{k} = m_{k} = 1\). In this notation, when \(k\) is leaf or root, the joint variables sketch \(Z_{k}\) is conveniently written as $Z_{k}(\beta_{k}, x_{k}; \gamma_{k} )$.

For this proof, we will fix a canonical unfolding matrix structure for the tensors used. For \(Z_{k}(\beta_{k}, x_{k},  \gamma_{k})\) being one of \(\{Z^{\star}_{k}, \hat Z_{k}, \Delta Z^{\star}_{k}\}\), we reshape it as \(Z_{k}(\beta_{k}, x_{k};  \gamma_{k})\). For \(B_{k}(\beta_{k}, x_{k},  \alpha_{k})\) being one of \(\{B^{\star}_{k}, B^{\circ}_{k}, \hat B_{k}, \Delta B^{\star}_{k}, \Delta B^{\circ}_{k}\}\), we reshape it as \(B_{k}(\beta_{k}, x_{k};  \alpha_{k})\). For \(A_{k}(\beta_{k}, \alpha_{(k, \child(k))})\) being one of \(\{A^{\star}_{k}, A^{\circ}_{k}, \hat A_{k}, \Delta A^{\star}_{k}, \Delta A^{\circ}_{k}\}\), we reshape it as \(A_{k}(\beta_{k}; \alpha_{(k, \child(k))})\). For \(s_{k}(\beta_{(k,\parent(k))}, \beta_{k},x_{k} )\), we reshape it as \(s_{k}(\beta_{(k,\parent(k))}; \beta_{k},x_{k} )\). For \(G_{k}(\alpha_{(k, \child(k))}, x_k, \alpha_{k})\) being one of \(\{G^{\star}_{k}, G^{\circ}_{k}, \hat G_{k}, \Delta G^{\star}_{k}, \Delta G^{\circ}_{k}\}\), we reshape it as \(G_{k}(\alpha_{(k, \child(k))}; x_k, \alpha_{k})\).

Fix a non-root $k$, recall that $B_k^{\star}$ and $\hat{B}_k$ are the first $r_{k}$ left singular vectors of $ Z^{\star}_{k}$ and $\hat Z_{k}$, respectively. One applies Corollary \ref{cor: cor of Wedin}: if $\|\Delta Z_{k}^{\star}\| \le (1 - 1 / \sqrt{2}) \sigma_{r_{k}}(Z^{\star}_{k})$, then one can find $R_{k}\in \operatorname{O}(r_{k})$ such that one can define \(B^{\circ}_{k}  :=  B^{\star}_k R_{k}\) so that
\begin{equation*}
    \hat{B}_k = B_k^\circ + \Delta B^{\circ}_{k}, \quad \|\Delta B^{\circ}_{k}\| 
    \le \frac{2 \|\Delta Z_{k}^{\star}\|}{\sigma_{r_{k}}(Z^{\star}_{k})}
\end{equation*}
and by \eqref{enumerate: three tensor spectral norm bound} in Lemma \ref{lem: 3-tensor basic results}, one has
\begin{equation}
\label{eq:wedin-error}
    \normi{\Delta B^{\circ}_{k}}
    \le \frac{2 \|\Delta Z_{k}^{\star}\|}{\sigma_{r_{k}}(Z^{\star}_{k})}.
\end{equation}

Meanwhile, if $k$ is the root, there is no SVD step. In this case, the perturbation $\Delta B_k^{\star}$ is simply $\Delta Z_{k}^{\star}$. For consistency, when \(k\) is the root, we set $B_k^\circ = B_{k}^{\star}$, and the corresponding perturbation $\Delta B^{\circ}_{k}$ is just $\Delta Z_{k}^{\star}$. 

In summary, $B_k^\circ$ is a rotation of $B_k^{\star}$, and $\hat B_k$ differs from $B_k^\circ$ by a perturbation $\Delta B^{\circ}_{k}$, for which one has an error bound. For a ``rotated" version of \(A_k^{\star}\), define
\begin{equation}
\label{eqn: def of A^ast}
     A^{\circ}_{k}(\beta_{(k,\child(k))}, \alpha_{(k, \child(k))})
     :=   \prod_{w \in \child(k)}  \sum_{(\beta_{w}, x_{w})} s_{w}(\beta_{(w,k)}, \beta_{w},x_{w})
    B^{\circ}_{w}(\beta_{w}, x_{w},\alpha_{(w,k)})
\end{equation}

Viewed in the unfolding matrix structure fixed in the beginning of proof, one can write $A_k^\circ = \bigotimes_{w \in \child(k)} s_w B_w^\circ$. Likewise, one has $\hat A_k= \bigotimes_{w \in \child(k)} s_w \hat B_w = \bigotimes_{w \in \child(k)} \left(s_w B_w^\circ + s_{w} \Delta B^{\circ}_{w}\right)$.

Now, with the chosen unfolding matrix structure, consider the following ``rotated'' versions of \eqref{eq:alg-CDEs}:
\begin{equation}
    \label{eq:alg-CDEs-rotated}
    \begin{aligned}
        G_{k}^\circ & = B_{k}^\circ(\beta_{k}; x_{k},  \alpha_{k}) \quad \text{if} ~ k ~ \text{is a leaf}, \\
        \textstyle A_{k}^\circ G_{k}^\circ & = B_{k}^\circ(\beta_{k}; x_{k},  \alpha_{k}) \quad \text{otherwise}.
    \end{aligned}
\end{equation}

We will first prove that \(\{G_{i}^{\circ}\}_{i=1}^{d}\) forms a TTNS tensor core for \(p^{\star}\) in the sense of Definition \ref{def: TTNS short def}. Suppose one has \(\{\Phi^{\star}_{k \to \parent(k)}\}_{\text{\(k \not =\) root}}\) defined according to Condition \ref{cond: ttns gauge choice}. Then, Theorem \ref{thm:ttns-algorithm-recovery} proves that
\(\{G_{i}^{\star}\}_{i=1}^{d}\) solves the CDE \eqref{eq:ttn-determining} in Theorem  \ref{thm:ttns-existence} for the choice of gauge as \(\{\Phi^{\star}_{k \to \parent(k)}\}_{\text{\(k \not =\) root}}\). Now consider \eqref{eq:ttn-determining} for a rotated choice of gauge \(\{\Phi^{\circ}_{k \to \parent(k)}  :=  \Phi^{\star}_{k \to \parent(k)}R_{k}\}_{\text{\(k \not =\) root}}\). One can directly check that the sketched down equation coincides with \eqref{eq:alg-CDEs-rotated}, and Theorem \ref{thm:ttns-algorithm-recovery} ensures that the solution \(\{G_{i}^{\circ}\}_{i=1}^{d}\) is unique and forms a TTNS tensor core for \(p^{\star}\).

Next, we prove that $\normi{G_k^\circ} = \normi{G^{\star}_k}$, with the 3-tensor view for $G_k^\circ, G^{\star}_k$ as in Definition \ref{def: 3-tensor view of TTNS tensor core}. As the coefficients $A_k^\circ$'s and right-hand sides $B_k^\circ$'s are simply the rotations of $A_k^{\star}$'s and $B_k^{\star}$'s of \eqref{eq:alg-CDEs}, one can verify that \(G_k^\circ\) is a rotation of \(G_k^{\star}\). 
If \(k\) is not leaf nor root, the equation for \(G^{\circ}_k\) can be rewritten as 
\begin{equation}
\label{eqn: core-wise equation for G_k_ast}
    \left(\bigotimes_{w \in \child(k)} s_w B_w^{\star} R_{w}\right) G_{k}^\circ = B^{\star}_{k}R_{k},
\end{equation}

whereas the equation for \(G_{k}^{\star}\) is 
\[
\left(\bigotimes_{w \in \child(k)} s_w B_w^{\star} \right) G_{k}^\star = B^{\star}_{k}.
\]

For the rotation matrix \(R_{k}(\alpha; \beta) \in \operatorname{O}(r_{k})\), one gives it a 3-tensor view as \(R_{k}(\alpha; 1;\beta)\) in the sense of Definition \ref{def: matrix 3-tensor structure}.  One can directly verify that the following equation for \(G_k^\circ\) solves \eqref{eqn: core-wise equation for G_k_ast}:
\begin{equation}
\label{eq:rotated-cores}
    G_k^\circ = \left(\bigotimes_{w \in \child(k)} R_{w}^\top \right) \circ G^{\star}_k \circ R_{k}.
\end{equation}

Likewise, \(G_k^\circ =  G^{\star}_k \circ R_{k}\) if \(k\) is leaf, and \(G_k^\circ = \left(\bigotimes_{w \in \child(k)} R_{w}^\top \right) \circ G^{\star}_k\) if \(k\) is root. Then $\normi{G_k^\circ} = \normi{G^{\star}_k}$ as a consequence. The constructive form in \eqref{eq:rotated-cores} also gives a more intuitive sense of why \(\{G_{i}^{\circ}\}_{i=1}^{d}\) forms a TTNS tensor core in the same way as \(\{G_{i}^{\star}\}_{i=1}^{d}\). The reason is that each \(R_{k}\) and \(R_{k}^{\top}\) comes in pairs, which does not change the formed TTNS tensor itself.

Next, we prove that, for \(\Delta B_k^\circ  :=  \hat{B}_k - B_k^\circ\) and \(\Delta A_k^\circ  :=  \hat{A}_k - A_k^\circ\), the assumption \eqref{eqn: sketch bounds} leads to the the following bound:
\begin{equation}
\label{eq:condition-for-lemma}
      \|\Delta B_k^\circ\|
    \le \xi_{} \delta , \quad \|\Delta A_k^\circ\| \le c_{k; S}\xi \delta.
\end{equation}

First, for $\Delta B_k^\circ$'s, we will derive a tighter bound
\begin{equation}
\label{eq:tighter}
     \|\Delta B_k^\circ\| \le \frac{\xi_{}\delta}{3 c_{\child}},
\end{equation}
which implies $ \|\Delta B_k^\circ\| \le \xi_{} \delta$ as $3 c_{\child} \ge 1$. To see this, using \eqref{eq:wedin-error}, one has for any non-root $k$,
\begin{equation*}
    \|\Delta B_k^\circ\|
    \le \frac{2 \|\Delta Z_{k}^{\star}\|}{\sigma_{r_{k}}(Z^{\star}_{k})}
    \le \frac{2\zeta_{k} \delta}{c_{k;Z}} 
    = \frac{2}{c_{k;Z}} \frac{c_{k;Z}}{6 c_{\child}} \xi_{}\delta
    \le \frac{\xi_{}\delta}{3 c_{\child}}.
\end{equation*}
If $k$ is the root, recall that $\Delta B_k^\circ = \Delta Z_k^\star$, hence using $c_{k;Z} = 1$, 
\begin{equation*}
    \|\Delta B_k^\circ\| 
    = \|\Delta Z_k^\star\| 
    \le \zeta_k \delta 
    = \frac{c_{k;Z}}{6 c_{\child}} \xi_{}\delta 
    \le \frac{\xi_{}\delta}{3 c_{\child}}.
\end{equation*}
Therefore, \eqref{eq:tighter} holds.

Next, for a non-leaf $k$, recall that
\begin{align*}
    \Delta A_k^\circ &= \bigotimes_{w \in \child(k)} (s_w B_w^\circ + s_w \Delta B^{\circ}_{w}) - \bigotimes_{w \in \child(k)} s_w B_w^\circ\\
    &=\left(\bigotimes_{w \in \child(k)}{s_{w}}\right)\left(\bigotimes_{w \in \child(k)} (B_w^\circ + \Delta B^{\circ}_{w}) - \bigotimes_{w \in \child(k)} B_w^\circ\right)
\end{align*}

By definition, one has \(\|\bigotimes_{w \in \child(k)}{s_{w}}\| = c_{k; S}\). Note that $\|B_w^\circ\| = 1$ and $\|\Delta B_w^\circ\| \leq  \frac{\xi_{}}{3 c_{\child}} \delta $. Hence, one can apply Lemma \ref{lem:kronecker_product_bound}, which shows
\begin{equation*}
    \|\Delta A_k^\circ\|
    \le c_{k; S}\left(\sum_{w \in \child(k)} \|\Delta B^{\circ}_{w}\|\right) \exp\left(\sum_{w \in \child(k)} \|\Delta B^{\circ}_{w}\|\right),
\end{equation*}
Using \eqref{eq:tighter},
\begin{equation*}
    \sum_{w \in \child(k)} \|\Delta B^{\circ}_{w}\| 
    \le c_{\child} \cdot \max_{w \in [d]} \|\Delta B^{\circ}_{w}\| 
    \le c_{\child} \frac{\xi\delta}{3 c_{\child}}
    \le \frac{\xi\delta}{3}.
\end{equation*}
Hence,
\begin{align*}
    \|\Delta A_k^\circ\|
    &\le c_{k; S}\left(\sum_{w \in \child(k)} \|\Delta B^{\circ}_{w}\|\right) \exp\left(\sum_{w \in \child(k)} \|\Delta B^{\circ}_{w}\|\right)\\
    &\le c_{k; S}\frac{\xi_{} \delta}{3} \exp(1)\\
    &\le c_{k; S}\xi_{} \delta,
\end{align*}
where the last two steps hold because $\frac{\xi\delta}{3} < 1$ and $\exp(1) < 3$. 

It remains to show how \eqref{eq:condition-for-lemma} lead to \eqref{eqn: core wise bound for Delta G_k^ast}. 

If $k$ is a leaf, 
\begin{equation*}
    \frac{\normi{\Delta G_k^\circ}}{\normi{G^{\circ}_k}} 
    = \frac{\normi{\Delta B^{\circ}_k}}{\normi{G^{\circ}_k}}
    \le \frac{ \|\Delta B^{\circ}_k\|}{\normi{G^{\circ}_k}}
    \le c_{k;G}\xi_{}  \delta 
    \le \delta,
\end{equation*}
where the first equation follows from \(\Delta G_k^\circ = \Delta B_k^\circ\) in \eqref{eq:rotated-cores}, the first inequality comes from \eqref{enumerate: three tensor spectral norm bound} in Lemma \ref{lem: 3-tensor basic results}, and the last inequality uses \(c_{k;A} = c_{k;S} = 1\) and \(\xi \leq \left(2 c_{k;A} \left(c_{k; S} + c_{k;G}\right)\right)^{-1} = \frac{1}{2}\left(1 + c_{k;G}\right)^{-1}\).

Importantly, note that
$$A_k^\circ = \left(\bigotimes_{w \in \child(k)} s_w B_w^\star\right)\left(\bigotimes_{w \in \child(k)} R_w \right),$$
and so the fact that each \(R_w\) is orthogonal implies \(\|\left(A_k^{\circ}\right)^\dagger\| = \|\left(A_k^{\star}\right)^\dagger\| = c_{k;A}\). For any non-leaf $k$, note that \( \xi,\delta \le 1\) leads to $\|\left(A_k^{\circ}\right)^\dagger\| \|\Delta A_k^{\circ}\| \le c_{k;A}c_{k;S}\xi \delta  \le 1 / 2$. From Lemma \ref{lem:tensor-eq-perturbation} and \eqref{enumerate: three tensor spectral norm bound} in Lemma \ref{lem: 3-tensor basic results}, it follows that
\begin{equation*}
\begin{split}
    \frac{\normi{\Delta G_k^\circ}}{\normi{G^{\circ}_k}} 
    & \le 2 \|\left(A^{\circ}_{k}\right)^\dagger\| \left(\|\Delta A^{\circ}_k\| + \|\Delta B^{\circ}_k\|c_{k;G}\right) \\
    & \le 2 c_{k;A} \left(c_{k;S} + c_{k;G}\right) \xi \delta \\
    & \le \delta,
\end{split}
\end{equation*}
and so we are done.


\end{proof}

\subsection{Remarks on sample complexity bound for total variation distance}\label{sec: remark to l_1 sample complexity}
Using the proof technique as outline before, one can derive a sample complexity upper bound on the total variation norm via the \(l_{1}\) distance between \(p^\star\) and \(\PTS\). Note that one can define a new norm \(\normi{\cdot}_{1}\) by
\[
\normi{G(\alpha;x;\beta)}_{1} := \sum_{x} \|G(\cdot, x, \cdot)\|,
\]
which is a similar definition to \(\normi{\cdot}\).

The proofs in Section \ref{sec: proof of sample complexity results} are also written such that the adaptation to \(\normi{\cdot}_{1}\) is straightforward. First, all of the results in Lemma \ref{lem: 3-tensor basic results} will hold in this new norm, with only a change in the constant in (v). Second, from the \(\normi{\cdot}_{1}\) version of Lemma \ref{lem: 3-tensor basic results}, one can bound the global \(\|\cdot\|_{1}\) error by the core-wise \(\normi{\cdot}_{1}\) error by an adaptation of Lemma \ref{lem:three-norm-total-contraction-perturbation}. Finally, for local \(\normi{\cdot}_{1}\) error on cores, the proof of Lemma \ref{lem:tensor-eq-perturbation} also proves that Lemma \ref{lem:tensor-eq-perturbation} holds if one replaces \(\normi{\cdot}\) by the new \(\normi{\cdot}_{1}\) norm. Importantly, the \(N = O(d^{2})\) scaling will still hold under the \(l_{1}\)-norm. 


\subsection{Proof of results}
\label{sec: proof of sample complexity results}
\begin{proof}
(of Lemma \ref{lem: 3-tensor basic results})

In the notation of Definition \ref{def: Matrix slice of 3-tensor}, one can write the definition of \(\circ\) by
\begin{equation*}\label{eqn: circ operation with dot}
    G \circ G'(\cdot, (x, y), \cdot) =  G(\cdot, x, \cdot) G'(\cdot, y, \cdot).
\end{equation*}
 
Associativity of \(\circ\) thus follows from associativity of matrix product, and likewise the inequality for \(\circ\) comes from submultiplicativity of matrix product under spectral norm:
\begin{equation*}
    \max_{(x,y)} \|G \circ G'(\cdot, (x, y), \cdot)\| = \max_{(x,y)}  \|G(\cdot, x, \cdot) G'(\cdot, y, \cdot)\| \leq \left(\max_{x}  \|G(\cdot, x, \cdot)\| \right)  \left(\max_{y} \| G'(\cdot, y, \cdot)\| \right)
\end{equation*}

Likewise, by abuse of notation, also use \(\otimes\) as the Kronecker product operation over matrices. Then one can simplify and write the definition of \(\otimes\) by
\begin{equation*}\label{eqn: prod operation with dot}
    G \otimes G'(\cdot, (x, y), \cdot) =  G(\cdot, x, \cdot) \otimes G'(\cdot, y, \cdot).
\end{equation*}

Associativity of \(\otimes\) likewise follows from associativity of Kronecker product over matrices. Likewise, the equality for \(\otimes\) comes from multiplicativity of matrix product under spectral norm:
\begin{equation*}
    \|G \otimes G'(\cdot, (x, y), \cdot)\| =  \|G(\cdot, x, \cdot)\| \cdot \|G'(\cdot, y, \cdot)\|
\end{equation*}

We now prove (\ref{enumerate: three tensor spectral norm bound}). For a vector \(v \in \bR^{r_{2}}\), one can view the vector \(G(\alpha, x; \beta)v\) as the concatenation of \(n\) smaller vectors of the form \(G(\cdot, x; \cdot)v\). For the upper bound, one has
    \begin{align*}
        \|G(\alpha, x; \beta)v\| = \sum_{x \in [n]}\|G(\cdot, x; \cdot)v\| \leq n\max_{x \in [n]}\|G(\cdot, x; \cdot)\| \|v\| = n \normi{G}\|v\|,
    \end{align*}
where is done after taking supremum over \(v\) with \(\|v\| = 1\).

For the lower bound, one has
    \begin{align*}
        \|G(\alpha, x; \beta)v\| = \sum_{x \in [n]}\|G(\cdot, x; \cdot)v\| \geq \max_{x \in [n]}\|G(\cdot, x; \cdot)\| \|v\| = \normi{G}\|v\|,
    \end{align*}
and likewise one is done after taking supremum over \(v\) with \(\|v\| = 1\).

\end{proof}

\begin{proof} (of Lemma \ref{lem:three-norm-contraction-bound})
Suppose that in \(T\), the maximum distance from the root node is \(L\). At a level \(l \in \{1, \ldots, L\}\), suppose there are \(d_{l}\) nodes in \(T\) which are of distance \(l\) to the root, denoted by the set \(\{v^{l}_{i}\}_{i = 1}^{d_{l}}\). Then, if one views \(p\) as a 3-tensor of size \(\{1\} \times \left[\prod_{i = 1}^{d}n_{i}\right] \times \{1\} \to \bR\), then one has
\begin{equation}
    p =  G_{\text{root}(T)} \circ \bigotimes_{i \in [d_{1}]}G_{v^{1}_{i}}  \circ \bigotimes_{k \in [d_{2}]}G_{v^{2}_{j}}  \circ \ldots \circ \bigotimes_{k \in [d_{L}]}G_{v^{L}_{k}},
\end{equation}
which is only a consequence of the structure of the TTNS ansatz and the 3-tensor structure of TTNS tensor core in Definition \ref{def: 3-tensor view of TTNS tensor core}. Then, by the chosen 3-tensor structure of \(p\), one has \(\|p\|_{\infty} = \normi{p}\). By Lemma \ref{lem: 3-tensor basic results}, one has 
\begin{equation}
    \|p\|_{\infty} = \normi{p} \leq  \normi{G_{\text{root}(T)}} \prod_{l = 1}^{L}\normi{ \bigotimes_{i \in [d_{l}]}G_{v^{l}_{i}}}  = \normi{G_{\text{root}(T)}} \prod_{l = 1}^{L}\prod_{i \in [d_{l}]}\normi{ G_{v^{l}_{i}}} = \prod_{i \in [d]}\normi{ G_{i}},
\end{equation}
where the first inequality and the second equality follows from \eqref{eqn: circ sub-multiplicative} and \eqref{eqn: otimes multiplicative} in Lemma \ref{lem: 3-tensor basic results}.

\end{proof}

\begin{proof} (of Lemma \ref{lem:three-norm-total-contraction-perturbation})
Let \(p_{0}, \ldots, p_{d}\) be a sequence of tensors such that \(p_{0} = p\), and \(p_{k}\) is the tensor formed by the TTNS tensor core \(\{G_{i} + \Delta G_{i}\}_{i = 1}^{k} \cup \{G_{j}\}_{j \not \in [k]}\). One is interested in the error \(\|p_{d} - p_{0}\|_{\infty}\), and one can bound by 
\begin{equation*}
    \|p_{d} - p_{0}\|_{\infty} \leq \sum_{k = 1}^{d}\|p_{k} - p_{k-1}\|_{\infty}.
\end{equation*}

One can then bound the magnitude of each term in this telescoping sum. Note that \(p_{k} - p_{k-1}\) is a TTNS ansatz formed by cores \(\{G_{i} + \Delta G_{i}\}_{i = 1}^{k-1} \cup \{\Delta G_{k}\} \cup \{G_{j}\}_{j = k+1}^{d}\), and thus by Lemma \ref{lem:three-norm-contraction-bound}
\begin{equation*}
    \|p_{k} - p_{k-1}\|_{\infty} \leq \prod_{i = 1}^{k-1}\normi{G_{i} + \Delta G_{i}} \normi{\Delta G_{k}} \prod_{j = k+1}^{d}\normi{G_{j}} \leq \delta_{k}\prod_{i = 1}^{d} (1 + \delta_i)\prod_{i = 1}^{d}\normi{\Delta G_{i}}.
\end{equation*}
 
Therefore, using $1 + x \le \exp(x)$, one has
\begin{equation*}
    \|p_{d} - p_{0}\|_{\infty} \leq \left(\sum_{i = 1}^{d}\delta_{i}\right)\prod_{i = 1}^{d} (1 + \delta_i)\prod_{i = 1}^{d}\normi{\Delta G_{i}} \leq \left(\sum_{i = 1}^{d}\delta_{i}\right)\exp\left(\sum_{i = 1}^{d} \delta_i\right)\prod_{i = 1}^{d}\normi{\Delta G_{i}}
\end{equation*}

\end{proof}

\begin{proof} (of Lemma \ref{lem:tensor-eq-perturbation})
Note that \eqref{eqn: Delta X bound relaxed version} is only a corollary of \eqref{eqn: Delta X bound tight version}. To prove \eqref{eqn: Delta X bound tight version}, it suffices to prove that for any \(i \in [n]\), one has
\begin{equation}
\label{eqn: Delta X bound slice version}
    \|\Delta X^{\star}(\cdot, i, \cdot)\| \le \frac{\|\left(A^{\star}\right)^\dagger\|}{1 - \|\left(A^{\star}\right)^\dagger\| \|\Delta A^{\star}\|} \left( \|\Delta A^{\star}\| \|X^{\star}(\cdot, i, \cdot)\| +  \|\Delta B^{\star}(\cdot, i, \cdot)\|\right),
\end{equation}
whereby \eqref{eqn: Delta X bound tight version} is obtained by taking maximum over \(i \in [n]\) on both sides. 

Based on the above observation, one can simplify notation and reduce argument over \(\normi{\cdot}\) to regular spectral norm over matrices. For a fixed \(i \in [n]\), define \(C^\star := B^{\star}(\cdot, i, \cdot)\). Let \(Y^\star\) be the matrix which is the unique \emph{exact} solution the linear equation \(A^{\star}Y = C^{\star}\). Naturally, one has \(Y^\star = X^{\star}(\cdot, i, \cdot)\). 

Likewise, define \(\Delta C^{\star} = \Delta B^{\star}(\cdot, i, \cdot)\) as the corresponding perturbation to \(C^{\star}\), and let \(Y^\star + \Delta Y^{\star}\) be the matrix which is the unique solution the linear equation \((A^{\star}+\Delta A^{\star})Y = (C^{\star}+\Delta C^{\star})\) in the sense of least squares. As before, one has \(Y^\star + \Delta Y^\star  = X^{\star}(\cdot, i, \cdot) +\Delta X^{\star}(\cdot, i, \cdot)\). Then, \eqref{eqn: Delta X bound slice version} is equivalent to the following inequality:
\begin{equation}
\label{eqn: Delta X bound matrix version}
    \|\Delta Y^{\star}\| \le \frac{\|\left(A^{\star}\right)^\dagger\|}{1 - \|\left(A^{\star}\right)^\dagger\| \|\Delta A^{\star}\|} \left( \|\Delta A^{\star}\| \|X^{\star}\| +  \|\Delta C^{\star}\|\right).
\end{equation}

To reduce further, for an arbitrary \(v \in \bR^m\), note that it suffices to prove the following result 
\begin{equation}
\label{eqn: Delta X bound vector version}
    \|\Delta Y^{\star}v\| \le \frac{\|\left(A^{\star}\right)^\dagger\|}{1 - \|\left(A^{\star}\right)^\dagger\| \|\Delta A^{\star}\|} \left( \|\Delta A^{\star}\| \|Y^{\star}v\| +  \|\Delta C^{\star}v\|\right),
\end{equation}
and \eqref{eqn: Delta X bound matrix version} follows by taking supremum over \(v\) with \(\|v\| = 1\). 

To simplify further, define \(b^{\star}  :=  C^{\star}v \in \bR^{l}\), and let \(x^{\star}  := Y^{\star}v \in \bR^{r}\) be the unique \emph{exact} solution to \(A^{\star}x = b^{\star}\).
Moreover, let $\Delta b^{\star}  :=  \Delta C^{\star}v$ and let $\Delta x^{\star}  :=  \Delta Y^{\star}v$, and then \(x^{\star}+ \Delta x^{\star}\) solves the linear equation \((A^{\star}+\Delta A^{\star})x = (b^{\star}+\Delta b^{\star})\) in the sense of least squares. This is exactly the setting of Theorem 3.48 in \cite{wendland_2017}, because of which \eqref{eqn: Delta X bound vector version} holds as a corollary. Thus we are done. 
\end{proof}

\begin{proof} (of Lemma \ref{lem:kronecker_product_bound}
)

Let $C'_i = C_i + E_i$, then
\begin{equation}
\label{eq:kronecker_error_split}
    \begin{split}
        \bigotimes_{i = 1}^{n} (C_i + E_i) - \bigotimes_{i = 1}^{n} C_i
        & = (C'_1 \otimes \cdots \otimes C'_n) - (C_1 \otimes C'_2 \otimes \cdots \otimes C'_n) \\
        & \quad + (C_1 \otimes C'_2 \otimes \cdots \otimes C'_n) - (C_1 \otimes C_2 \otimes C'_3 \otimes \cdots \otimes C'_n) \\
        & \quad + \cdots \\
        & \quad + (C_1 \otimes \cdots \otimes C_{n - 1} \otimes C'_n) - (C_1 \otimes \cdots \otimes C_n).
    \end{split}
\end{equation}

The first line on the right-hand side of \eqref{eq:kronecker_error_split} reduces to $E_1 \otimes C'_2 \otimes \cdots \otimes C'_n$. Since $\|C'_i\| \le \|C_i\| + \|E_i\| \le 1 + \delta_i$, 
\begin{equation*}
    \|E_1 \otimes C'_2 \otimes \cdots \otimes C'_n\|
    =
    \|E_1\| \|C'_2\| \cdots \|C'_n\|
    \le 
    \delta_1 (1 + \delta_2) \cdots (1 + \delta_n)
    \le
    \delta_1 \cdot \prod_{i = 1}^{n} (1 + \delta_i).
\end{equation*}

The norm of the $j$-th line on the right-hand side of \eqref{eq:kronecker_error_split} is upper bounded by $\delta_j \cdot \prod_{i = 1}^{n} (1 + \delta_i)$. Therefore, using $1 + x \le \exp(x)$, one has
\begin{equation*}
    \left\|\bigotimes_{i = 1}^{n} (C_i + E_i) - \bigotimes_{i = 1}^{n} C_i\right\| 
    \le \left(\sum_{i = 1}^{n} \delta_i\right) \cdot \prod_{i = 1}^{n} (1 + \delta_i)
    \le \left(\sum_{i = 1}^{n} \delta_i\right) \exp\left(\sum_{i = 1}^{n} \delta_i\right).
\end{equation*}
\end{proof}

\begin{proof}
(of Corollary \ref{cor: cor of Wedin} and Corollary \ref{cor: cor of Matrix Bernstein}) 

For Corollary \ref{cor: cor of Wedin}, we apply Theorem 2.9, (2.26a) in \cite{ccfm_2021}: if \( \|\Delta Z^{\star}\| \leq (1 - 1 / \sqrt{2}) \sigma_{r}(Z^{\star})\), then
    \[\mathrm{dist}(B, B^{\star}) \leq \frac{2 \|\left(B^{\star}\right)^{\top}\Delta Z^{\star}\|}{\sigma_{r}( Z^{\star}) - \sigma_{r+1}(Z^{\star})} \le  \frac{2 \|\left(B^{\star}\right)^{\top}\|\|\Delta Z^{\star}\|}{\sigma_{r}( Z^{\star}) - \sigma_{r+1}(Z^{\star})},\]
and we are done by applying \(\sigma_{r+1}(Z^{\star}) = 0\) and \(\|B^{\star}\| = 1\).

For Corollary \ref{cor: cor of Matrix Bernstein}, only \eqref{eqn: Matrix Bernstein relaxed} is new, and one only needs to justify \(\gamma \leq L^2\). By Jensen's theorem and sub-multiplicativity of spectral norm, one has
\begin{align*}
    \left\|\mathbb{E}\left[Z^{(i)}\left(Z^{(i)}\right)^{\top}\right]\right\| \leq
     \mathbb{E}\left[\left\|Z^{(i)}\left(Z^{(i)}\right)^{\top}\right\|\right]
    \leq \mathbb{E}\left[\left\|Z^{(i)}\right\|^2\right] \leq L^2.
\end{align*}

\end{proof}

\section{Numerical result}\label{sec: numerical result}
\newcommand{\Tpa}{T_{\mathrm{path}}}
\newcommand{\Epa}{E_{\mathrm{path}}}

In this section, we perform comparison of different modeling methods. There are four models of interest. The symbol \(\PTS\) stands for the model obtained from the TTNS-Sketch method. The symbol \(\hat p_{\mathrm{GM}}\) stands for the model one obtains from direct graphical modeling over a given tree structure \(T\). Specifically, the \(\hat p_{\mathrm{GM}}\) model with tree structure \(T\) refers to the graphical model over \(T\) where the parameters are chosen by maximum likelihood estimation. The symbol \(\hat p_{\mathrm{CL}}\) stands for the Chow-Liu model, which is obtained by direct graphical modeling with the Chow-Liu tree \(\TCL\). The symbol \(\hat p_{\mathrm{BM}}\) stands for the model one obtains from modeling with Born Machine (BM). The training of BM is done by optimizing Negative Log Likelihood (NLL), with details of the training following from that of \cite{han2018unsupervised}.

In what follows, the \emph{error} of a model \(p\) refers to the relative \(l_{2}\) error:
\[
\mathrm{Error}(p) := \frac{\|p - p^\star\|}{\|p\|}.
\]

For BM training, we will use Negative log likelihood level of the model as a performance metric:
\[
\mathrm{NLL}(p) := \sum_{i = 1}^{N}p(y_{1}^{(i)}, \ldots, y_{d}^{(i)}).
\]

\subsection{Numerical case study: tree graphical model with different input tree topology}

\begin{figure}
     \centering
     \begin{subfigure}[b]{\textwidth}
         \centering
         \includegraphics[width=0.9\textwidth]{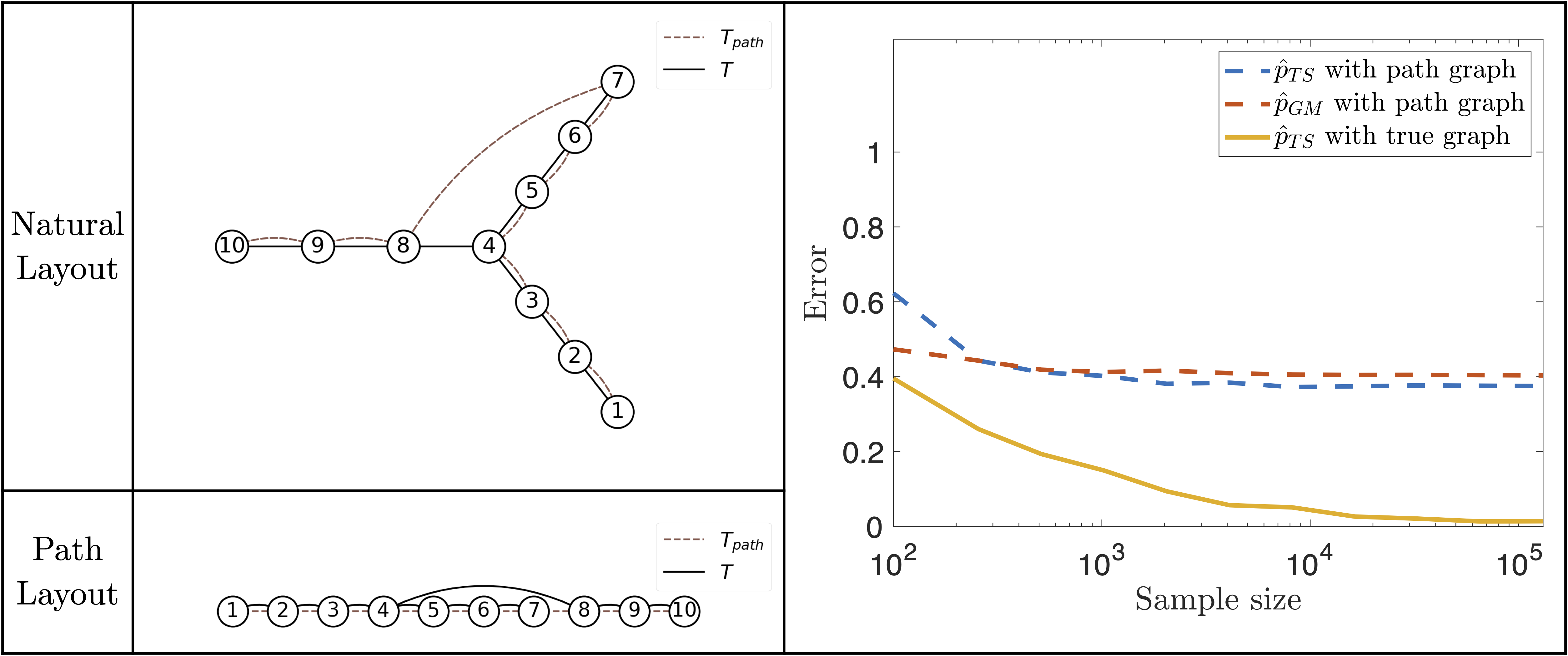}
         \caption{Trident Graph, $d = 10$}
         \label{fig:Trident Graph}
     \end{subfigure}
     \\
      \begin{subfigure}[b]{\textwidth}
         \centering
         \includegraphics[width=0.9\textwidth]{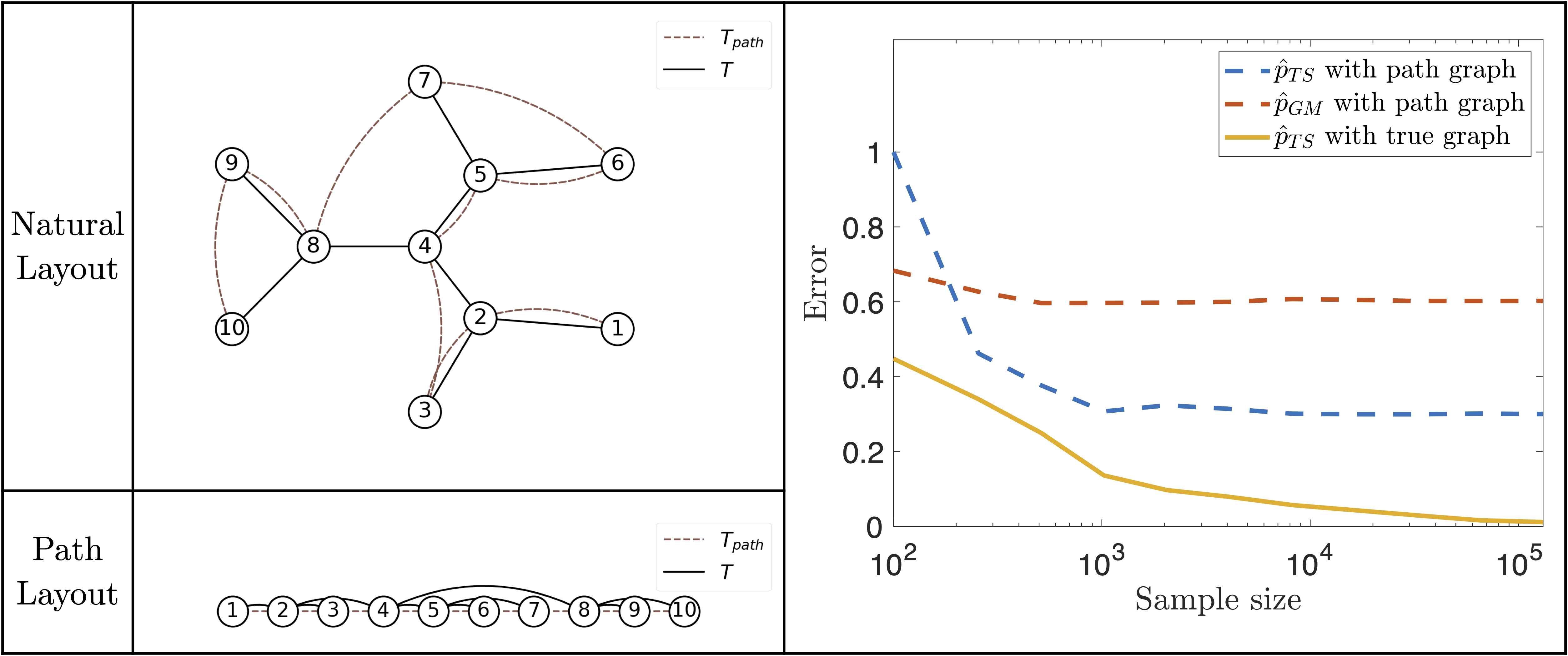}
         \caption{Dendrimer Graph, $d = 10$}
         \label{fig:Dendrimer Graph}
     \end{subfigure}
     \\
     \begin{subfigure}[b]{\textwidth}
         \centering
         \includegraphics[width=0.9\textwidth]{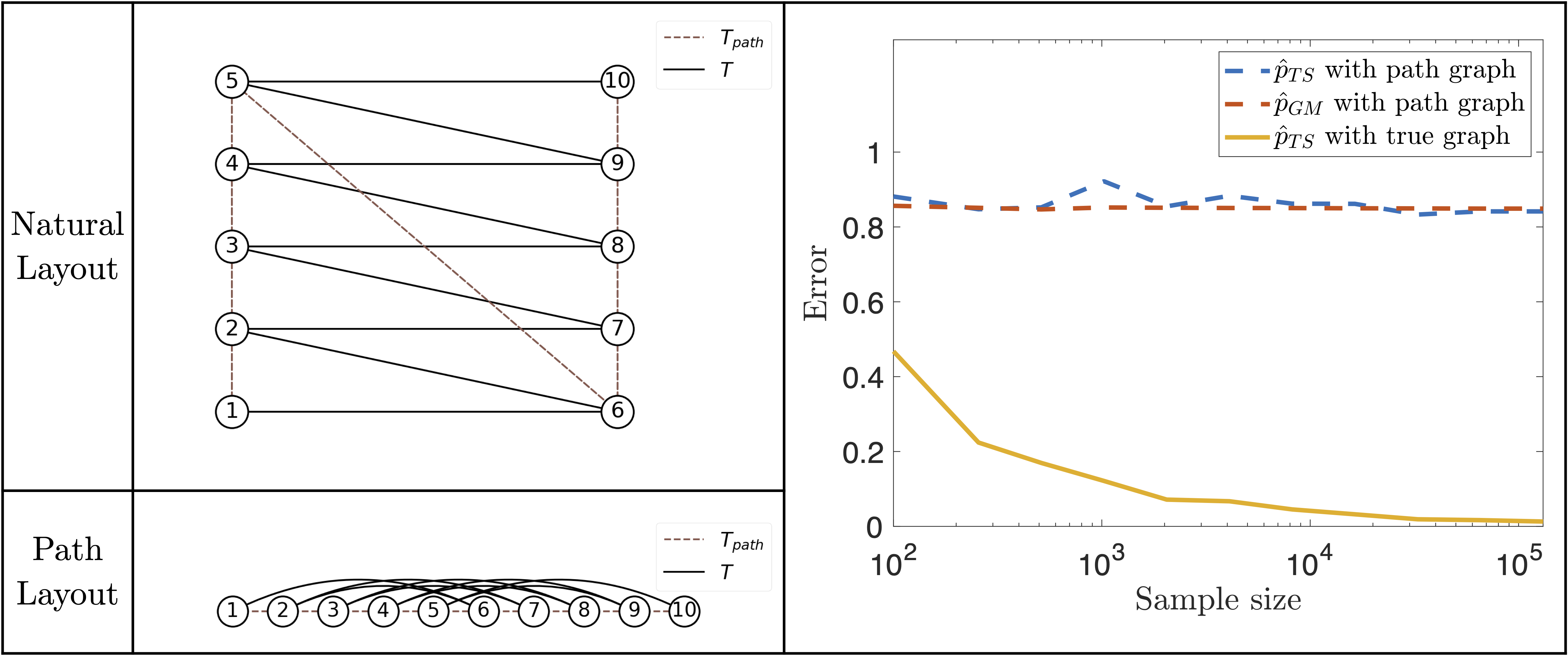}
         \caption{Bipartite Graph, $d = 10$}
         \label{fig:Bipartite Graph}
     \end{subfigure}
    \caption{The black solid line stands for edges on \(T\) and the dashed brown line stands for edges on \(\Tpa\). The graph is plotted in layout that is respectively natural for \(T\) and for \(\Tpa\). Error plot for each case of \(T\) are also included. One can see that convergence to true model only occurs with true tree specification. }
    \label{fig:natural vs path layout}
\end{figure}

In this numerical experiment, we aim to exhibit the importance of using a correct tree topology. We will focus on testing the TTNS-Sketch algorithm on a tree-based graphical model under input tree topology mis-specification. Specifically, given any fixed tree structure \(T = (V, E)\), we consider the following graphical model over \(T\):
\begin{equation}\label{eqn: graphical model over tree}
    p^\star(x_{1},\ldots, x_{d}) = \exp{\left(-\beta\sum_{(i,j) \in E}f_{i,j}(x_{i}, x_{j})\right)},
\end{equation}
where \(\beta > 0\) is the temperature parameter. We test a simple binary model where \(\beta = 1/2\) and \(f_{i,j} (x_{i}, x_{j}) = -x_{i}x_{j}\) with each \(x_{i} \in \{-1, 1\}\), which is the setting of standard ferromagnetic Ising model.

First, consider the case where \(T\) is a 10-node trident, see illustration in Figure \ref{fig:Trident Graph}. Even in such a simple example, one can see that there is not a good way to choose a path to fit the tree model. Suggested by our deliberate choice in ordering the variables, one reasonable candidate is a path graph \(\Tpa\) that traverses from node 1 to 10 in numerical order. Indeed, with only one exception of the edge \((4,8)\), the tree \(\Tpa\) is almost made up of edges from \(T\).  

Likewise, we include the 10-node dendrimer graph in Figure \ref{fig:Dendrimer Graph} and a bipartite graph in Figure \ref{fig:Bipartite Graph}, where we also use the numerical order to indicate the chosen \(\Tpa\) structure. While the dendrimer case also uses a reasonable path structure, one can see that the bipartite graph case uses a \(\Tpa\) structure very different from \(T\). In Figure \ref{fig:natural vs path layout}, the natural layout draws \(T\) and \(\Tpa\) in a layout natural to \(T\), where one can see how often \(\Tpa\) uses edge from \(T\). 

For the three tree structures, we will compare three methods (i)-(iii): (i) TTNS-Sketch over \(\Tpa\), (ii) Graphical modeling over \(\Tpa\), (iii) TTNS-Sketch over \(T\). In (i) and (iii), we choose the Markov sketch function. The results are listed in Figure \ref{fig:natural vs path layout}. One can see that the error for (iii) always converge to zero with large \(N\), which is consistent with Lemma \ref{lem: ttns-markov-exact}. However, both (i) and (ii) does not converge to \(p^{\star}\), with the performance being worst in the bipartite graph case.

In the path layout, one can more naturally ``count" the internal bond dimension necessary to let \(p^\star\) admit a TTNS ansatz under \(\Tpa\). Let \(\Epa\) stand for the edge set for \(\Tpa\), which is essentially \(\Epa = \{(i, i+1)\}_{i = 1}^{d-1}\). In this case, for any edge \(e = (i, i+1)\), one can calculate the internal bond \(r_{e}\) by counting the number of edges one would ``cut" if one places a vertical line in between node \(i\) and \(i+1\). If one counts \(q_i\) edges, then one has the upper bound \(r_{e} \leq 2^{q_i}\) for the Ising model. More generally, suppose \(X \sim p^{\star}\), and let each entry of \(X\) be a discrete variable over \(\{1, \ldots, n\}\), then \(r_{e}\) is upper bounded by \(n^{q_i}\). The description of \(q_i\) exactly coincides the number of edges across the partition \([d] = \{1,\ldots, i\} \cup \{i+1, \ldots, d\}\), i.e. the cardinality of the cut from the partition.

Thus, one can easily extend Figure \ref{fig:natural vs path layout} to cases with more nodes, and then the three models will have quite different behavior. The cut number \(q_i\) for the trident case is upper bounded by \(q_i \leq 2\), but largest \(q_i\) for the bipartite graph is \(d-1\) when \(d\) is even. Hence, in the bipartite case, the TTNS ansatz of \(p^\star\) under \(\Tpa\) is not practical to compute, while the TTNS ansatz of \(p^{\star}\) under \(T\) is simple. Importantly, the bipartite graph \(T\) is itself a path graph, and so the \(p^{\star}\) is not complicated. 

Thus, while modest tree structure mis-specification can be treated with higher bond dimension, a large structural deviation may lead to an intractable bond dimension penalty. We also note that in all of these cases, one has \(T_{\mathrm{CL}} = T\) with overwhelmingly high probability. Due to the \(O(d)\) sample complexity to recover \(T\), we will henceforth assume that one has access to the true tree model \(T\).

\subsection{Numerical case study: 1D spin system with non-local interactions}\label{sec: non-local}

In this example, we consider a more complicated Markov random field model. Given a fixed tree structure \(T = (V, E)\), we denote the shortest-path distance on \(T\) as \(\dist_{T}\). For any fixed positive integer \(l\), we propose the following model
\begin{equation}\label{eqn: general markov model}
    p^\star(x_{1},\ldots, x_{d}) = \exp{\left(-\beta\sum_{\dist_{T}{(i,j)} \leq l}f_{i,j}(x_{i}, x_{j})\right)}.
\end{equation}

This is also a graphical model over \(G_{T} = (V,E')\) by letting \(E' = \{(i,j) \mid \dist_{T}{(i,j)} \leq l\}\). In particular, we set \(l := 2\). Moreover, we consider the case where \(T\) is a path graph with \(d = 32\). See the illustration in Figure \ref{fig:non-local tree}.

Moreover, to demonstrate our algorithm under more general situations than binary data, we consider the 4-state clock model with \(f_{i,j}(x_{i}, x_{j}) =  \cos{(x_i - x_j)}\) and \(x_i \in \{0, \frac{1}{2}\pi, \pi, \frac{3 }{2}\pi\} \). In this case, we set \(\beta = 1/4\) to ensure the spin-spin correlation strength is at an appropriate level. 

\begin{figure}
     \centering
     \includegraphics[width=\textwidth]{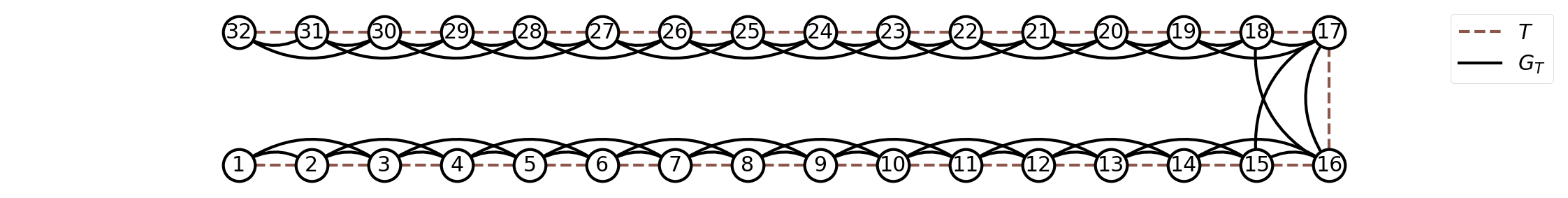}
     \caption{Illustration of the graphical model in Section \ref{sec: non-local}. The solid black line indicates edges for true graphical model \(G_{T}\). The dashed brown line indicates edges for \(T\), the tree model to be used for TTNS-Sketch.}
      \label{fig:non-local tree}
     \centering
     \includegraphics[width=0.6\textwidth]{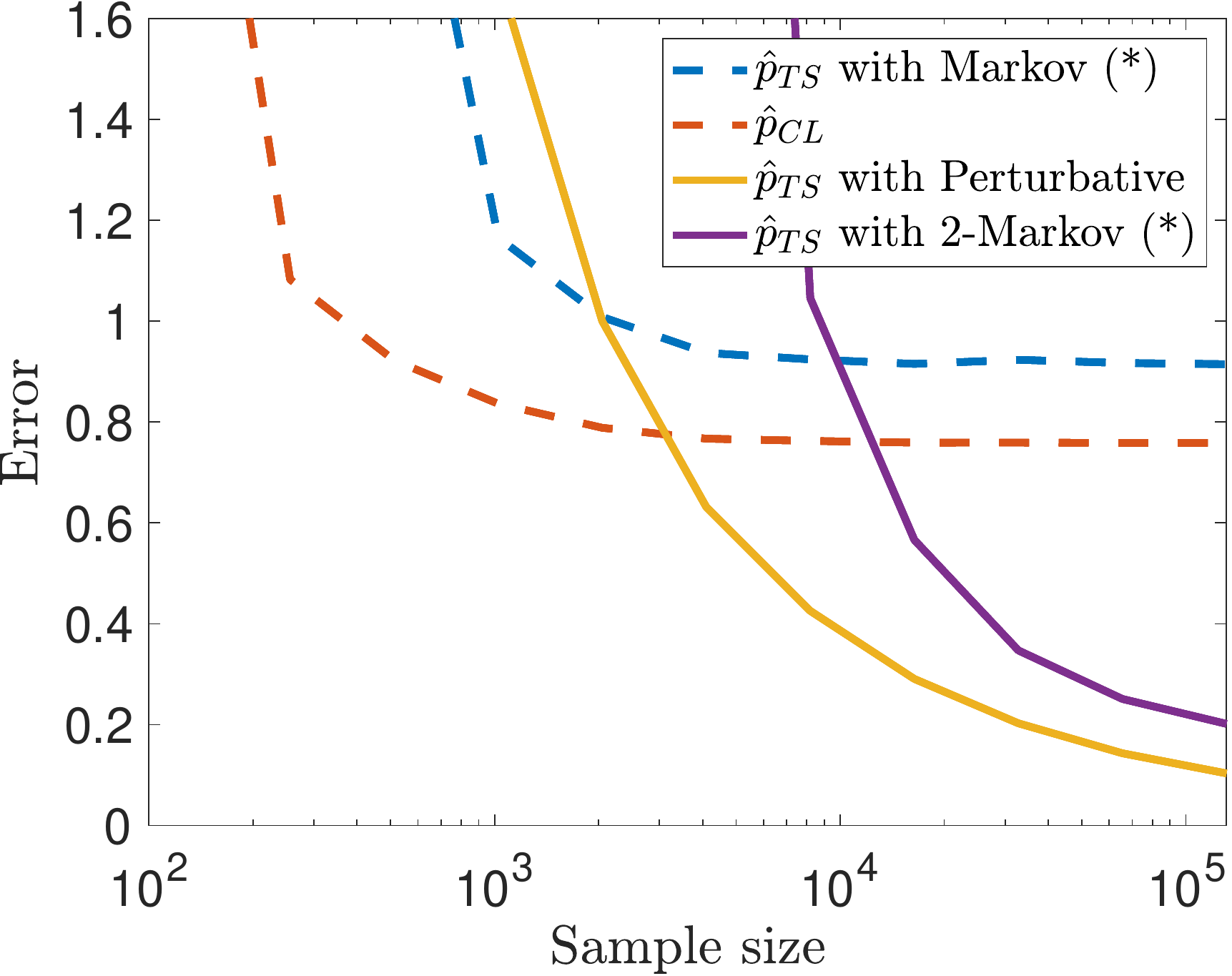}
    \caption{Error comparison for the graphical model in Figure \ref{fig:non-local tree}. The model \(\hat p_{\mathrm{CL}}\) stands for the result of direct graphical modeling with the tree structure \(T\) in Figure \ref{fig:non-local tree}. TTNS-Sketch with perturbative sketch function converges to the true model. The sign (*) stands for a truncation of the internal bond \(r_{e}\). One can see that, even with truncation, TTNS-Sketch with 2-Markov still performs worse than perturbative sketching without truncation. Moreover, due to the pre-selected internal bond truncation, the 2-Markov model cannot converge to the true model in the limit of samples.}
     \label{fig:2_nbhd_markov_path}
\end{figure}

If one applies Chow-Liu algorithm to the model in \eqref{eqn: general markov model}, one could obtain any one of the spanning tree of \(G_{T}\). Hence, for the sake of fair comparison, we fix the path graph \(T\) as the input tree graph to TTNS-Sketch. We test TTNS-Sketch under the following sketch function (i) perturbative sketching function, (ii) Markov sketch function, and (iii) 2-Markov sketch function (defined in Section \ref{sec: high-order markov sketch}). For the perturbative sketch function, we pick \(\epsilon = 0.05\) and a sketch core size of \(l_{e} = 20\) for any \(e \in E\).

We observe convergence for TTNS-Sketch under the perturbative sketching function. On the other hand, TTNS-Sketch with Markov and 2-Markov sketch function both encounter numerical blow-up for sample size up to \(N = 2^{17}\). To provide further comparison, we will include a ``truncated" version of both the Markov and 2-Markov model. For Markov model, we truncate to the target internal bond dimension \(r_{e} = 2\), down from \(r_{e} = n = 4\). For 2-Markov model, we truncate to the target internal bond dimension \(r_{e} = 6\), down from \(r_{e} = n^2 = 16\). In particular, the 2-Markov sketch function with \(r_{e} = n^2\) actually satisfies the exact recovery property, but the truncated \(r_{e} = 6\) version performs better and converges to a reasonable error. For benchmark, we also include direct graphical modeling. The result is shown in Figure \ref{fig:2_nbhd_markov_path}.

Therefore, one can see that the error analysis in Theorem \ref{thm:sample-complexity} is sensitive to the condition number provided by the given sketch function. In the case of large condition number, one might actually obtain a better result from truncating \(r_{e}\).Error analysis for TTNS-Sketch with truncation is out of the scope for this paper, but one can see that the perturbative sketch function actually performs quite well without any truncation.


\subsection{Numerical case study: tree graphical model with large variable dimension}\label{sec: large variables}

In this subsection, we will test the performance of TTNS-Sketch in the setting of large \(d\). We consider tree-based graphical models of the form \eqref{eqn: graphical model over tree} with \(\beta = 1/2\), \(f_{i,j} (x_{i}, x_{j}) = -x_{i}x_{j}\), and \(x_{i} \in \{-1, 1\}\), which is the case of Ising model. As a remark, cases under more general \(\{f_{i,j}\}_{(i,j) \in E}\) leads to the same conclusion, and so they are excluded for the sake of brevity. For the candidate graph \(T\), we consider a path graph with \(d = 100\) nodes and the 3-fractal dendrimer graph with \(d = 94\) nodes, see Figure \ref{fig:large variable graph}.

\begin{figure}
     \centering
     \begin{subfigure}[b]{0.495\textwidth}
         \centering
         \includegraphics[width=\textwidth]{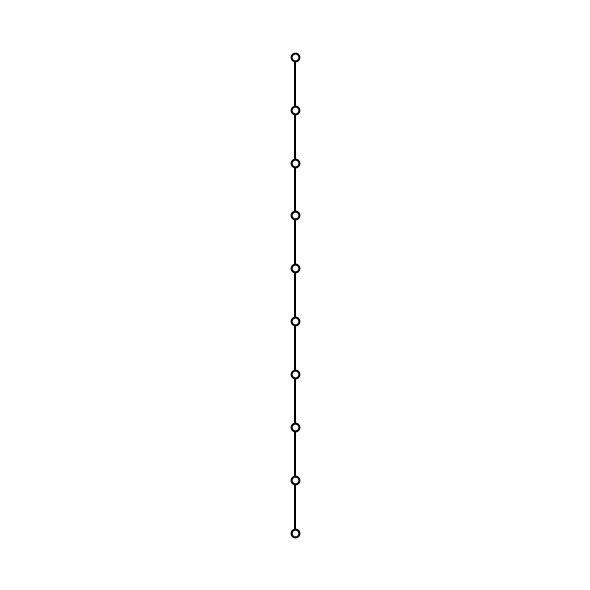}
         \caption{Path Graph (segment)}
         \label{fig:line}
     \end{subfigure}
     \begin{subfigure}[b]{0.495\textwidth}
         \centering
         \includegraphics[width=\textwidth]{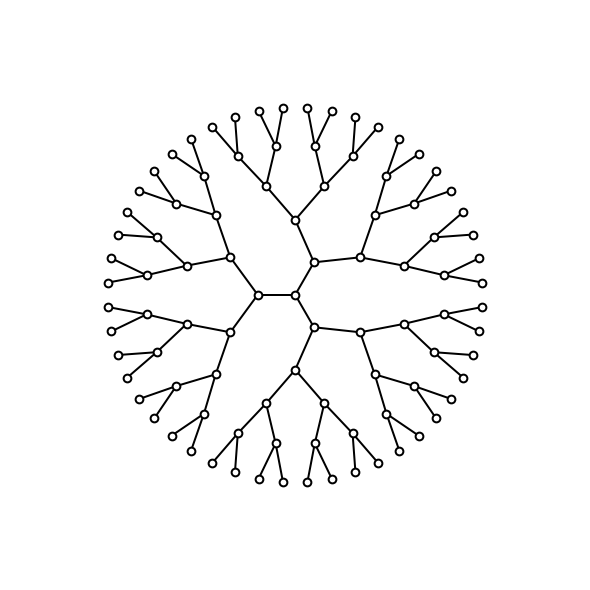}
         \caption{Dendrimer Graph with \(d = 94\)}
         \label{fig:dendrimer}
     \end{subfigure}
    \centering
    \caption{Plot of graphs considered in Section \ref{sec: large variables}}
    \label{fig:large variable graph}
     \begin{subfigure}[b]{0.495\textwidth}
         \centering
         \includegraphics[width=\textwidth]{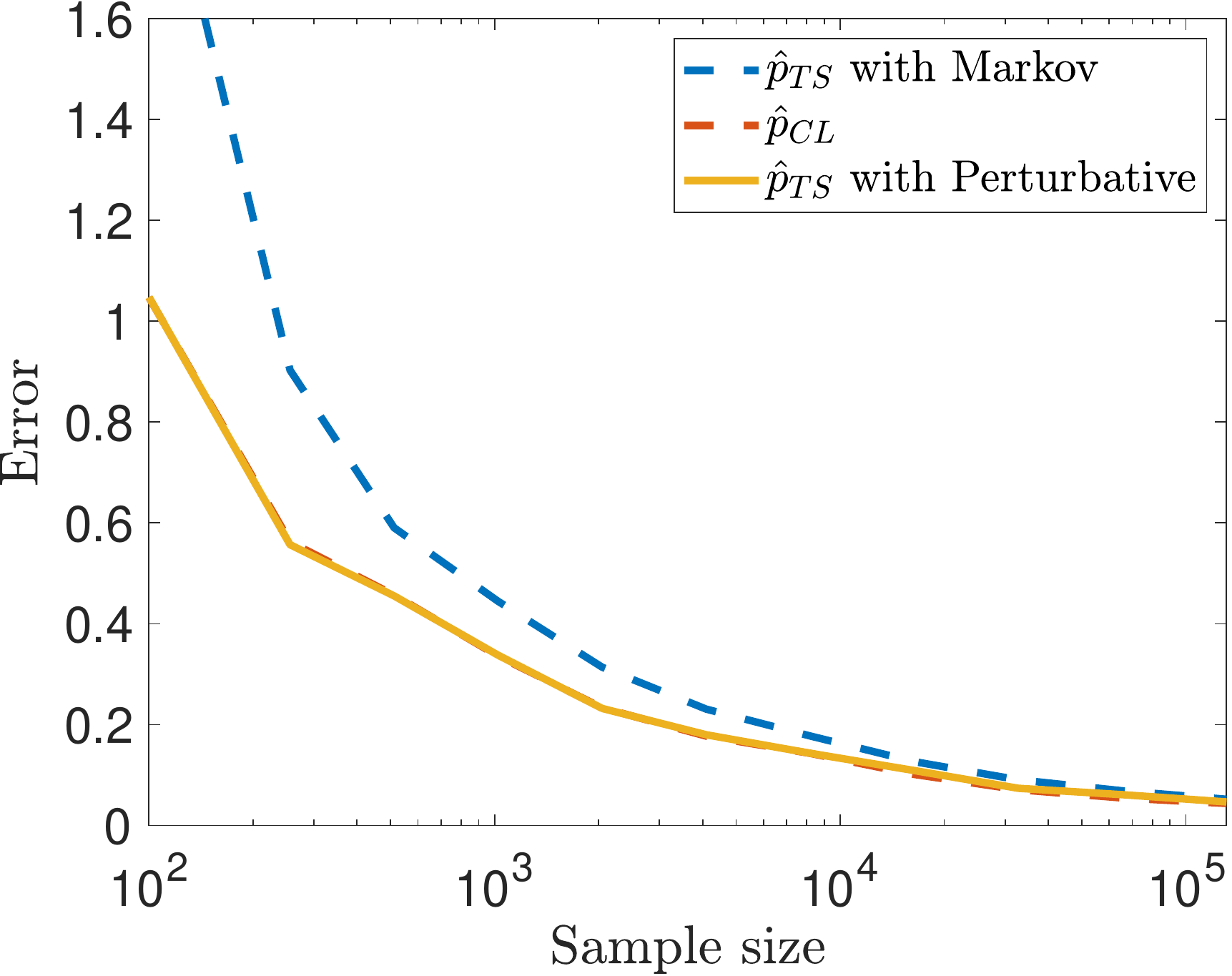}
         \caption{Path graph \(d = 100\) }
         \label{fig:line result}
     \end{subfigure}
     \begin{subfigure}[b]{0.495\textwidth}
         \centering
         \includegraphics[width=\textwidth]{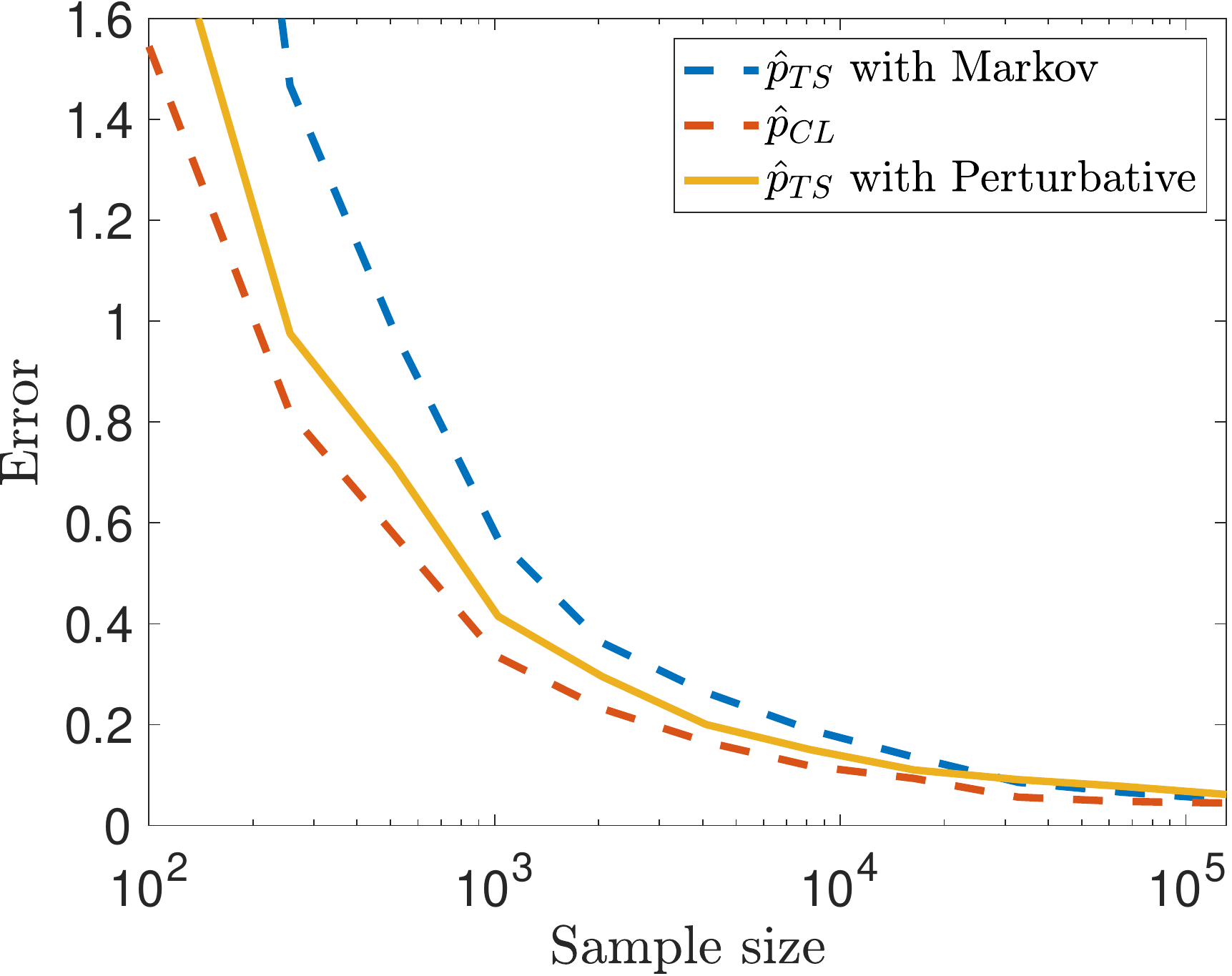}
         \caption{Dendrimer graph with \(d = 94\)}
         \label{fig:dendrimer result}
     \end{subfigure}
    \centering

    \caption{Error comparison for path graph with \(d = 100\) and dendrimer graph with \(d = 94\). The model \(\hat p_{\mathrm{CL}}\) stands for the result of direct graphical modeling with the true Tree structure \(T\). One can see that all methods converge to the true model. Among TTNS-Sketch based models, TTNS-Sketch with perturbative sketch function has the best performance. In Figure \ref{fig:line result}, \(\PTS\) with perturbative sketching has an error that is quite close to the benchmark \(\hat p_{\mathrm{CL}}\).}
    \label{fig:large variable result}
\end{figure}

For samples generated from the underlying model, direct graphical modeling with the underlying tree model \(T\) provides the best tree-based graphical model in the sense of MLE, and hence we put it as the benchmark. 
We compare the following sketch functions: (i) Markov sketch function, (ii) perturbative sketching function with \(\epsilon = 0.05\). The result can be seen in Figure \ref{fig:large variable result}. One can see that perturbative sketching significantly outperforms (i). In the path case, perturbative sketching even has an almost equivalent performance to direct graphical modeling. Due to the \(N = O(d^{2})\) scaling in Theorem \ref{thm:sample-complexity} and the \(O(d)\) computational scaling, one can likewise perform the same procedure up to very large \(d\).

\subsection{Numerical case study: spin system with long range interactions}\label{sec: long range}

\begin{figure}
    \centering
     \begin{subfigure}[b]{0.4\textwidth}
         \centering
         \includegraphics[width=\textwidth]{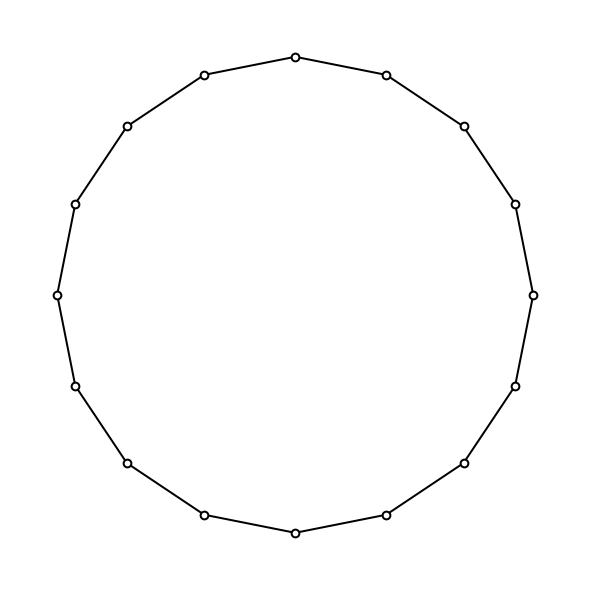}
         \caption{Ring model with \(d = 16\)}
         \label{fig:circle}
     \end{subfigure}
     \centering
     \begin{subfigure}[b]{0.4\textwidth}
         \centering
         \includegraphics[width=\textwidth]{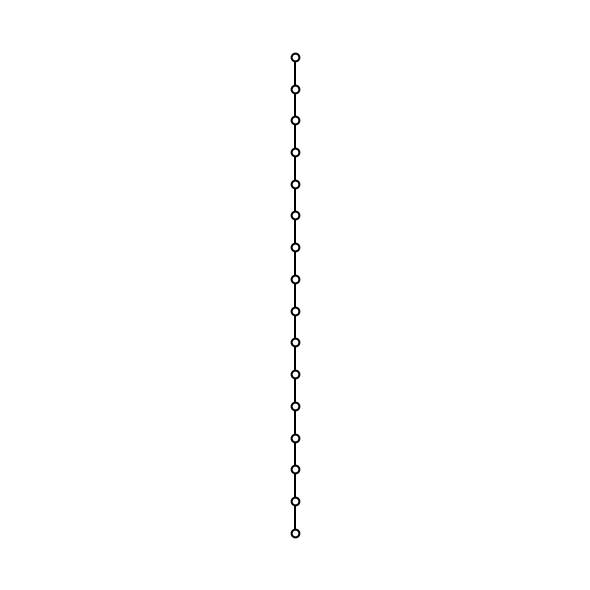}
         \caption{Linear path model with \(d = 16\)}
         \label{fig:line_16}
     \end{subfigure}
    \caption{Graph representations of the true model in Section \ref{sec: long range} and the path model learned by path-based graphical modeling.}
    \label{fig:graph plot}
\end{figure}
\begin{figure}
     \centering
     \includegraphics[width=0.6\textwidth]{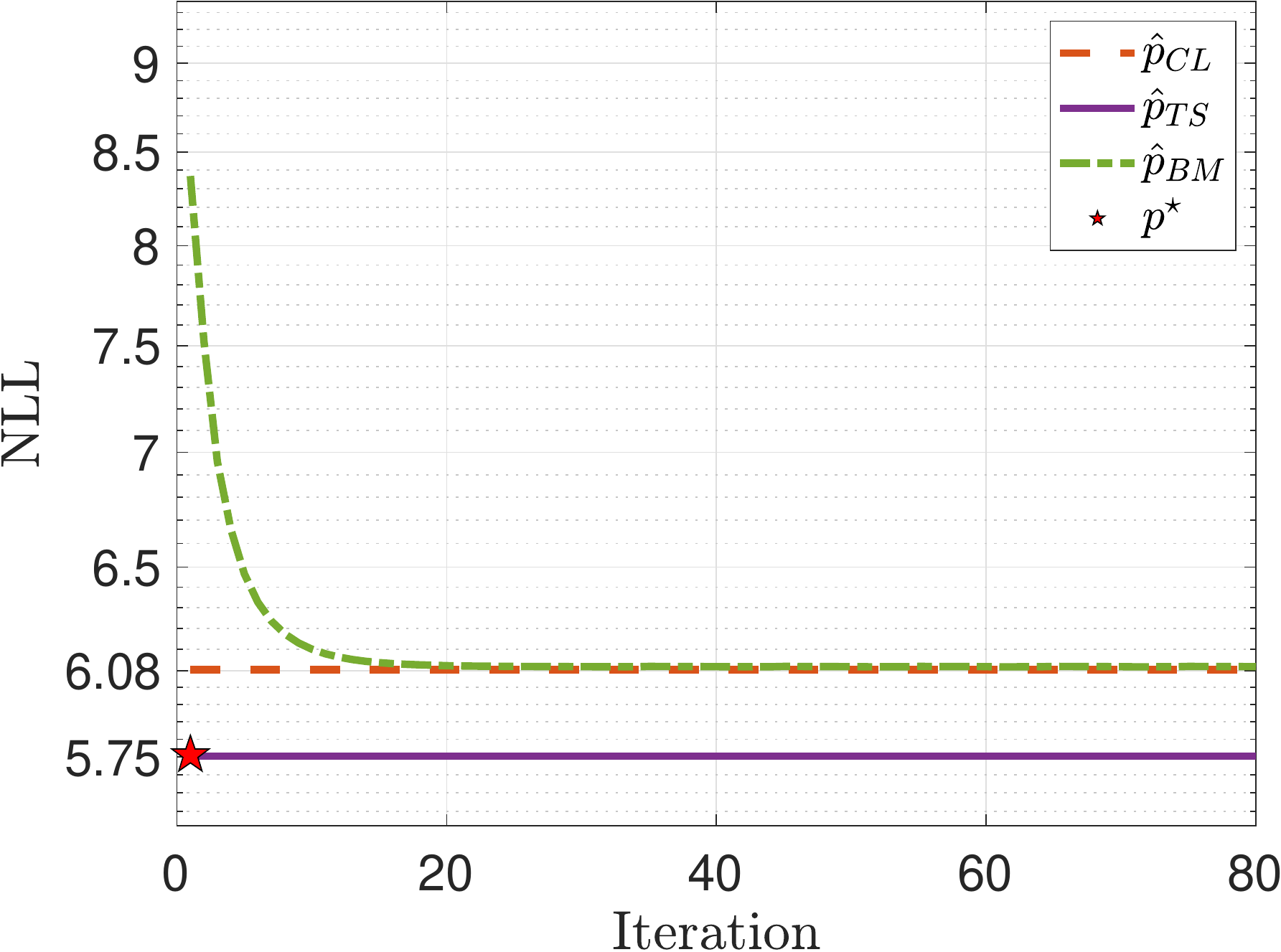}
    \caption{Negative Log Likelihood (NLL) level comparison for the ring graph with \(d = 16\) in Figure \ref{fig:circle}. Here, \(N = 2^{15} = 32,768\). The model \(\hat p_{\mathrm{CL}}\) stands for the result of direct graphical modeling with the path structure in Figure \ref{fig:line_16}. The model \(\hat p_{\mathrm{BM}}\) stands for the result of Born Machine with \(r_{\mathrm{max} = 4}\). The model \(\hat p_{\mathrm{TS}}\) stands for the result of TTNS-Sketch with the special high-order Markov sketch function in \eqref{eqn: choice of neighbor long range}. One can see that the output of TTNS-Sketch is close to the NLL level of the true model, while the output of BM is close to the NLL level of the path-based graphical model \(\hat p_{\mathrm{CL}}\).}
     \label{fig:BM_ring_example}

     \centering
     \includegraphics[width=0.6\textwidth]{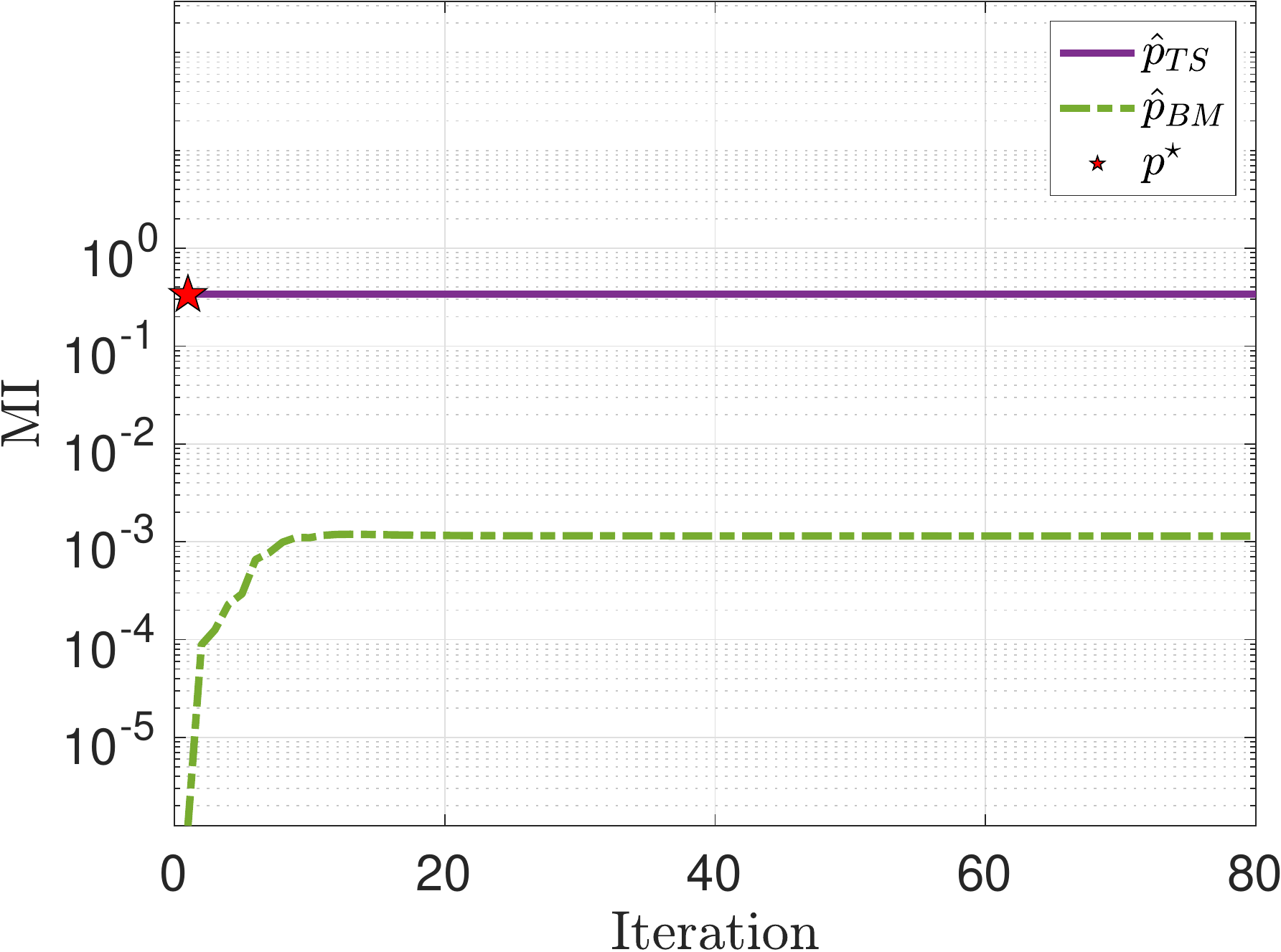}
    \caption{Mutual Information (MI) level comparison for the ring graph with \(d = 16\) in Figure \ref{fig:circle}, with experiment setting described in the caption of Figure \ref{fig:BM_ring_example}. The MI level of a distribution \(p\) refers to \(I(X_{1}, X_{d})\) where \(X \sim p\). In other words, the MI level is the mutual information between the last and first entry of the multivariate distribution according to \(p\). One can see that the mutual information level of BM is close to the zero throughout training, which is a further evidence that BM effectively learns a path graphical model.}
     \label{fig:BM_ring_exampleMI_plot}
\end{figure}

Quite importantly, in this numerical experiment, we introduce another important benchmark called Born Machine (BM). As BM is a method over the tensor train format, we restrict TTNS-Sketch to the path graph \(\Tpa\) to ensure fair comparison.

There are two important differences between BM and TTNS-Sketch. First, BM assures positivity of the trained model. Second, BM is based on Negative Log Likelihood (NLL) training, which is a more conventional error metric for statistical modeling. While TTNS-Sketch does not ensure the trained model is positive, the \(\|\cdot\|_{\infty}\) norm error bound obtained in Section \ref{sec:sample-comlexity} ensures that positivity is not a major issue if the correct tree structure and sketch function are provided.

During numerical experiment with BM, we have discovered a significant failure mode of BM, which is that it fails to converge to the true model for periodic spin systems. Similar observation has been independently made in \cite{gomez2021born}. To show the failure mode in the simplest possible setting, we discuss the case of a Markov random field with a ring graph \(G\) with \(d = 16\) nodes, and its graphical model representation of \eqref{eqn: periodic ising model} is illustrated in Figure \ref{fig:circle}:
\begin{equation}\label{eqn: periodic ising model}
    p^\star(x_{1},\ldots, x_{d}) = \exp{\left(-\sum_{i - j = 0 \mod d}f_{i,j}(x_{i}, x_{j})\right)}
\end{equation}



For TTNS-Sketch, we pick a special high order Markov sketch function, with the following neighborhood of interest: 
\begin{equation}
    \label{eqn: choice of neighbor long range}
    \mathcal{S}_{k} = \{k-1, k, k+1\} \cup \{1, d\},
\end{equation}
which is chosen to ensure convergence in the limit of sample size. Importantly, we include direct graphical modeling over the path graph \(\Tpa\) as a benchmark. 

We now discuss the numerical specification for the BM modeling. We pick the sample size \(N = 2^{15} = 32,768,\) which is sufficiently large so that the generalization error should be small. For the BM method, there is a parameter \(r_{\mathrm{max}}\), which is the largest allowed internal bond dimension, i.e. \(\max_{e \in E}r_e \leq r_{\mathrm{max}}\). Importantly, \(p^\star\) admits a tensor train ansatz with \(\max_{e \in E}r_e \leq 4\), and so we pick \(r_{\mathrm{max}} = 4\) to ensure that the BM method in this instance has no approximation error. The only training parameter is the learning rate and is picked according to cross-validation. 

In Figure \ref{fig:BM_ring_example}, we plot the result of the comparison between BM, TTNS-Sketch, and direct graphical modeling. Quite surprisingly, TTNS-Sketch can converge, while the NLL level of BM stays at a sub-optimal level. 
Interestingly, the NLL level of the BM model \(\hat p_{\mathrm{BM}}\) converges to that of the graphical model \(\hat p_{\mathrm{CL}}\). Moreover, one can also show that the KL-divergence between \(\hat p_{\mathrm{BM}}\) and \(\hat p_{\mathrm{CL}}\) is quite small. In the converged model plotted in Figure \ref{fig:BM_ring_example}, one has \( \KL{\hat p_{\mathrm{BM}}}{p^\star} \approx 0.4357\), while \( \KL{\hat p_{\mathrm{BM}}}{\hat p_{\mathrm{CL}}} \approx 0.0266\). 

Numerically, we can then conclude that the BM training implicitly leads to a sub-optimal model, and the gap to true model can be explained by the sub-optimal model's closeness to the path-based graphical model \(\hat p_{\mathrm{CL}}\), see Figure \ref{fig:graph plot} for illustration. As an additional evidence, during in the training dynamics, the learned model fails to capture the correlation between node \(1\) and node \(d\). In Figure \ref{fig:BM_ring_exampleMI_plot}, the mutual information level between node \(1\) and node \(d\) is plotted throughout training, and one can see the level is consistently close to zero, despite the strong correlation in \(p^\star\).



In practice, when \(d = 16\), this phenomenon of BM training failure persists up to \(r_{\mathrm{max}} = 7\) , and is resolved by setting \(r_{\mathrm{max}} \geq 8\). However, if \(d\) is increased, we have observed that the smallest \(r_{\mathrm{max}}\) for successful BM training also increases. This coupling of maximal bond dimension with \(d\) is problematic, as one could no longer expect a linear dependency between parameter size and the dimension \(d\). 

On the other hand, for TTNS-Sketch under \eqref{eqn: choice of neighbor long range}, one is theoretically guaranteed convergence to true model with \(r_{e} \leq 4\), the proof of which can be obtained by adapting the proof for Lemma \ref{lem: ttns-markov-exact}. Moreover, for TTNS-Sketch with sketch function in \ref{eqn: choice of neighbor long range}, increasing \(d\) does not affect performance beyond the sample complexity scaling in \(d\) as discussed in Section \ref{sec:sample-comlexity}.



\section{Conclusion}
We describe an algorithm TTNS-Sketch, which obtains a Tree Tensor Network States representation of a probability density from a collection of its samples. This is done by formulating a sequence of equations,
one for each core, which can be solved independently. This is a general framework which allows for arbitrary tree structures to be used. We have also compared this algorithm with similar training-based regimes in the tensor train format, in which we have shown much better performance from TTNS-Sketch even in a simple case of periodic spin systems. For models which have interaction beyond immediate neighbors, we have shown that TTNS-Sketch with perturbative sketching greatly outperforms Chow-Liu. Theoretically, we have provided condition for TTNS-Sketch to be a consistent estimator, as well as a reasonable sample complexity upper bound. 

While TTNS-Sketch might not necessarily be superior at approximating to an arbitrary density, the numerical and theoretical evidence gathered point to the conclusion that it is good at the inverse problem of solving for the tensor component of a density with a TTNS ansatz. In other words, a deterministic linear algebraic subroutine is sufficient to approximate a \(p^{\star}\) with a low-rank TTNS format. While this point has been made for tensor completion problems, it is quite remarkable that statistical inference of models with a TTNS ansatz can be shown to reduce to a linear algebraic problem.
\bibliographystyle{apalike}
\bibliography{ref}

\appendix
\section{Proof of Theorem \ref{thm:ttns-existence}}\label{appendix: proof of basic ttns existence}

\begin{proof} (Of Theorem \ref{thm:ttns-existence})
    For simplicity, for the remainder of the proof, we fix a structure for all of the high-order tensors we use. For \(\Phi_{w \to k}\), one reshapes it to the unfolding matrix \(\Phi_{w \to k}(x_{\leftside(w) \cup w}; \alpha_{(w, k)})\). For \(\Psi_{w \to k}\), one reshapes it to the unfolding matrix \(\Psi_{w \to k}(\alpha_{(w, k)}; x_{\rightside(w)})\). For \(\Phi_{\child(k) \to k}\), one reshapes it to the unfolding matrix \(\Phi_{\child(k) \to k}( x_{\leftside(k)}; \alpha_{(k, \child(k))})\). For \(G_{k}\), one reshapes it to the unfolding matrix \(G_{k}(\alpha_{(k, \child(k))}; x_{k}, \alpha_{(k, \parent(k))})\). For \(p\), we will explicitly write out the unfolding matrix structure to avoid ambiguity.

    According to Condition \ref{cond: TTNS ansatz condition}, for any edge $w \to k$, one has 
    \begin{equation}
    \label{eq:unfolding-decomposition}
        p(x_{\leftside(w)\cup w}; x_{\rightside(w)}) =  \Phi_{w \to k} \Psi_{w \to k}.
    \end{equation}
    
    Then, define $\Phi_{w \to k}^\dagger \colon [r_{(w, k)}] \times \left[\prod_{i \in \leftside(w) \cup w} n_i\right] \to \bR$ and $\Psi_{w \to k}^\dagger \colon \left[\prod_{i \in \rightside(w)} n_i\right] \times [r_{(w, k)}] \to \bR$
    so that $\Phi_{w \to k}^\dagger (\alpha_{(w, k)}; x_{\leftside(w) \cup w})$ denotes the pseudoinverse of $\Phi_{w \to k}$, and $\Psi_{w \to k}^\dagger(x_{\rightside(w)}; \alpha_{(w, k)})$ denotes the pseudoinverse of $\Psi_{w \to k}(\alpha_{(w, k)}; x_{\rightside(w)})$. Then, 
    \begin{equation*}
        \Phi_{w \to k}^\dagger\Phi_{w \to k} = \Psi_{w \to k}\Psi_{w \to k}^\dagger = \mathbb{I}_{r_{(w, k)}}
    \end{equation*}
    

    First, we prove uniqueness of each equation of \eqref{eq:ttn-determining} in the sense of least squares. Note that an exact solution is guaranteed when \(k\) is leaf, and so one only needs to consider when \(k\) is non-leaf. By assumption, $\Phi_{w \to k}$ has full column rank of $ r_{w \to k}$. In particular, the Kronecker product structure ensures that $\Phi_{\child(k) \to k}$ has full column rank of $ \prod_{w \in \child(k)} r_{w \to k}$. Therefore, a unique of solution to \eqref{eq:ttn-determining} exists in the sense of least squares.
    
    Moreover, when \(k\) is non-leaf and non-root, the pseudoinverse \(\Phi_{\child(k) \to k}^\dagger(\alpha_{(k, \child(k))}; x_{\leftside(k)})\) leads to the following explicit construction of \(G_k\):
    \begin{equation}
    \label{eq: G_k construction by unfolding}
        G_k = \Phi_{\child(k) \to k}^\dagger\Phi_{k \to \parent(k)},
    \end{equation}
    and likewise when \(k\) is root, one has
    \begin{equation}
        \label{eq: G_k construction by unfolding root case}
        G_k = \Phi_{\child(k) \to k}^\dagger p(x_{\leftside(k)};  x_{k}).
    \end{equation}
    
    
    To verify that \eqref{eq:ttn-determining} holds exactly for the construction of \(G_{k}\) in \eqref{eq: G_k construction by unfolding}, one can argue it suffices to check that
    \begin{equation}
        \label{eq: G_k equation wts}
        \Phi_{\child(k) \to k}\Phi_{\child(k) \to k}^\dagger p(x_{\leftside(k)};  x_{k \cup \rightside(k)}) = p(x_{\leftside(k)};  x_{k \cup \rightside(k)}),
    \end{equation}
    for which we give a brief explanation.
    When \(k\) is root, \eqref{eq: G_k construction by unfolding root case} implies that \eqref{eq: G_k equation wts} coincides with \eqref{eq:ttn-determining} for when \(k\) is root. When \(k\) is non-root and non-leaf, one can multiply both sides of \eqref{eq: G_k equation wts} by \(\Psi_{k \to \parent(k)}^\dagger\) and sum over \(x_{\rightside(k)}\). According to \eqref{eq: G_k construction by unfolding}, the obtained equation coincides with \eqref{eq:ttn-determining} for when \(k\) is non-root and non-leaf.
    
    It remains to show that \eqref{eq: G_k equation wts} holds. For an edge \(w \to k \in E\), define an intermediate term \(Q_{w \to k}\) as follows 
    \[
    Q_{w \to k}(x_{\leftside(w) \cup w}; y_{\leftside(w) \cup w})  :=  \sum_{\alpha_{(w,k)}}\Phi_{w \to k}(x_{\leftside(w) \cup w}; \alpha_{(w, k)})\Phi_{w \to k}^{\dagger}(\alpha_{(w, k)}; y_{\leftside(w) \cup w}).
    \]
    Then, for a generic tensor \(f \colon [n_{1}] \times \ldots \times [n_{d}] \to \bR\), one can define a projection operator \(P_{w \to k}\) as follows
    \[
    (P_{w \to k}f)(x_{1}, \ldots, x_{d}) = \sum_{y_{\leftside(w) \cup w}}Q_{w \to k}(x_{\leftside(w) \cup w}, y_{\leftside(w) \cup w})f(y_{\leftside(w) \cup w}, x_{\rightside(w)}).
    \]
    
    By commutativity of the sum operations involved, one has
    \begin{align*}
            \Phi_{\child(k) \to k}\Phi_{\child(k) \to k}^\dagger f &= \sum_{y_{\leftside(k)}}\left(\prod_{w \in \child(k)}{Q_{w \to k}(x_{\leftside(w) \cup w}; y_{\leftside(w) \cup w})}\right)f(x_{1}, \ldots, x_{d})\\
            &= \sum_{\substack{y_{\leftside(w) \cup w} \\ w \in \child(k)}}\left(\prod_{w \in \child(k)}{Q_{w \to k}(x_{\leftside(w) \cup w}; y_{\leftside(w) \cup w})}\right)f(x_{1}, \ldots, x_{d})
            \\
            &= \left(\prod_{w \in \child(k)}P_{w \to k}\right)f
    \end{align*}
    Thus, \eqref{eq: G_k equation wts} holds if one can show that \(P_{w \to k}p = p\) for any \(w \in \child(k)\), but this fact is straightforward:
    \[
    P_{w \to k}p
    =\Phi_{w \to k}\Phi_{w \to k}^\dagger p(x_{\leftside(w)\cup w}; x_{\rightside(w)}) = \Phi_{w \to k}\Phi_{w \to k}^\dagger \Phi_{w \to k}\Psi_{w \to k} =\Phi_{w \to k}\Psi_{w \to k} = p,
    \]
    and thus \eqref{eq:ttn-determining} exactly holds for the constructed $\{G_i\}_{i =1}^{d}$.

    Lastly, we prove that the solution $\{G_i\}_{i =1}^{d}$ forms a TTNS tensor core of \(p\). To show this result, it will be much more convenient to use the notion of subgraph TTNS function in Definition \ref{defn: TTNS on subgraphs}. We remark that the construction in Definition \ref{defn: TTNS on subgraphs} is only arithmetic and has no dependency on this theorem. For every node \(k \in [d]\), define a subset \(\mathcal{S}_{k}  :=  \leftside(k) \cup \{k\}\). Then, for non-root \(k\), we prove that \(\Phi_{k \to \parent(k)}\) is the subgraph TTNS function over \(\{G_{i}\}_{i = 1}^{d}\) and \(T_{\mathcal{S}_{k}}\), i.e. one wishes to show
    \begin{equation}
    \label{eqn: thm existence induction hypothesis}
            \Phi_{k \to \parent(k)}(x_{\leftside(k) \cup k}, \alpha_{k \to \parent(k)}) = \sum_{\substack{\alpha_{e} \\ e \not = (k, \parent(k))}} \prod_{i \in \leftside(k) \cup k} G_{i}\left(x_{i}, \alpha_{(i, \neighbor(i))}\right).
    \end{equation}
    
    We prove \eqref{eqn: thm existence induction hypothesis} by mathematical induction. Notice that \eqref{eq:ttn-determining} proves \eqref{eqn: thm existence induction hypothesis} when \(k\) is leaf node. Then, suppose that \(k\) is non-leaf and suppose by mathematical induction that \(\Phi_{w \to k}\) satisfies \eqref{eqn: thm existence induction hypothesis} for all \(w \in \child(k)\). Then, one can rewrite \eqref{eq:ttn-determining} by plugging in \(\Phi_{\child(k) \to k}\) by the form of each \(\Phi_{w \to k}\) according to \eqref{eqn: thm existence induction hypothesis}, and the obtained equation is exactly \eqref{eqn: thm existence induction hypothesis} for \(\Phi_{k \to \parent(k)}\). By induction over nodes by topological order, \eqref{eqn: thm existence induction hypothesis} then holds for every non-root \(k\). 
    
    By the same logic, now consider \eqref{eq:ttn-determining} when \(k\) is root, and one plugs in \(\Phi_{\child(k) \to k}\) by the form of each \(\Phi_{w \to k}\) according to \eqref{eqn: thm existence induction hypothesis}, whereby the obtained equation is exactly \eqref{eq:ttn-contraction} in Definition \ref{def: TTNS short def}, and thus $\{G_i\}_{i =1}^{d}$ does form a TTNS tensor core of \(p\).
\end{proof}

\section{Proof of Theorem \ref{thm:ttns-algorithm-recovery}}\label{appendix: proof of basic ttns recovery}

\begin{proof} (of Theorem \ref{thm:ttns-algorithm-recovery})
    For any non-root $k$, note that $Z^{\star}_k$ is assumed to be of rank \(r_{(k, \parent(k))}\) by (ii) in Condition \ref{cond: Condition for sketch functions}. Let $Q^{\star}_k$ be as in \eqref{eqn: def of Q_k^T star}. In other words, $Q^{\star}_k(\gamma_{(k, \parent(k))}; \alpha_{(k, \parent(k))})$ is formed by the rank-\(r_{(k, \parent(k))}\) SVD decomposition of $Z^{\star}_k$ in the \textsc{SystemForming} step of Algorithm \ref{alg:1}. Thus $Q^{\star}_k(\gamma_{(k, \parent(k))}; \alpha_{(k, \parent(k))})$ is of rank \(r_{(k, \parent(k))}\), which means it has full column rank. We define
    \begin{equation}
    \label{eqn: temporary def of unfolding matrix}
        \Phi^{\star}_{k \to \parent(k)}(x_{\leftside(k) \cup k}, \alpha_{(k, \parent(k))})  :=  \sum_{\gamma_{(k, \parent(k))}} \bar{\Phi}^{\star}_k(x_{\leftside(k) \cup k}, \gamma_{(k, \parent(k))}) Q^{\star}_k(\gamma_{(k, \parent(k))}, \alpha_{(k, \parent(k))}).
    \end{equation}
    
    Due to (i) in Condition \ref{cond: recurisve sketching} and \(Q_{k}\) having full rank, one can conclude that $\Phi^{\star}_{k \to \parent(k)}(x_{\leftside(k) \cup k}; \alpha_{(k, \parent(k))})$ and $\Phi^{\Delta}_{k \to \parent(k)}(x_{\leftside(k) \cup k}; \alpha_{(k, \parent(k))})$ have the same column space. Thus, there exists $\Psi^{\star}_{(w, k)}$'s such that $\{\Phi^{\star}_{(w, k)}, \Psi^{\star}_{(w, k)}\}_{(w, k) \in E}$ forms a collection of the low-rank decomposition of \(p^{\star}\) in the sense of Condition \ref{cond: TTNS ansatz condition}. We make the following claim, which also justifies \(\star\) upper-index in \eqref{eqn: temporary def of unfolding matrix}:
    \begin{center}
        Claim: $\{\Phi^{\star}_{(w, k)}\}_{(w, k) \in E}$ as defined in \eqref{eqn: temporary def of unfolding matrix} satisfies Condition \ref{cond: ttns gauge choice}
    \end{center}

    The proof of the claim is somewhat technical and we will defer the proof after stating how it proves this theorem.
    
    Assume the claim is correct and $\{\Phi^{\star}_{(w, k)}\}_{(w, k) \in E}$ satisfies Condition \ref{cond: ttns gauge choice}. As a consequence, if one defines \(\{A^{\star}_i, B^{\star}_i\}\) by
    \[\{A^{\star}_i, B^{\star}_i\}_{i = 1}^{d} \leftarrow \textsc{SystemForming}(\{Z^{\star}_{w \to k}\}_{w \to k \in E}, \{Z^{\star}_i\}_{i = 1}^{d}),\]
    then one can alternatively define \(\{A^{\star}_i, B^{\star}_i\}_{i = 1}^{d}\) by \eqref{eqn: def of A^ast and B^ast} with \[\{\Phi_{(w, k)}^\Delta\}_{(w, k) \in E} \gets \{\Phi_{(w, k)}^\star\}_{(w, k) \in E}.\]
    Thus, with the alternative definition in \eqref{eqn: def of A^ast and B^ast}, it follows that \eqref{eq:alg-CDEs noiseless} is a (possibly over-determined) linear system formed by a linear projection of the linear system in \eqref{eq:ttn-determining}, where chosen gauge is $\{\Phi^{\star}_{(w, k)}\}_{(w,k) \in E}$. 
    
    Due to Theorem \ref{thm:ttns-existence}, \eqref{eq:ttn-determining} is an over-determined linear system with a unique and exact solution. 
    Theorem \ref{thm:ttns-existence} guarantees an exact solution $\{G^{\star}_i\}_{i = 1}^{d}$ to \eqref{eq:ttn-determining}, which is then necessarily an exact solution to \eqref{eq:alg-CDEs noiseless}. If \eqref{eq:alg-CDEs noiseless} satisfies uniqueness of solution, then the solution to \eqref{eq:alg-CDEs noiseless} is a solution to \eqref{eq:ttn-determining}, which by Theorem \ref{thm:ttns-existence} forms a TTNS tensor core of $p^\star$. Therefore, it suffices to check uniqueness. Uniqueness of solution to \eqref{eq:alg-CDEs noiseless} when \(k\) is leaf is trivial. When \(k\) is non-leaf, note that one can apply (iii) in Condition \ref{cond: Condition for sketch functions} with \(\{\Phi_{(w, k)}^\star\}_{(w, k) \in E}\) as the chosen gauge, which guarantees that $A^{\star}_k(\beta_{(k, \child(k))}, \alpha_{(k, \child(k))})$ has the full column rank for every non-leaf $k$. In other words, one is guaranteed uniqueness of the solution to \eqref{eq:alg-CDEs noiseless}, as desired. For the assertion on the consistency of \(\hat G_{k}\), note that \(\lim_{N \to \infty} \hat G_{k} = G^{\star}_{k}\) follows from the fact that \(\lim_{N \to \infty} \hat A_{k} = A^{\star}_{k}\) and \(\lim_{N \to \infty} \hat B_{k} = B^{\star}_{k}\).

    We now prove that $\{\Phi^{\star}_{(w, k)}\}_{(w, k) \in E}$ satisfies Condition \ref{cond: ttns gauge choice}. For a clear exposition, we adopt the unfolding 3-tensor structure developed in Section \ref{sec: 3-tensor preliminrary}-\ref{sec: local error to global}. We remark that the 3-tensor construction is only arithmetic and does not depend on validity of this theorem. 
    
    For \(Z^{\star}_k\), we reshape it as \(Z^{\star}_{k}(\beta_{(k,\child(k))}; x_{k}; \gamma_{(k, \parent(k))})\). For \(U^{\star}_{k}\), we reshape it as \(U^{\star}_{k}(\beta_{(k,\child(k))}; x_{k}; \alpha_{(k, \parent(k))})\). For \(Q^{\star}_{k}\), we reshape it as \(Q^{\star}_{k}(\gamma_{(w, k)}; 1;\alpha_{(w, k)} )\).  
    For \(S_{k}, T_{k}\), we reshape as \(S_{k}(\beta_{(k,\child(k))};1; x_{\leftside(k)})\) and \( T_{k}(x_{\rightside(k)}; 1; \gamma_{(k, \parent(k))})\). For \(\Phi^{\star}_{k \to \parent(k)}\), we reshape it as \(\Phi^{\star}_{k \to \parent(k)}(x_{\leftside(k)};  x_{k}; \alpha_{(k, \parent(k))})\). For \(\bar{\Phi}^{\star}_k(x_{\leftside(k) \cup k}, \gamma_{(k, \parent(k))})\), we reshape it as \(\bar{\Phi}^{\star}_k(x_{\leftside(k)}; x_{k}; \gamma_{(k, \parent(k))})\). For \(p^{\star}\), we reshape it as \(p^{\star}(x_{\leftside(k)}; x_{k}; x_{\rightside(k)})\).
    
    Then, by the construction of \(Q^{\star}_{k}\) in the \textsc{SystemForming} step of Algorithm \ref{alg:1}, one has \(U_{k}^{\star}  = Z_k^{\star} \circ Q^{\star}_{k}\). By \eqref{eqn: temporary def of unfolding matrix}, it follows \(\Phi^{\star}_{k \to \parent(k)} :=  \bar{\Phi}^{\star}_k \circ Q^{\star}_{k}\). Condition \ref{cond: ttns gauge choice} is satisfied if \(
    U_{k}^{\star}  = S_{k} \circ \Phi^{\star}_{k \to \parent(k)}.\) With such a choice of unfolding 3-tensor, one obtains a simple proof as follows
    \[
    U_{k}^\star = Z_k^{\star} \circ Q^{\star}_{k} = S_k \circ p^\star \circ T_k \circ Q^{\star}_{k} = S_{k} \circ \bar{\Phi}^{\star}_k \circ Q^{\star}_{k} = S_{k} \circ \Phi^{\star}_{k \to \parent(k)},
    \]
    where the first equality comes from \(Z_k^{\star} =S_k \circ p^\star \circ T_k\), the second equality comes from \(\bar{\Phi}^{\star}_k =  p^\star \circ T_k\), and the third equality comes from \(\Phi^{\star}_{k \to \parent(k)}=  \bar{\Phi}^{\star}_k \circ Q^{\star}_{k}\). Thus the claim holds and we are done.

\end{proof}

\section{Proof of Lemma \ref{lem: ttns-markov-exact}}

\begin{lemma}
\label{lem:markov-col/row-spaces}
    Suppose $p$ satisfies the Markov property given a rooted tree $([d], E)$.
    For any subsets $\mathcal{S}_1 \subset \leftside(k) \cup k$ and $\mathcal{S}_2 \subset \rightside(k)$,
    \begin{itemize}
        \item[(i)] $\mathcal{M}_{\mathcal{S}_1 \cup \mathcal{S}_2} p(x_{\mathcal{S}_1}; x_{\mathcal{S}_2})$ and $\mathcal{M}_{\mathcal{S}_1 \cup \parent(k)} p(x_{\mathcal{S}_1} ; x_{\parent(k)})$ have the same column space if $\parent(k) \in \mathcal{S}_2$,
        \item[(ii)] $\mathcal{M}_{\mathcal{S}_1 \cup \mathcal{S}_2} p(x_{\mathcal{S}_1} ; x_{\mathcal{S}_2})$ and $\mathcal{M}_{k \cup \mathcal{S}_2} p(x_k; x_{\mathcal{S}_2})$ have the same row space if $k \in \mathcal{S}_1$.
    \end{itemize}
\end{lemma}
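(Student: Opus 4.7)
The plan is to leverage the Markov property on the tree: removing any single vertex $v$ disconnects $T$ into components whose corresponding variable groups become conditionally independent given $x_v$. For (i) the separating vertex will be $\parent(k)$, and for (ii) it will be $k$ itself. Each conditional independence statement, once translated into the appropriate unfolding matrix, provides a factorization that yields one direction of the column/row space inclusion; the reverse inclusion follows immediately by marginalizing the extra variables away.

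For (i), assume $\parent(k) \in \mathcal{S}_2$ and decompose $\mathcal{S}_2 = \mathcal{S}_2' \cup \{\parent(k)\}$ with $\mathcal{S}_2' \subset \rightside(k) \setminus \{\parent(k)\}$. Since $\parent(k)$ separates $\mathcal{S}_1 \subset \leftside(k) \cup k$ from $\mathcal{S}_2'$ in $T$, the Markov property gives the unconditional identity
\[
p(x_{\mathcal{S}_1}, x_{\mathcal{S}_2'}, x_{\parent(k)}) \, p(x_{\parent(k)}) = p(x_{\mathcal{S}_1}, x_{\parent(k)}) \, p(x_{\mathcal{S}_2'}, x_{\parent(k)}).
\]
Viewed as a matrix equation with rows indexed by $x_{\mathcal{S}_1}$ and columns by $(x_{\mathcal{S}_2'}, x_{\parent(k)})$, this says every column of $\mathcal{M}_{\mathcal{S}_1 \cup \mathcal{S}_2} p$ is a scalar multiple of the column of $\mathcal{M}_{\mathcal{S}_1 \cup \parent(k)} p$ at the same value of $x_{\parent(k)}$ (with scaling factor $p(x_{\mathcal{S}_2'}, x_{\parent(k)}) / p(x_{\parent(k)})$ whenever the denominator is positive, and both sides vanishing otherwise). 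Conversely, summing $\mathcal{M}_{\mathcal{S}_1 \cup \mathcal{S}_2} p$ over $x_{\mathcal{S}_2'}$ recovers $\mathcal{M}_{\mathcal{S}_1 \cup \parent(k)} p$ column-by-column, so the reverse inclusion holds and the two column spaces coincide.

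For (ii), assume $k \in \mathcal{S}_1$ and decompose $\mathcal{S}_1 = \mathcal{S}_1' \cup \{k\}$ with $\mathcal{S}_1' \subset \leftside(k)$. Now $k$ separates $\mathcal{S}_1'$ from $\mathcal{S}_2 \subset \rightside(k)$, so the Markov property yields
\[
p(x_{\mathcal{S}_1'}, x_k, x_{\mathcal{S}_2}) \, p(x_k) = p(x_{\mathcal{S}_1'}, x_k) \, p(x_k, x_{\mathcal{S}_2}).
\]
Viewed as a matrix equation with rows indexed by $(x_{\mathcal{S}_1'}, x_k)$ and columns by $x_{\mathcal{S}_2}$, each row of $\mathcal{M}_{\mathcal{S}_1 \cup \mathcal{S}_2} p$ is a scalar multiple of the row of $\mathcal{M}_{k \cup \mathcal{S}_2} p$ at the same value of $x_k$. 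Marginalizing over $x_{\mathcal{S}_1'}$ recovers $\mathcal{M}_{k \cup \mathcal{S}_2} p$ from row combinations of $\mathcal{M}_{\mathcal{S}_1 \cup \mathcal{S}_2} p$, giving the reverse inclusion.

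The main obstacle is less mathematical than notational: one must carefully verify that the asserted separations in $T$ are valid (which is immediate from the definitions of $\leftside(k)$ and $\rightside(k)$ together with $\parent(k) \in \rightside(k)$) and then track which index group plays the role of rows versus columns in each unfolding. The minor technicality of conditioning on zero-probability events is sidestepped throughout by working with the multiplicative form of the Markov property written above, which holds unconditionally.
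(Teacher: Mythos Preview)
Your proof is correct and follows essentially the same route as the paper: both arguments use the separating vertex ($\parent(k)$ for (i), $k$ for (ii)) to factor the joint marginal and read off the column/row space containment, with the reverse inclusion coming from marginalization. Your use of the multiplicative form of conditional independence is a slightly cleaner way to handle the zero-probability case than the paper's conditional-probability notation, but the underlying idea is identical.
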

\begin{proof}
Define a conditional probability tensor as follows:
\begin{equation*}
    \mathcal{M}_{\mathcal{S}_1 | \mathcal{S}_2} p(x_{\mathcal{S}_1}, x_{\mathcal{S}_2})  :=  \mathbb{P}_{X \sim p}\left[X_{\mathcal{S}_1} = x_{\mathcal{S}_1} | X_{\mathcal{S}_2} = x_{\mathcal{S}_2}\right].
\end{equation*}

Due to the conditional independence property for graphical models, one can write
\begin{align*}
    \mathcal{M}_{\mathcal{S}_1 \cup \mathcal{S}_2} p(x_{\mathcal{S}_1}, x_{\mathcal{S}_2}) &=
    \mathcal{M}_{\mathcal{S}_1 | \parent(k)} p(x_{\mathcal{S}_1}, x_{\parent(k)})
    \mathcal{M}_{\parent(k)} p(x_{\parent(k)})
    \mathcal{M}_{\mathcal{S}_2 | \parent(k)} p(x_{\mathcal{S}_2 \backslash \parent(k)}, x_{\parent(k)}) \\
    &=
    \mathcal{M}_{\mathcal{S}_1 \cup \parent(k)} p(x_{\mathcal{S}_1}, x_{\parent(k)})
    \mathcal{M}_{\mathcal{S}_2 | \parent(k)} p(x_{\mathcal{S}_2 \backslash \parent(k)}, x_{\parent(k)})
\end{align*}

Thus, the column space of $\mathcal{M}_{\mathcal{S}_1 \cup \mathcal{S}_2} p(x_{\mathcal{S}_1}; x_{\mathcal{S}_2})$ depends solely on $\mathcal{M}_{\mathcal{S}_1 \cup \parent(k)}(x_{\mathcal{S}_1}; x_{\parent(k)})$. Therefore, (i) holds. 

Similarly, 
\begin{equation*}
    \mathcal{M}_{\mathcal{S}_1 \cup \mathcal{S}_2} p(x_{\mathcal{S}_1}, x_{\mathcal{S}_2}) =
    \mathcal{M}_{\mathcal{S}_1 | k} p(x_{\mathcal{S}_1 \backslash k} , x_{k})
    \mathcal{M}_{k \cup \mathcal{S}_2 } p(x_{k}, x_{\mathcal{S}_2}),
\end{equation*}
which shows that the row space of $\mathcal{M}_{\mathcal{S}_1 \cup \mathcal{S}_2} p(x_{\mathcal{S}_1}; x_{\mathcal{S}_2})$ depends solely on $\mathcal{M}_{k \cup \mathcal{S}_2 } p(x_k ;x_{\mathcal{S}_2})$. Therefore, (ii) holds.
\end{proof}

\begin{proof} (of Lemma \ref{lem: ttns-markov-exact})
We will verify that Condition \ref{cond: Condition for sketch functions} holds. The Markov sketch function is quite special, and we often refer to a concept of natural identification. To make this concept rigorous, if two matrices \(A(x;y)\) and \(A'(z;w)\) are said to have a \emph{natural identification}, it then means that \(A(x;y) = A'(z;w)\) entry-wise as matrices. In particular, if one has a natural identification \(A(x;y) = A'(z;w)\), then \(A\) and \(A'\) share column space, row space, and rank.

By the property of right sketch function in Markov sketch function, one has the natural identification $\bar{\Phi}^\star_k(x_{\leftside(k) \cup k}; \gamma_{(k, \parent(k))}) = \mathcal{M}_{\leftside(k) \cup k \cup \parent(k)}p^\star(x_{\leftside(k) \cup k}; x_{\parent(k)})$. Lemma \ref{lem:markov-col/row-spaces} then shows that the column space of $\bar{\Phi}^\star_k(x_{\leftside(k) \cup k}; \gamma_{(k, \parent(k))})$ equals to that of $p^\star(x_{\leftside(k) \cup k}; x_{\rightside(k)})$. By Condition \ref{cond: TTNS ansatz condition}, the column space of $p^\star(x_{\leftside(k) \cup k}; x_{\rightside(k)})$ equals to that of any $\Phi^{\Delta}_{(w, k)}(x_{\leftside(k) \cup k}; \alpha_{(k, \parent(k))})$, and so (i) holds.

Similarly, due to Markov sketch function, one has the natural identification $$Z_{k}^\star(\beta_{(k,\child(k))}, x_{k}; \gamma_{(k, \parent(k))}) = \mathcal{M}_{\child(k) \cup k \cup \parent(k)}p(x_{\child(k) \cup k}; x_{\parent(k)}).$$

By Lemma \ref{lem:markov-col/row-spaces} and the natural identification of \(Z_{k}^\star\), for any every non-leaf and non-root $k$, it follows that $Z_{k}^\star$ has the same row space as that of $$\bar{\Phi}^\star_k(x_{\leftside(k) \cup k}; \gamma_{(k, \parent(k))}) = \mathcal{M}_{\leftside(k) \cup k \cup \parent(k)}p^\star(x_{\leftside(k) \cup k}; x_{\parent(k)}).$$

Hence, the rank of \(Z_{k}^\star\) equals to the rank of $\bar{\Phi}^\star_k$. By Lemma \ref{lem:markov-col/row-spaces}, the column space of $\bar{\Phi}^\star_k$ equals to the column space of $p^\star(x_{\leftside(k) \cup k}; x_{\rightside(k)})$. Thus, the rank of \(Z_{k}^\star\) equals to \(r_{(k, \parent(k))}\) and (ii) holds. 

Because (i) and (ii) hold, the proof of Theorem \ref{thm:ttns-algorithm-recovery} actually shows that there exists a gauge $\{\Phi^{\star}_{(w, k)}\}_{(w, k) \in E}$ which satisfies Condition \ref{cond: ttns gauge choice}. To verify (iii), it suffices to check (iii) for the gauge $\{\Phi^{\star}_{(w, k)}\}_{(w, k) \in E}$, because \(A^{\star}_{k}\) having full column rank leads to any \(A^{\Delta}_k\) having full column rank. 

Moreover, it suffices to show that each \(A^{\star}_{w \to k}(\beta_{(w, k)}; \alpha_{(w, k)})\)
has full column rank of $r_{w \to k}$. If this holds, then it follows that \(A^{\star}_k = \bigotimes_{w \in \child(k)} A^{\star}_{w \to k}\) has full column rank of $\prod_{w \in \child(k)} r_{w \to k}$. 
By the SVD step in \textsc{SystemForming}, recall that the column space of $Q_{w}^\star(\gamma_{(w,k)}; \alpha_{(w,k)})$ is the same as the column space of $\left(Z^\star_{w}\right)^\top(\gamma_{(w, k)}; \beta_{(w,\child(w))}, x_{w})$. By the natural identification of
\[\left(Z^\star_{w}\right)^\top(\gamma_{(w, k)};x_{w},  \beta_{(w,\child(w))}) = \mathcal{M}_{k \cup w \cup \child(w)}p(x_{k}; x_{w \cup \child(w)}),\]
we know that the column space of $Q_{w}^\star(\gamma_{(w,k)}; \alpha_{(w,k)})$ is the same as that of $\mathcal{M}_{k \cup w \cup \child(w)}p(x_{k}; x_{w \cup \child(w)})$. By Lemma \ref{lem:markov-col/row-spaces}, it then follows that the column space of $Q_{w}^\star(\gamma_{(w,k)}; \alpha_{(w,k)})$ coincides with that of $\mathcal{M}_{k \cup w}p(x_k; x_w)$.

Moreover, \(Z^{\star}_{w \to k}\) has the natural identification \(Z^{\star}_{w \to k}(\beta_{(w,k)}; \gamma_{(w, k)}) = \mathcal{M}_{w \cup k}p(x_w; x_k)\), and so the column space of \(Z^{\star}_{w \to k}(\beta_{(w,k)}; \gamma_{(w, k)})\) coincides with that of \( \mathcal{M}_{w \cup k}p(x_w; x_k)\).

By \eqref{eqn: A w to k and phi w to k def}, one has 
\begin{equation*}
    A^{\star}_{w \to k}(\beta_{(w,k)}; \alpha_{(w, k)}) = Z^{\star}_{w \to k}(\beta_{(w,k)}; \gamma_{(w, k)})Q^{\star}_{w}(\gamma_{(w, k)}; \alpha_{(w, k)}),
\end{equation*}
and so the column space of \(A^{\star}_{w \to k}\) coincides with that of \[\mathcal{M}_{w \cup k}p(x_w; x_k)\mathcal{M}_{k \cup w}p(x_k; x_w) = \mathcal{M}_{w \cup k}p(x_w; x_k)\left(\mathcal{M}_{w \cup k}p(x_w; x_k)\right)^\top.\]

Thus, the rank of \(A^{\star}_{w \to k}\) coincides with that of \(\mathcal{M}_{w \cup k}p(x_w; x_k)\left(\mathcal{M}_{w \cup k}p(x_w; x_k)\right)^\top\), which in turn coincides with the rank of \(\mathcal{M}_{w \cup k}p(x_w; x_k)\). By applying Lemma \ref{lem:markov-col/row-spaces}, the rank of \(\mathcal{M}_{w \cup k}p(x_w; x_k)\) equals to \(r_{(w,k)}\), and so (iii) holds.

\end{proof}

\section{Proof of Theorem \ref{thm: perturbative sketching form}}
After applying the left and right sketching, one has the following form on \(Z^\star_{k}\):
\begin{equation}\label{eq: tildephi in perturbative sketching}
      Z^\star_{k}(x_{k}, \beta_{(k, \neighbor(k))}) = \sum_{\substack{\beta_{e}\\ k \not \in e}} \sum_{\substack{x_{i} \\ i \not = k}}p^\star(x_{1}, \ldots, x_{d})\prod_{i \not = k}s_{i}(x_{i}, \beta_{(i, \neighbor(i))}).
\end{equation}

Let \(\mathcal{S}_{k}  :=  [d] - \{k\}\), and let \(T_{\mathcal{S}_{k}}\) be the subgraph of \(T\) with vertex set being \(\mathcal{S}_{k}\). Using the definition of subgraph TTNS function in Definition \ref{defn: TTNS on subgraphs}, define a tensor \[H_{k} \colon \prod_{i \in [d], i \not = k}[n_{i}] \times \prod_{w \in \neighbor(k)}[\beta_{(w, k)}] \to \bR\] as the subgraph TTNS function over \(\{s_{i}\}_{i \not = k}\) and \(T_{\mathcal{S}_{k}}\), i.e.
\begin{equation}
\label{eqn: definition of perturbative projection operator}
    H_{k}(x_{[d] - \{k\}}, \beta_{(k, \neighbor(k))}) = \sum_{\substack{\alpha_{e} \\ k \not \in e}} \prod_{i \not = k} s_{i}\left(x_{i}, \beta_{(i, \neighbor(i))}\right).
\end{equation}

Then \eqref{eq: tildephi in perturbative sketching} is equivalent to the following equation:
\begin{equation}\label{eq: Z_k^star in perturbative sketching, alterante form}
      Z^\star_{k}(x_{k}, \beta_{(k, \neighbor(k))}) =  \sum_{\substack{x_{w} \\ w \not = k}}p^\star(x_{1}, \ldots, x_{d})H_{k}(x_{[d] - \{k\}}, \beta_{(k, \neighbor(k))}).
\end{equation}

From \eqref{eqn: definition of perturbative projection operator}, one sees that \(H_{k}\) is multi-linear in \(\{s_{i}\}_{i \not = k}\). We thus can apply the binomial theorem to derive a structural form on \(H_{k}\) as a sum of secondary terms. To do so, let \(\mathcal{S}\) be an arbitrary subset of \(\mathcal{S}_{k}\), and define a tensor \[H_{k; \mathcal{S}} \colon \prod_{i \in [d], i \not = k}[n_{i}] \times \prod_{w \in \neighbor(k)}[\beta_{(w, k)}] \to \bR\] as the subgraph TTNS function over \(T_{\mathcal{S}_{k}}\) and \(\{\Delta_{i}\}_{i \in \mathcal{S}_{k}} \cup \{O_{j}\}_{j \in \mathcal{S}_{k} - \mathcal{S}}\), i.e.
\begin{equation*}
    H_{k; \mathcal{S}}(x_{[d] - \{k\}}, \beta_{(k, \neighbor(k))}) = \sum_{\substack{\alpha_{e} \\ k \not \in e}} \prod_{i \in \mathcal{S}} \Delta_{i}\left(x_{i}, \beta_{(i, \neighbor(i))}\right)\prod_{j \in \mathcal{S}_{k} - \mathcal{S}}  O_{j}\left(x_{j}, \beta_{(j, \neighbor(j))}\right).
\end{equation*}

We now use the fact that \(O_{j}\left(x_{j}, \beta_{(j, \neighbor(j))}\right) = 1\) in Condition \ref{cond: perturbative sketching cores}, and so 
\begin{equation}
\label{eqn: part of perturbative projection operator}
    H_{k; \mathcal{S}}(x_{[d] - \{k\}}, \beta_{(k, \neighbor(k))}) = \sum_{\substack{\alpha_{e} \\ k \not \in e}} \prod_{i \in \mathcal{S}} \Delta_{i}\left(x_{i}, \beta_{(i, \neighbor(i))}\right) = \sum_{\beta_{e}, k \not \in e}\Delta_{\mathcal{S}}(x_{\mathcal{S}}, \beta_{\partial \mathcal{S}}),
\end{equation}
where the second equality follows from the Definition of \(\Delta_{\mathcal{S}}\) in \eqref{eqn: def of Delta S}.

By applying the binomial theorem over the fact that \(s_{i} = \epsilon\Delta_{i} + O_{i}\), one sees that \(H_{k}\) is a sum of \(2^{d - 1}\) terms, each of which formed by corresponding to one \(H_{k; \mathcal{S}}\), i.e.
\begin{equation}\label{eq: power series for perturbative projection}
    H_{k}(x_{[d] - \{k\}}, \beta_{(k, \neighbor(k))}) = \sum_{l = 0}^{d-1} \epsilon^{l} \sum_{\mathcal{S} \subset [d] - \{k\}, |\mathcal{S}| = l}H_{k; \mathcal{S}}(x_{[d] - \{k\}}, \beta_{(k, \neighbor(k))}).
\end{equation}

Define \(Z^{\star}_{k; \mathcal{S}}\) as the following tensor:
\begin{equation}
    \label{eqn: alternate def of Z^star_S}
    Z^\star_{k}(x_{k}, \beta_{(k, \neighbor(k))})  :=   \sum_{\substack{x_{w} \\ w \not = k}}p^\star(x_{1}, \ldots, x_{d})H_{k; \mathcal{S}}(x_{[d] - \{k\}}, \beta_{(k, \neighbor(k))}).
\end{equation}

The proof that \(Z^{\star}_{k; \mathcal{S}}\) satisfies \eqref{eq: structural form of  Z star kS} is a simple result of exchanging summation order:
\begin{align*}
     &\sum_{\substack{x_{w} \\ w \not = k}}p^\star(x_{1}, \ldots, x_{d})H_{k; \mathcal{S}}(x_{[d] - \{k\}}, \beta_{(k, \neighbor(k))})\\ 
    = &\sum_{\substack{x_{w} \\ w \in \mathcal{S}}} \left(\sum_{\substack{x_{w} \\ w \in  \mathcal{S}_{k} - \mathcal{S}}}p^{\star}(x_{1}, \ldots, x_{d})\right)\left(\sum_{\beta_{e}, k \not \in e}\Delta_{\mathcal{S}}(x_{\mathcal{S}}, \beta_{\partial \mathcal{S}})\right)\\
    = &\sum_{\beta_{e}, k \not \in e}\left(\sum_{x_{\mathcal{S}}}\mathcal{M}_{\mathcal{S} \cup \{k\}}  p^{\star}(x_{k},x_{\mathcal{S}})\Delta_{\mathcal{S}}(x_{\mathcal{S}}, \beta_{\partial \mathcal{S}})\right).
\end{align*}

Due to the linear relationship between \(H_k\) and \(Z^{\star}_{k}\) in \eqref{eq: Z_k^star in perturbative sketching, alterante form}, it follows that the structural form of \(H_{k}\) in \eqref{eq: power series for perturbative projection} leads to the structural form for \(Z^{\star}_{k}\) in \eqref{eq: structural form}, as desired.

\end{document}